\newtheorem{theorem}{Theorem}[section] 
\newtheorem{lemma}[theorem]{Lemma}    
\newtheorem{definition}[theorem]{Definition} 
\newtheorem{proposition}[theorem]{Proposition}
\newcommand\blfootnote[1]{%
  \begingroup
  \renewcommand\thefootnote{}\footnote{#1}%
  \addtocounter{footnote}{-1}%
  \endgroup
}
\title{OCN: Effectively Utilizing Higher-Order Common Neighbors for Better Link Prediction}
\author{
Juntong Wang$^{1,2}$ \hspace{2mm} Xiyuan Wang$^{1,2}$ \hspace{2mm} Muhan Zhang$^{1}$\footnote{Correspondence to Muhan Zhang} \\
$^1$Institute for Artificial Intelligence, Peking University \\
$^2$School of Intelligence Science and Technology, Peking University \\
\texttt{jtwang25@stu.pku.edu.cn}, \texttt{\{wangxiyuan,muhan\}@pku.edu.cn}
}
\begin{document}

\maketitle

\begin{abstract}

Common Neighbors (CNs) and their higher-order variants are important pairwise features widely used in state-of-the-art link prediction methods. However, existing methods often struggle with the repetition across different orders of CNs and fail to fully leverage their potential. We identify that these limitations stem from two key issues: redundancy and over-smoothing in high-order common neighbors. To address these challenges, we design orthogonalization to eliminate redundancy between different-order CNs and normalization to mitigate over-smoothing. By combining these two techniques, we propose Orthogonal Common Neighbor (OCN), a novel approach that significantly outperforms the strongest baselines by an average of 7.7\% on popular link prediction benchmarks. A thorough theoretical analysis is provided to support our method. Ablation studies also verify the effectiveness of our orthogonalization and normalization techniques. Code is available at: \href{https://github.com/qingpingmo/OCN}{\textcolor{magenta}{\faGithub\ https://github.com/qingpingmo/OCN}}
\end{abstract}

\section{Introduction}

The application of link prediction spans numerous fields. For example, it can be used to forecast website hyperlinks~\citep{zhu2002using}. In bioinformatics, it plays a critical role in analyzing protein-protein interactions (PPIs)~\citep{airoldi2008mixed}. Similarly, in e-commerce, link prediction is a core component in developing recommendation systems~\citep{huang2005link, lu2012recommender}. Currently, the most popular link prediction models are based on Graph Neural Networks (GNNs). The first GNN for link prediction was the Graph AutoEncoder (GAE)~\citep{kipf2016variational}. It uses the inner product of the two target nodes' representations, produced by a Message Passing Neural Network (MPNN)~\citep{gilmer2017neural}, as the logits for the probability that a link exists between the two nodes. Despite its success on some citation graphs, such as Cora~\citep{sen2008Collective}, GAE computes the representations of two nodes separately, thus failing to capture structural relationships between them, such as the number of common neighbors (i.e., nodes connected to both target nodes by an edge), which is a crucial heuristic in link prediction.\blfootnote{$^*$Correspondence to Muhan Zhang}

To address the inability to capture pairwise structural relationships, various methods have been proposed~\citep{zhang2021labeling, yun2021neo, chamberlain2022graph, wang2023neural}. Although these methods differ in their detailed implementations, they all focus on computing the neighborhood overlap between the two target nodes, which includes both common neighbors and \textbf{higher-order common neighbors} (nodes connected to the two target nodes via a walk or path). While higher-order common neighbors provide auxiliary information to common neighbors and improve performance in some cases~\citep{yun2021neo, chamberlain2022graph}, they have not yet been widely adopted and do not always improve performance due to two key problems:

The first problem is \textbf{redundancy}: different-order common neighbors of the same node pair may overlap significantly. A node can be a common neighbor and a higher-order common neighbor of some node pair at the same time, as the path or walk connecting them may not be unique. This overlap makes higher-order CNs less informative when common neighbors are already used.

The second problem is \textbf{over-smoothing}. In the context of node classification problems~\citep{Oono2020Graph}, over-smoothing describes the phenomenon that, as the number of GNN layers increases, all nodes have similar representations because their neighborhoods become more and more similar. Here, over-smoothing means that, as the order of common neighbors increases, a node can become a high-order CN for more and more node pairs simultaneously. When the path/walk length is sufficiently large, the high-order common neighbors of a node pair will encompass the entire graph. At this time, aggregating the features/embeddings of the high-order common neighbors makes every node pair have similar pairwise representations, leading to over-smoothing in the context of pairwise representation learning. 



Both issues hinder the effective utilization of higher-order common neighbors, thereby limiting the learning of complex pairwise structures and preventing state-of-the-art link prediction models from achieving optimal performance. For example, ~\citet{wang2023neural} found that utilizing only first-order common neighbors led to the best performance in their models. To address these two problems, we propose two techniques: \textbf{coefficient orthogonalization} and \textbf{path-based normalization}, respectively.

For \textbf{coefficient orthogonalization}, which solves redundancy, we remove the linear correlation between the coefficients of different-order common neighbors. For instance, given a pair of nodes in a graph with \( n \) nodes, the coefficient indicating whether each node participates in some order of common neighbors of this node pair becomes a vector \( \in \mathbb{R}^n \). We use the \textit{Gram-Schmidt orthogonalization} process to \textit{eliminate the correlation between the coefficient vectors of different-order common neighbors}, so that models can better leverage information from higher-order CNs. The coefficients can be binary (0 or 1) to indicate whether a node is a common neighbor, or functions of node degrees as used in~\citep{yun2021neo}, or the number of walks of a certain length connecting the two target nodes in which the node participates, as used in this work. With coefficient orthogonalization, our model significantly outperforms previous link prediction models using common neighbors. Furthermore, to accelerate the orthogonalization process, we propose a \textit{polynomial trick}, which achieves similar performance to precise orthogonalization while eliminating the extra computational overhead.

For \textbf{path-based normalization}, which addresses over-smoothing, we \textit{divide the coefficient of each node by the number of $k$-hop walks} in which it participates. The intuition behind the normalization is that when a node participates in a large number of walks, it will more frequently appear in the common neighbors of other node pairs, which causes the features of different links to become similar, leading to over-smoothing. Notably, when the path or walk has a length of 1, the path count reduces to the node degree, and our normalized CN \textit{degenerates to a famous link prediction heuristic, Resource Allocation}~\citep{zhou2009predicting}. To theoretically analyze path-based normalization, we use a random graph model and prove that, with the normalization, $k$-hop CNs lead to a strictly decreasing upper bound of the proximity of positive node pairs (real links) with the increasing of $k$, while without the normalization, high-order CNs do not decrease the proximity no matter how large $k$ is used. This result justifies the effectiveness of using \textbf{normalized} high-order CNs, and potentially explains why previous methods using high-order CNs do not always work (due to lack of normalization). In practice, we use a method similar to batch normalization~\citep{ioffe2015batch} to estimate the number of walks efficiently, avoiding the computational overhead of exact counting.

By combining both techniques with previous methods \citep{wang2023neural}, we propose \textbf{Orthogonal Common Neighbor} (\textbf{OCN}) and its variant with an approximated and faster orthogonalization process, Orthogonal Common Neighbor with Polynomial Filters (OCNP). In our experiments, the performance of OCN and OCNP significantly outperforms existing models, achieving state-of-the-art results on several Open Graph Benchmark datasets \citep{hu2020open}. Ablation studies also verify the effectiveness of orthogonalization and normalization. 



\section{Preliminaries}
For an undirected graph $G = (V, E, A, X)$, where $V = \{1, 2, \dots, n\}$ represents the set of $n$ nodes, $E \subseteq V \times V$ denotes the edge set, $X \in \mathbb{R}^{n \times F}$ is a matrix of node features, and $A \in \mathbb{R}^{n \times n}$ is the symmetric adjacency matrix. The entries of the adjacency matrix are defined such that $A_{uv} = 1$ if there is an edge $(u, v) \in E$, and $A_{uv} = 0$ otherwise. This adjacency matrix captures the direct connections between nodes in the graph. The degree of a node $u$, denoted by $d(u, A)$, is defined as the sum of the entries in the corresponding row of the adjacency matrix, i.e.,\(d(u, A) := \sum_{v=1}^{n} A_{uv}\), which represents the number of neighbors of node $u$. 

We further define $A^l$ as the higher-order adjacency matrix, where the entry $A^l_{uv}$ represents the number of walks of length $l$ between nodes $u$ and $v$. Specifically, $A^l$ encapsulates more complex relationships between nodes, extending beyond direct neighbors to capture connections that involve intermediary nodes. The matrix $A^l$ can be computed by raising the adjacency matrix $A$ to the power $l$, where higher powers encode longer walks between nodes. 

The $k$-order neighbor set $N(u, A^l)$ is defined as the set of all nodes that are reachable from node $u$ through a walk of length $l$, i.e., \(N(u, A^l) = \{ v \mid v \in V, A^l_{uv} > 0 \}\).

This set includes all nodes that can be reached from $u$ by traversing $l$-length walks, thereby expanding the notion of proximity beyond direct neighbors. While some methods define high-order neighbors based on paths or shortest paths, we adopt a different approach. Specifically, we do not rely on shortest paths.

In the context of $k$-hop neighbors, the $k$-hop common neighbors of two nodes $u$ and $v$, denoted as $\text{CN}_k(u, v)$, are defined as:
\begin{equation}
    \text{CN}_k(u, v) = \bigcup_{2(k-1) < k_1 + k_2 \leq 2k,\atop  \, k_1, k_2 \leq k} \left( N_{k_1}(u) \cap N_{k_2}(v) \right),
    \label{defNeighborhood}
\end{equation}

where the union is taken over all pairs $k_1$ and $k_2$ such that $2(k-1) < k_1 + k_2 \leq 2k$. Here, $N_{k_1}(u)$ and $N_{k_2}(v)$ denote the sets of nodes reachable from $u$ and $v$ via walks of lengths $k_1$ and $k_2$, respectively. The concept of common neighbors is critical in graph-based models, as it quantifies the overlap between the neighborhood structures of two nodes, which is a useful measure for tasks such as link prediction.

\section{Related Work}

\paragraph{Link Prediction Model}

Link prediction generally employs three main approaches: \emph{Node embeddings}, \emph{Link prediction heuristics}, and \emph{GNNs}. \emph{Node embeddings} aim to find an embedding for nodes such that similar nodes (in the original network) have similar embeddings~\citep{perozzi2014deepwalk, belkin2001laplacian, grover2016node2vec, kazemi2018simple}. \emph{Link prediction heuristics}~\citep{newman2001clustering, barabasi2002evolution, adar2003friends, zhou2009predicting} mainly rely on handcrafted structural features for prediction. In recent years, methods based on \emph{GNNs} have become a research hotspot. SEAL~\citep{zhang2018link} calculates the shortest path distance between nodes \(i\) and \(j\), extracts the \(k\)-hop subgraph, generates augmented features \(X'\), applies MPNN to aggregate node representations, and predicts the link.

\paragraph{Architecture Combining MPNN and SF}

The SF-then-MPNN framework, exemplified by SEAL~\citep{zhang2018link}, enriches the input graph with structural features, which are then passed to MPNN to enhance expressivity. However, this approach requires re-running the MPNN for each target link, leading to lower scalability. In contrast, models such as NeoGNN~\citep{yun2021neo} and BUDDY~\citep{chamberlain2022graph} adopt the SF-and-MPNN framework, where the MPNN takes the original graph as input and runs only once for all target links, thus enhancing scalability. However, this approach sacrifices expressivity, as the structural features are detached from the final node representations. To address this, the MPNN-then-SF framework proposed by NCN~\citep{wang2023neural} significantly improves performance by first running MPNN on the original graph and then employing structural features to guide the pooling of MPNN features, resulting in better expressivity while retaining high scalability.

\section{Orthogonalization}

We observe substantial redundancy between CNs of different orders. By analyzing the correlation coefficients between CNs at various orders, we find that this correlation increases with the order, reaching high levels, as shown in \Cref{heatmap}. This indicates that the current definition of CNs contains significant linear dependencies across different orders. In contrast, higher-order CNs based on shortest path distances (SPDs) are inherently independent, as a node cannot simultaneously belong to the common neighbor sets of different SPDs.\begin{wrapfigure}{r}{0.4\columnwidth}
\vspace{-5pt}  
\centering
\includegraphics[width=\linewidth]{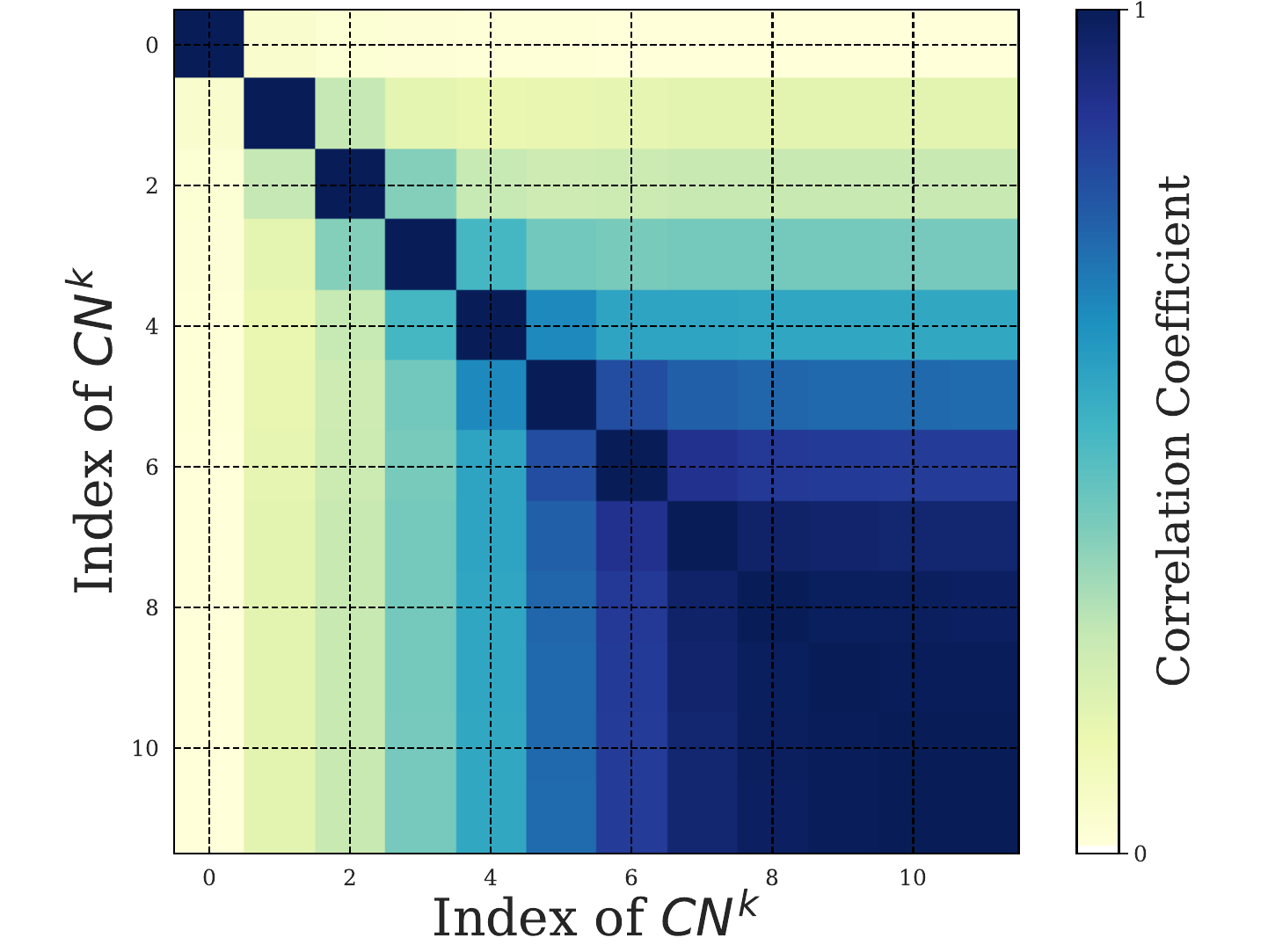}
\vspace{-10pt}  
\caption{The heatmap illustrates the correlation coefficients between different orders of $k$-order Common Neighbors (CNs), highlighting increasing redundancy with higher orders.}
\label{heatmap}
\vspace{-25pt}  
\end{wrapfigure}

The presence of such redundancy negatively impacts model performance by reducing the model’s expressive power. When different orders of CNs become highly correlated, it becomes difficult for the model to effectively differentiate between them, limiting its ability to capture distinct structural relationships. This not only impedes the model's capacity to learn from richer, higher-order interactions, but also diminishes its generalization ability, preventing it from uncovering subtle but important relationships in the graph.


\subsection{Scalable Orthogonalization}\label{subsection0401}

To mitigate the negative effects of redundancy, we propose using the Gram-Schmidt process to transform \(k\)-hop CNs into mutually independent representations. This approach maximizes the information content of each \(CN^k\) matrix by eliminating linear correlations between different orders of CNs.

For \(CN^k\), we define it based on \Cref{defNeighborhood} as follows: \(CN^k(u,v) = \sum_{2(k-1) < k_1 + k_2 \leq 2k, \atop  k_1 \leq k, \ k_2 \leq k} \left(A^{k_1}\right)_u \odot \left(A^{k_2}\right)_v\), where \(CN^k(u,v) \in \mathbb{R}^n\), with its \(i\)-th element representing the number of \(2k\)- and \(2k-1\)-length walks from node \(u\) to node \(v\) that node \(i\) participates in as an intermediate point. The result of orthogonalizing \(CN^k(u,v)\) for \(k = 1, 2, \dots\) is referred to as \(OCN^k\). The matrix \(OCN^k\) can be interpreted as a set of orthogonalized representations for the \(k\)-hop CNs. The detailed definition and explanation of the correlation matrix can be found in Appendix \ref{ApendixB}.

However, orthogonalizing \(CN^k\) over the entire graph poses significant computational challenges. To address this, we draw inspiration from Batch Normalization~\citep{ioffe2015batch}, which maintains running estimates of batch statistics (mean and variance) to normalize activations during training. Similarly, we propose a strategy for orthogonalizing \(CN^k\) across mini-batches. The core idea is to maintain a running inner product: \(\hat{\xi}_t^i \leftarrow (1-\beta_t) \hat{\xi}_{t-1}^i + \beta_t \xi_t^i\), where \(\hat{\xi}_t^i\) is the running inner product maintained by the \(t\)-th mini-batch. This update process, which considers both the previous running inner product and the currently computed inner product, is equivalent to a \emph{Simple Moving Average (SMA)}~\citep{arce2004nonlinear}. A detailed proof is provided in  Appendix \ref{ApendixC}. The complete orthogonalization algorithm is outlined in  \Cref{alg:Orthogonalization}.

\subsection{Orthogonal Common Neighbor with Polynomial Filters}\label{OCNPsection}

While Gram-Schmidt orthogonalization effectively reduces redundancy, it has a relatively high time complexity (see Appendix \ref{ApendixI} for a detailed analysis). To address this, we explore alternative orthonormal bases for orthogonalization. For example, Chebyshev polynomials form an orthonormal basis and can be used as polynomial filters to process CNs.

Inspired by \citet{wang2022powerful}, we propose using an orthonormal basis as polynomial filters to filter common neighbors. This can be expressed as: \( OCN^K \approx \sum_{k=0}^{K} \alpha_{k} CN^k\), where $\alpha_{k}$ represents the coefficient of the $k$-th term in the polynomial filter basis. Although this approach compromises strict orthogonality, it reduces redundancy between $CN^k$ through spectral domain operations. To further reduce time complexity, we take the limiting case $T = 0$ and construct a diagonal matrix, applying consistent filtering operations to each edge signal of $CN^k$ within the same dimension.

By replacing inner product operations with weighted operations, we avoid the extensive computations and iterations required by Gram-Schmidt orthogonalization. This approach effectively adjusts the signal in the frequency domain and removes redundant information.

Building on this, we introduce Orthogonal Common Neighbor with Polynomial Filters (OCNP). We can select any popular orthogonal polynomial bases (e.g., Jacobi, Monomial, Chebyshev, or Bernstein). This operation can be viewed as passing the signal of each edge through a filter defined by the chosen polynomial, thereby adjusting the frequency characteristics of the signal. For a detailed analysis, please refer to Appendix \ref{ApendixJ}.

\section{Normalization}

\begin{wrapfigure}{r}{0.4\columnwidth}  
\vspace{-10pt}  
\centering
\includegraphics[width=\linewidth]{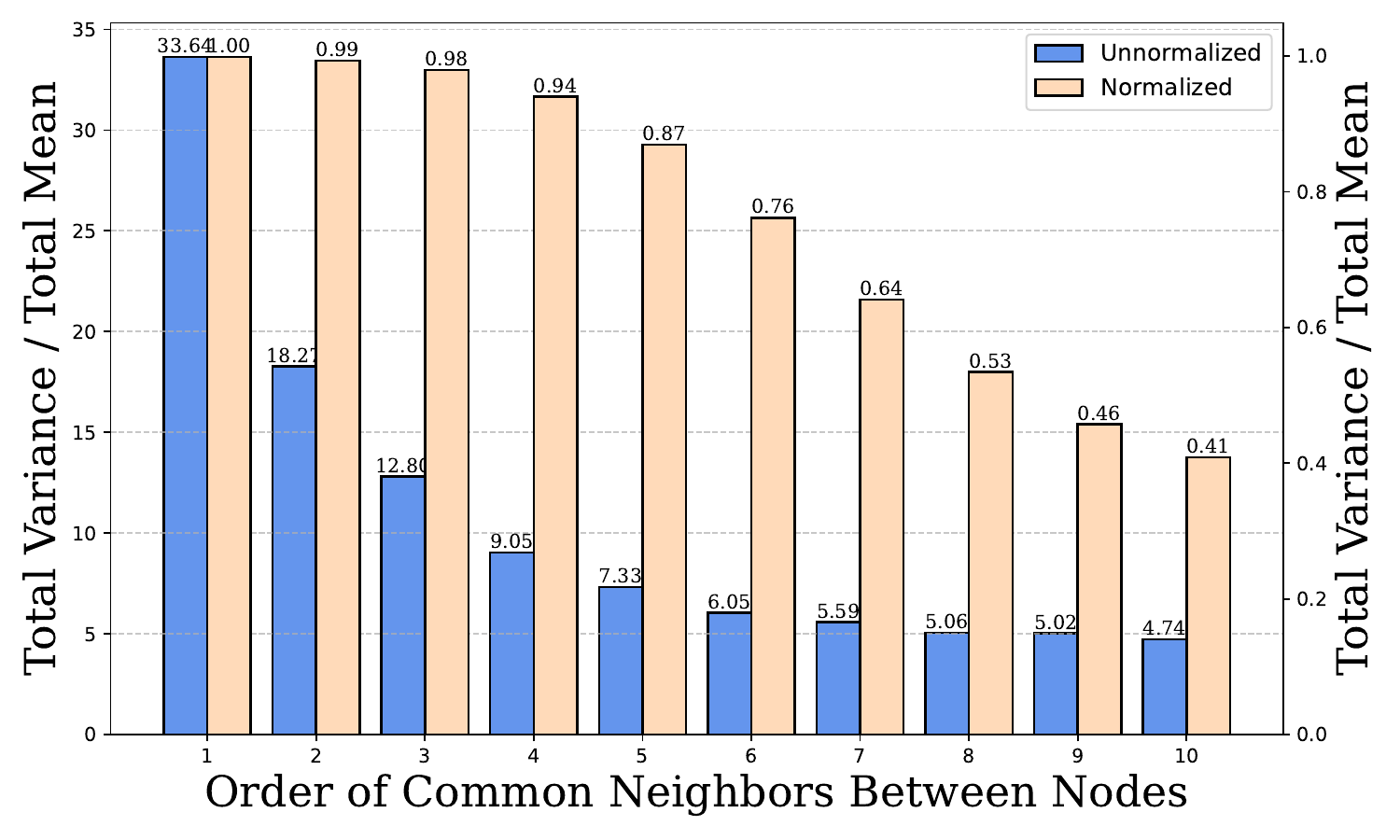}
\vspace{-15pt}  
\caption{To demonstrate that the higher-order common neighbors of different node pairs become similar, we use the Cora dataset as an example. We calculate the coefficient of variation (CV), which is the ratio of the standard deviation to the mean, of the CNs of different nodes at the same order. High CV indicates low over-smoothing degree. Results shows that as the order increase, over-smoothing becomes more and more significant, but our method (yellow) can alleviate this problem.}
\label{variation}
\vspace{-20pt}  
\end{wrapfigure}

A common issue is that, as the order of CNs increases, the high-order CNs of different nodes become more similar. To quantify this effect, we analyze the coefficient of variation of CNs at different orders across various nodes. As shown in Figure~\ref{variation}, the coefficient of variation decreases at higher orders, indicating that the high-order CNs of different nodes begin to overlap more frequently and that the similarity of higher-order neighborhood structures continually increases. This may lead to a loss of distinctiveness among nodes. In other words, as we consider more hops in the network, the feature representations of links become increasingly homogeneous, potentially undermining the performance of link prediction models.
\begin{wrapfigure}{r}{0.5\columnwidth} 
\vspace{-20pt}  
\centering
\includegraphics[width=\linewidth]{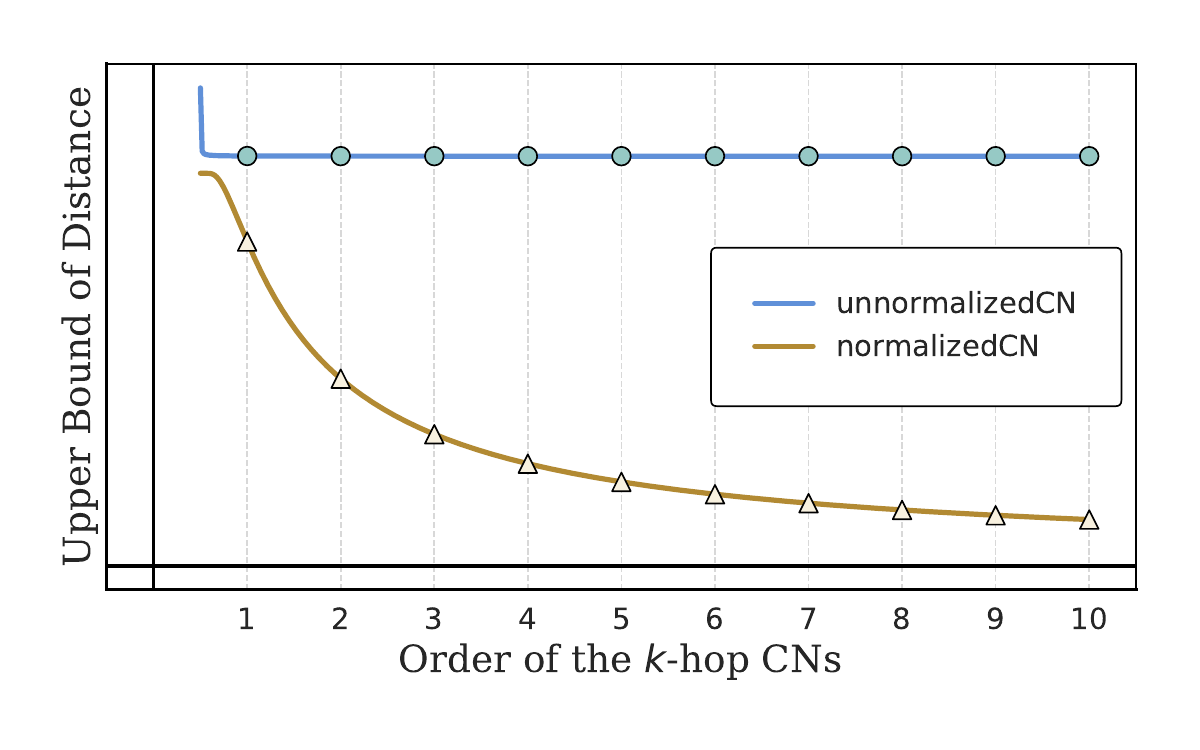}  
\caption{The impact of higher-order $k$-hop CNs on the upper bound of $d_{ij}$ is illustrated. $k$-hop CNs have no effect (blue line). The yellow line shows how the upper bound tightens with increasing $k$, which is the result obtained after introducing normalizedCN. With normalizedCN, the contribution of each $k$-hop CN is now $\sum_{c \in CN^k(i, j)} \frac{1}{|P_k(c)|}$ rather than just 1.}
\label{upperboud}
\vspace{-20pt}  
\end{wrapfigure}

This observation is consistent with the intuitive motivation behind our normalization trick. Specifically, when a node participates in a large number of walks, it will more frequently appear as a CN for other node pairs, causing the features of different links to become similar and thus leading to over-smoothing.  Therefore, we propose a normalization trick similar to batch normalization~\citep{ioffe2015batch} to mitigate this issue. We divide the coefficient of each node by the number of walks in which it participates to obtain \emph{normalizedCN}. This normalization technique helps to reduce the influence of frequently appearing common neighbors, ensuring that nodes with a large number of high-order walk participations are not overly emphasized.

\emph{normalizedCN} provides the following insight: \emph{If $k$-hop CNs are less frequently shared among other node pairs, then these $k$-hop CNs carry greater significance in the relationship between the two nodes.} For example, if the more distant social circles in which two individuals indirectly participate include many people (i.e., are more mainstream), then the commonalities between these two individuals will be fewer. Conversely, if the social circle is more niche, it suggests a higher potential for a direct connection between the two individuals.

This compensates for the traditional Resource Allocation (RA)~\citep{zhou2009predicting} method, which ignores the potential contribution of $k$-hop CNs and fails to account for the higher-order structure of the graph. Next, we analyze this trick theoretically using a random graph model and prove its effectiveness. It can be seen that when the path or walk has a length of 1, the path count reduces to the node degree, and \emph{normalizedCN} degenerates to RA.

\subsection{Theoretical Justification for Normalization}

In this section, we analyze common neighbor with and without normalization on random graphs and Barabási-Albert model~\citep{albert2002statistical} to show that normalization leads to a better estimator of link existence. The latent space model~\citep{hoff2002latent} is commonly applied to describe proximity in latent spaces, where nodes with similar locations in the latent space tend to share specific characteristics. We first discuss the $D$-dimensional Euclidean space (i.e., a space with curvature 0). We introduce a latent space model~\citep{sarkar2011theoretical} for link prediction that describes a graph with $N$ nodes. All definitions and proofs in this section are in Appendix \ref{ApendixD}.

To illustrate the effectiveness of normalization, we compare two link prediction heuristics: CN and normalized CN.

\begin{tcolorbox}[colback=blue!10,  colframe=blue!10,  coltitle=black,  boxrule=0.5mm,  boxsep=1pt,  top=0pt,  bottom=0pt]
\begin{definition} We define the \textbf{normalizedCN} between $(i, j)$ as a measure of their similarity with \emph{Structural Feature} based on the contributions from all $k$-hop CNs. It is inversely proportional to the number of distinct node pairs for which it is also the $ k $-hop common neighbor: 
\(\text{normalizedCN}^k(i,  j) = \sum_{c \in CN^k(i, j)} \frac{1}{|P_k(c)|}\),

\end{definition} 
\end{tcolorbox}
where $ CN^k(i, j) $ is the set of $k$-hop CNs of  $(i, j)$,  $ |P_k(c)| $ is the total number of paths for the set of distinct node pairs in which node $ c $ serves as a $k$-hop CN for those node pairs. Note that, even for the same node pair,  there may be more than one path in which $ c $ acts as the $k$-hop CN of $(i,  j)$. 

\begin{tcolorbox}[colback=blue!10,  colframe=blue!10,  coltitle=black,  boxrule=0.5mm,  boxsep=1pt,  top=0pt,  bottom=0pt]
\begin{theorem}
\label{thm:HoCNdegeneracy}
When $k=1$,  $\text{normalizedCN}(i,  j)$ degenerates into $RA(i,  j)$.  Specifically,  for each $c \in CN^1(i, j)$,  the following relationship holds: $\text{normalizedCN}^1(c, i,  j) \cdot \frac{\binom{d(c)}{2}}{d(c)} = \text{RA}(c, i,  j)$. The proof is trival. 
\end{theorem}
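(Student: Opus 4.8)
The plan is to unwind the definition of \(\text{normalizedCN}^1\) and reduce everything to a single combinatorial count, namely the value of \(|P_1(c)|\). By definition the per-node contribution of \(c\) to \(\text{normalizedCN}^k(i,j)\) is \(\text{normalizedCN}^k(c,i,j) = \frac{1}{|P_k(c)|}\), so proving the claimed identity amounts to showing \(|P_1(c)| = \binom{d(c)}{2}\) and then performing one line of algebra. I would therefore open the proof by isolating this goal.

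First I would carry out the count of \(|P_1(c)|\). For \(k=1\), a node \(c\) is a \(1\)-hop common neighbor of a pair \((a,b)\) precisely when \(a\) and \(b\) are two \emph{distinct} neighbors of \(c\); in that situation the unique relevant path witnessing this is the length-\(2\) path \(a - c - b\). Hence the distinct node pairs for which \(c\) acts as a \(1\)-hop CN are in bijection with the \(2\)-element subsets of \(N(c)\), and each contributes exactly one path through \(c\). This gives \(|P_1(c)| = \binom{d(c)}{2}\).

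Next I would substitute and conclude. From \(\text{normalizedCN}^1(c,i,j) = 1/\binom{d(c)}{2}\) we get \(\text{normalizedCN}^1(c,i,j)\cdot \frac{\binom{d(c)}{2}}{d(c)} = \frac{1}{d(c)} = \text{RA}(c,i,j)\), which is exactly the per-node term of Resource Allocation. Summing over all \(c \in CN^1(i,j)\) then yields \(\sum_{c \in CN^1(i,j)} \text{RA}(c,i,j) = \sum_{c \in CN^1(i,j)} \frac{1}{d(c)} = \text{RA}(i,j)\), establishing that, after the stated rescaling, \(\text{normalizedCN}^1\) coincides term-by-term with \(\text{RA}\), and thus degenerates into it.

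The one place requiring care — and essentially the only reason the statement is not completely self-evident — is the counting convention hidden in the phrase ``\(|P_k(c)|\) is the total number of paths''. One must fix whether node pairs and their connecting paths are counted as ordered or unordered: the \emph{unordered} convention (one path per unordered pair of neighbors) gives \(|P_1(c)| = \binom{d(c)}{2}\) and matches the factor \(\binom{d(c)}{2}/d(c)\) appearing in the theorem, whereas counting ordered pairs would give \(d(c)(d(c)-1)\) and a different rescaling. I would state the unordered convention explicitly at the outset; beyond that bookkeeping the argument is immediate, which is why the authors remark that the proof is trivial.
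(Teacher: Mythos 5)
Your proof is correct and is exactly the argument the paper leaves implicit when it declares the proof trivial: for $k=1$ each node $c$ serves as a common neighbor for precisely the $\binom{d(c)}{2}$ unordered pairs of its neighbors, each via a single length-$2$ path, so $|P_1(c)| = \binom{d(c)}{2}$ and the stated rescaling turns $1/|P_1(c)|$ into the Resource Allocation term $1/d(c)$. Your remark about fixing the unordered counting convention for $|P_k(c)|$ is a sensible clarification and does not deviate from the paper's intent.
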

\end{tcolorbox}

When there is no normalization, in the latent space we have the following Proposition \ref{distanceboundwithk-orderCNs}. Here, \(d_{ij}\) denotes the distance between node \(i\) and node \(j\) in the latent space. The smaller this distance is, the higher the probability that there exists a link between \(i\) and \(j\).

\begin{tcolorbox}[colback=blue!10,  colframe=blue!10,  coltitle=black,  boxrule=0.5mm,  boxsep=1pt,  top=0pt,  bottom=0pt]
\begin{proposition}\label{distanceboundwithk-orderCNs} \textbf{(Latent space distance bound with $k$-hop CNs)}. For any $\delta > 0$, with probability at least $1 - \delta$, we have \(d_{i j} \leq \sum_{n=0}^{M-2} r_n+2 \left((r_M^{\max })^2-\left(\iota-\alpha \right)^{\frac{2}{D(2k-1)}}\right)^\frac{1}{2}\),

\end{proposition}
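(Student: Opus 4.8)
The plan is to follow the latent-space template of \citet{sarkar2011theoretical}, adapting their single-hop common-neighbor argument to walks of length $2k-1$, the shortest walks that realize a $k$-hop common neighbor. First I would condition on the latent positions of $i$ and $j$ and translate the combinatorial event ``$c$ is a $k$-hop CN of $(i,j)$'' into a geometric statement: applying the triangle inequality along each of the two half-walks joining $i$ and $j$ through $c$ forces the latent position of $c$ into the intersection of two balls centred at $i$ and $j$ whose radii are partial sums of the per-hop connection radii $r_0, r_1, \dots$; conversely, when $d_{ij}$ is small one can exhibit an entire family of valid walks whose intermediate nodes all lie in small balls placed along the $i$--$j$ segment. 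This yields a two-sided comparison between $CN^k(i,j)$ and a tubular region around that segment, the $k$-hop analogue of the lens region that appears when $k=1$.

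Next I would handle concentration. Conditioned on the positions of $i$ and $j$, the observed structural feature $\iota = CN^k(i,j)$ is a sum of bounded indicator terms over the remaining nodes (over tuples of intermediate nodes, in the walk-counting version), each with expectation equal to a normalized volume; a Chernoff--Hoeffding bound then shows that, with probability at least $1-\delta$, the true composite volume exceeds $\iota$ by at most a slack $\alpha = \alpha(N, D, \delta)$, which is exactly the ``$\iota - \alpha$'' in the statement. Finally I would lower-bound that composite volume geometrically by inscribing a single $D$-ball in the lens created by the last hop — of radius $\bigl((r_M^{\max})^2 - (\tfrac{1}{2}(d_{ij} - \sum_{n=0}^{M-2} r_n))^2\bigr)^{1/2}$ — while the remaining $M-1$ hops supply the offset $\sum_{n=0}^{M-2} r_n$; since each of the $2k-1$ edges along the walk contributes a power of that radius, the volume is of order $\bigl((r_M^{\max})^2 - (\cdot)^2\bigr)^{D(2k-1)/2}$. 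Setting this lower bound at least $\iota - \alpha$ and solving for $d_{ij}$ produces the exponent $\tfrac{2}{D(2k-1)}$ on $\iota - \alpha$ and the claimed inequality.

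I expect the main obstacle to be the geometry in the last step. Unlike the clean lens of the $k=1$ case, the feasible-walk region for $k>1$ is a tube, so a volume lower bound that is simultaneously valid and tight enough to invert cleanly requires careful placement of the waypoints and of the radii of the inscribed balls along the segment. A secondary difficulty is that a single node $c$ can be a $k$-hop common neighbor through many distinct walks (the multiplicity $|P_k(c)|$ from the definition of \emph{normalizedCN}), so the concentration step must be applied to the walk-count form of $CN^k$, and one must check that the dependence between the two half-walks, and among overlapping walks, does not spoil the Chernoff bound — which is precisely where the subsequent, normalized version of the bound will do better.
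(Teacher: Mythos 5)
Your plan follows essentially the same route as the paper's proof: place the nodes in the latent space model, use the triangle inequality to confine each intermediate node of a walk between $i$ and $j$ to an intersection of balls whose separations are $(d_{ij}-\sum_{n=0}^{p-2}r_n)_+$, bound the walk count $\eta_{2k}(i,j)$ by a concentration argument plus a product of intersection volumes, and invert. However, two steps as you state them would not go through. First, the final geometric step has the inequality direction backwards. The chain needed is
\begin{equation*}
\iota-\alpha \;\le\; \text{(degree-normalized expected walk count)} \;\le\; \prod_{p=1}^{2k-1}\Bigl[(r_M^{\max})^2-\bigl(\tfrac{d_{ij}-\sum_{n=0}^{M-2}r_n}{2}\bigr)^2\Bigr]^{D/2},
\end{equation*}
i.e.\ an \emph{upper} bound on each intersection volume (the circumscribed hypersphere bound $A(r_i,r_j,d)/V(1)\le((r^{\max})^2-(d/2)^2)^{D/2}$) that is decreasing in $d_{ij}$, which can then be inverted. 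The per-factor formula you wrote is exactly this circumscribed upper bound, but you present it as a \emph{lower} bound obtained by inscribing a ball and then ``set this lower bound at least $\iota-\alpha$''; from $\iota-\alpha\le\text{volume}$ together with $\text{volume}\ge L(d_{ij})$ no constraint on $d_{ij}$ follows, so the inversion as described is not a valid inference (the inscribed bound $((r_i+r_j-d)/2)^D$ is what one would use for a converse statement, not for this one). Also, the $2k-1$ factors correspond to the $2k-1$ intermediate nodes of a length-$2k$ walk in the symmetric case, not to ``each of the $2k-1$ edges.''

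Second, the concentration step is left open. You correctly flag that overlapping walks are dependent, but a Chernoff--Hoeffding bound for sums of independent indicators does not apply to the walk count, and your plan does not resolve this. The paper's resolution is to apply a bounded-differences (McDiarmid-type) inequality to the independent latent positions $X_1,\dots,X_N$, using the sensitivity bound $|\eta_{2k}(i,j\mid\dots,X_p,\dots)-\eta_{2k}(i,j\mid\dots,\tilde X_p,\dots)|\le\Delta^{2k-2}$, and then to replace the maximum degree $\Delta$ by its Chernoff-type bounds in terms of $N$. This is precisely where the normalization $(N-\sqrt{-2N\ln\delta})^{2k-1}$ inside $\iota$ and the explicit form of $\alpha$ come from; your proposal instead treats $\iota$ as the raw structural feature $CN^k(i,j)$ and never introduces the degree-based normalization, so the stated constants could not be recovered from the argument as planned.
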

\end{tcolorbox}
where $\alpha=\frac{\sqrt{ N \ln (1 / 2 \delta) / 2 }}{ N + \sqrt{-3 N \ln \delta} }  $, $\iota=\frac{\eta_{2k}(i,  j)}{(N-\sqrt{-2 N \ln \delta})^{2k-1}}$. And $r_M^{\max } = \max\{r_M\}( M \in\{1,  \cdots,  2k-1\} )$ is the maximum of the feature radius for the set of intermediate nodes in $D$ dimensional Euclidean space.  $N$ is the number of nodes. $k$ represents the order of the $k$-hop CNs. $\eta_{2k}(i,  j)$ is the number of $k$-hop CNs about $(i,  j)$.  

After applying our normalization trick, we have the following Proposition \ref{distanceboundwithHoCN}:
   
\begin{tcolorbox}[colback=blue!10,  colframe=blue!10,  coltitle=black,  boxrule=0.5mm,  boxsep=1pt,  top=0pt,  bottom=0pt]

\begin{proposition}\label{distanceboundwithHoCN}\textbf{(Latent space distance bound with $k$-hop CNs after normalization)}. We simply need to modify the overall contribution of $k$-hop CNs from $\eta_{2k}(i,  j)$ to $\frac{\eta_{2k}(i,  j)}{\sum_{c \in CN^k(i, j)} 1/|P_k(c)|}$. 
For any $\delta > 0$,  with probability at least $1 - \delta$,  we have
\begin{equation}
d_{i j} \leq \sum_{n=0}^{M-2} r_n+2 \sqrt{\left(r_M^{\max }\right)^2-\left( \left(\gamma \binom{\zeta}{2} \right)^{\frac{1}{D(k-1)}} \cdot \rho^N\right)^{\frac{2k-2}{2 k-1}}},
\end{equation}
\end{proposition}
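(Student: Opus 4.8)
The plan is to rerun the latent-space volume-comparison argument behind \Cref{distanceboundwithk-orderCNs} and change only the quantity measuring the strength of the $k$-hop common-neighbor evidence, exactly as the statement advertises. Recall the skeleton of the unnormalized bound: if $c$ is a $k$-hop CN of $(i,j)$ there is a walk of length $2k-1$ or $2k$ from $i$ through $c$ to $j$, and the triangle inequality in the latent space decomposes $d_{ij}$ into the intermediate feature radii $r_0,\dots,r_{M-2}$ along that walk plus a final leg governed by the lens $B(x_i,r_M^{\max})\cap B(x_j,r_M^{\max})$, whose normalized volume is a decreasing function of $(d_{ij}/2)^2$. Since the $N$ latent positions are i.i.d.\ in the feature ball, the number of nodes landing in that lens concentrates around $N$ times its volume fraction; writing the count of $k$-hop CNs as $\eta_{2k}(i,j)$, dividing by the appropriate power of $N$, and subtracting the Bernstein/Hoeffding downward deviation yields $\iota-\alpha$, a high-probability lower bound on a $(2k-1)$-th power of that volume fraction. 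Inverting the ball-intersection volume formula for $d_{ij}$ gives \Cref{distanceboundwithk-orderCNs}, and the exponent $\frac{2}{D(2k-1)}$ is the composition ``strip the walk-length factor with a $(2k-1)$-th root, convert volume to radius with a $1/D$ root, then square.''

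For the normalized bound I would repeat this chain with $\eta_{2k}(i,j)$ replaced by the effective count $\frac{\eta_{2k}(i,j)}{\sum_{c\in CN^k(i,j)}1/|P_k(c)|}$, in four steps. (i) Write $\text{normalizedCN}^k(i,j)=\sum_{c}\frac{1}{|P_k(c)|}\mathbf{1}[c\in CN^k(i,j)]$; conditioning on the latent positions and using that $|P_k(c)|$ --- the number of pairs for which $c$ is a $k$-hop CN, equal to $\binom{d(c)}{2}$ when $k=1$, which recovers \Cref{thm:HoCNdegeneracy} --- is to leading order a common value $\binom{\zeta}{2}$ across the CNs, its conditional mean is $1/\binom{\zeta}{2}$ times the unnormalized lens-volume term, so that $\frac{\eta_{2k}(i,j)}{\sum_{c}1/|P_k(c)|}\approx\binom{\zeta}{2}$ is what now plays the role of $\eta_{2k}$. (ii) Apply a concentration inequality for a sum of independent, $[0,1]$-valued, non-identically-distributed summands --- Hoeffding/Bernstein, or McDiarmid on the residual edge noise after also conditioning on the global profile $\{|P_k(c)|\}_c$ --- to get a high-probability lower bound; this is where the normalization constant $\gamma$ (absorbing the $N$-powers) and the deviation factor $\rho^N$ appear. (iii) Substitute into the ball-intersection volume formula and invert, tracking $\frac{1}{D(k-1)}\cdot\frac{2k-2}{2k-1}=\frac{2}{D(2k-1)}$ so the final leg becomes $2\sqrt{(r_M^{\max})^2-\bigl((\gamma\binom{\zeta}{2})^{1/(D(k-1))}\rho^N\bigr)^{(2k-2)/(2k-1)}}$. (iv) Append the unchanged sum $\sum_{n=0}^{M-2}r_n$.

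The step I expect to be the real obstacle is (ii): the weights $1/|P_k(c)|$ are themselves functions of the random graph and correlated with the event $\{c\in CN^k(i,j)\}$, so $\text{normalizedCN}^k(i,j)$ is not literally a sum of independent terms, and already for $k\ge 2$ the indicators $\mathbf{1}[c\in CN^k(i,j)]$ fail to be independent because distinct $k$-hop CNs can share intermediate nodes. I would handle this by conditioning on all latent positions and on the global path-count profile, reducing to a bounded-difference argument over the residual edge randomness, then folding the conditioning error into $\delta$ with a union bound; pinning down the closed forms of $\gamma$, $\zeta$, and $\rho$ --- not merely ``some'' deviation term --- is where the bulk of the calculation lives. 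A second, qualitatively important check is the monotonicity claim from the introduction: one must verify that, for a genuine link, $\bigl((\gamma\binom{\zeta}{2})^{1/(D(k-1))}\rho^N\bigr)^{(2k-2)/(2k-1)}$ is strictly increasing in $k$, so the bound on $d_{ij}$ tightens, whereas the unnormalized $(\iota-\alpha)^{2/(D(2k-1))}$ is asymptotically flat because $\eta_{2k}$ grows just fast enough to cancel its $N^{2k-1}$ normalization. Making this comparison of the $k$-scaling of $\eta_{2k}$ against that of $\sum_{c}1/|P_k(c)|$ precise in the random-graph model is the substantive payoff of normalization over \Cref{distanceboundwithk-orderCNs}.
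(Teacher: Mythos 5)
Your overall skeleton---rerun the chain behind \Cref{distanceboundwithk-orderCNs}, keep the triangle-inequality part and the radius sum unchanged, and alter only the quantity playing the role of $\eta_{2k}(i,j)$---matches the paper's. But at the decisive quantitative step your plan diverges from what the paper actually does, and as written it does not reach the stated bound. You propose to obtain $\gamma$ and $\rho^N$ from a fresh concentration argument for the weighted sum $\sum_{c}1/|P_k(c)|$ (conditioning on latent positions and the path-count profile, bounded differences over residual edge randomness, union bound), and you concede that the closed forms of $\gamma$, $\zeta$, $\rho$ are left open. The paper introduces no new concentration inequality at all: the deviation term $b(N,\delta,k)=\Delta^{2k-2}\sqrt{N\ln(1/2\delta)/2}$ is carried over unchanged from the unnormalized proof, and $\gamma$ is exactly the same quantity $\iota-\alpha$ that already appears in \Cref{distanceboundwithk-orderCNs}. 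What the paper does instead is bound the pair-count of each common neighbor from below using the latent geometry: the expected number of node pairs for which a $k$-hop CN $c$ of $(i,j)$ also serves as a $k$-hop CN is at least $\binom{\zeta}{2}$ times a binomial-type factor $p^n(1-p)^{N-n-1}$ built from the ball-intersection volumes, and this factor is bounded by $(\max\{\rho,\xi\})^{D(k-1)N}$ with $\rho^{D(k-1)}+\xi^{D(k-1)}=1$. This yields a factor $\chi=\binom{\zeta}{2}(\max\{\rho,\xi\})^{D(k-1)N}$ by which $c(N,\delta,k)$ is divided (with $b$ unchanged), and the same algebraic inversion as before then produces $\bigl(\gamma\binom{\zeta}{2}\bigr)^{1/(D(k-1))}\rho^N$ and the exponent $(2k-2)/(2k-1)$. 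So $\rho^N$ is an algebraic bound on the probability that a CN also serves many other pairs, not a deviation factor.

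Concretely, the missing idea in your proposal is this geometric lower bound on $|P_k(c)|$ via $\binom{\zeta}{2}$ and the lens-volume probabilities; without it, your step (ii) would have to manufacture, out of a McDiarmid-type argument, constants of the very specific form $\binom{\zeta}{2}\rho^{D(k-1)N}$, which bounded differences over edge randomness will not produce. Relatedly, your step (i) claim that the effective count becomes $\approx\binom{\zeta}{2}$ is not how the final bound is structured: the paper keeps $\eta_{2k}(i,j)$ (inside $\gamma$) and absorbs the normalization into the constant $c(N,\delta,k)$, so $\binom{\zeta}{2}$ enters multiplicatively alongside $\gamma$ rather than replacing $\eta_{2k}$. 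Your diagnosis of the dependence problem for the weights $1/|P_k(c)|$ is fair---the paper sidesteps it by substituting the expectation-level geometric bound rather than resolving it---but your route, as sketched, leaves the statement's specific constants underived and therefore does not constitute a proof of the proposition as stated.
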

\end{tcolorbox}
where $\zeta$ is the maximum degree of all $k$-hop CNs of $(i,  j)$ and $\rho \in [0,  1]$ and $\gamma =\left(\frac{\eta_{2k}(i,  j)}{(N-\sqrt{-2 N \ln \delta})^{2k-1}}-\alpha \right)$. 

Next, we analyze the upper bound of \(d(i,j)\) for these two cases. When there is no normalization, it is evident that as \(N\) becomes large, \(\left(1-\alpha\right)\) approaches 1, so the order \(k\) of the \(k\)-hop CNs has no effect. Additionally, the exponent term \(\frac{2}{D(2k-1)}\) contains a large denominator \(D\), meaning that even if \(\left(1-\alpha\right)\) does not approach 1, it would not significantly affect the result (as shown by the blue line in \Cref{upperboud}). However, when we apply the normalization trick, the upper bound of \(d_{ij}\) becomes tighter as \(k\) increases (as shown by the yellow line in \Cref{upperboud}). In fact, this aligns with the general belief that the effectiveness of incorporating higher-order \(k\)-hop CNs is indisputable~\citep{wang2023neural, chamberlain2022graph, mao2023revisiting, yun2021neo}. Based on the above analysis, normalizedCN provides a tighter upper bound of \(d(i,j)\) compared to CN and is more effective.

The above theoretical analysis was conducted on a random graph model. We extend the theoretical argument to the more realistic Barabási-Albert model~\citep{albert2002statistical}, and obtain the following Proposition \ref{distanceboundwithk-orderCNsonAlbert} and Proposition \ref{distanceboundwithHoCNonAlbert} before and after introducing our path-based normalization, respectively. The detailed derivation process is provided in Appendix \ref{ApendixR}.

\begin{tcolorbox}[colback=blue!10,  colframe=blue!10,  coltitle=black,  boxrule=0.5mm,  boxsep=1pt,  top=0pt,  bottom=0pt]
\begin{proposition}\label{distanceboundwithk-orderCNsonAlbert} \textbf{(Distance bound with $k$-hop CNs on Barabási-Albert model)}. 
For any $\delta$ > 0, with probability at least $1-\delta$, we have

\begin{equation}
d_{ij}\leq 2k\Bigl[\frac{1}{\alpha}\ln(\frac{2(N-2)}
     { \frac{(2N+1)!!}{2^{N}N!}+
                \frac{\sqrt{N\ln\delta^{-1}}}{4}}-1)
                +\Bigl[\frac{1}{NV(1)}(m\frac{(2N+1)!!}{2^N N!}+\sqrt{\frac{Nm^2}{2}\ln{\delta^{-1}}})\Bigr]^{\frac{1}{D}}\Bigr],
\end{equation}
                
where $N=\#nodes$,$k$ represents the order of the k-hop CNs.

\end{proposition}
\end{tcolorbox}

\begin{tcolorbox}[colback=blue!10,  colframe=blue!10,  coltitle=black,  boxrule=0.5mm,  boxsep=1pt,  top=0pt,  bottom=0pt]
\begin{proposition}\label{distanceboundwithHoCNonAlbert}\textbf{(Distance bound with $k$-hop CNs on Barabási-Albert model after normalization)}.
After introducing normalizedCN, for any $\delta$ > 0, with probability at least $1-\delta$, we have

\begin{equation}
d_{ij}\leq 2k\Bigl[\frac{1}{\alpha}\ln(\Bigl[
  -\frac{n-2}{N-n-1}
  W\!\Bigl(
       -\frac{N-n-1}{n-2}C^{\tfrac{1}{n-2}}
    \Bigr)
\Bigr]^{-\tfrac{1}{k}}-1)
                + E,
\end{equation}
where $W(\cdot)$ is \textbf{Lambert $W$ function}, $\zeta$ is the maximum degree of all $k$-hop CNs of $(i, j)$, the total number of paths of length
$l$ between i and j is denoted as $\eta_l(i, j)$.
\begin{equation}
C =
\frac{1}{\tbinom{\zeta}{2}}
\frac{D^{2k-1}}
     {\eta_{2k} - D^{2k-2}
        \dfrac{\sqrt{N\ln\delta^{-1}}}{4}},
E = \Bigl[\frac{1}{NV(1)}(m\frac{(2N+1)!!}{2^N N!}+\sqrt{\frac{Nm^2}{2}\ln{\delta^{-1}}})\Bigr]^{\frac{1}{D}}\Bigr]
\end{equation} and $D$ is the maximum degree on the graph.

\end{proposition}
\end{tcolorbox}

Results show that without introducing normalizedCN, the upper bound of $s_{ij}$ is a monotonically increasing affine function with respect to $k$. After introducing normalizedCN, this upper bound gradually decreases as $k$ increases. This extends our theoretical analysis to more realistic scenarios.

\section{Orthogonal Common Neighbor}\label{section6}

Following the structure $0 \to OCN^0 \to \cdots \to OCN^k \to OCN^{k+1} \to \cdots$,  we can naturally construct: \(O C N^k \Rightarrow \sum_{u \in N^k(i)\cap {N^k(j)}} \operatorname{MPNN}(u,  A,  X) = O C N^k \cdot h = O C N^k \cdot \operatorname{MPNN}(A,  x)\).

In particular,  $O C N^0$ reflects the two nodes of the edge itself,  so we have:
\begin{equation}
\begin{aligned} 
MPNN(N^0 (i),  A,  x) \odot \operatorname{MPNN}(N^0 (j),  A,  x) = h_i \odot h_j = \operatorname{MPNN}(i,  A,  x) \odot \operatorname{MPNN}(j,  A,  x)
\end{aligned}
\end{equation}
We naturally get our Orthogonal Common Neighbor (OCN) model:
\begin{equation}
\begin{aligned}
\operatorname{OCN}(i,  j,  A,  X)=\operatorname{MPNN}(i,  A,  X) \odot \operatorname{MPNN}(j,  A,  X) +\sum_{k=1}^K \alpha_k \{\operatorname{OCN^k} \cdot \operatorname{MPNN}(A,  X)\}_{ij}. 
\label{OCNmodel}
\end{aligned}
\end{equation}

The complete algorithm is shown in \Cref{alg:OCNOverBatch}. The model architecture is detailed in Appendix \ref{ApendixG}. According to \Cref{OCNPsection}, we obtain our Orthogonal Common Neighbor with Polynomial Filters (OCNP) by replacing the computation method of \(OCN^k\) in \Cref{OCNmodel} from \Cref{alg:Orthogonalization} with \Cref{OCNP}.

\begin{tcolorbox}[colback=blue!10,  colframe=blue!10,  coltitle=black,  boxrule=0.5mm,  boxsep=1pt,  top=0pt,  bottom=0pt]
\begin{theorem}
\label{thm:OCNexpressive}
OCN is strictly more expressive than Graph Autoencoder(GAE),  CN,  RA,  AA.  Moreover,  Neo-GNN BUDDY and NCN are not more expressive than OCN. The proof can be found in Appendix \ref{ApendixE}.
\end{theorem}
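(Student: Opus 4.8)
The plan is to establish the two halves of the statement separately: (i) OCN strictly dominates GAE, CN, RA, and AA; and (ii) OCN is at least as expressive as Neo-GNN, BUDDY, and NCN. Throughout, ``expressive power'' should be read in the usual link-prediction sense: model $\mathcal{A}$ is at least as expressive as $\mathcal{B}$ if, whenever $\mathcal{B}$ assigns different scores to two node pairs $(i,j)$ and $(i',j')$ (over all graphs, with suitable parameter/weight choices), $\mathcal{A}$ can also distinguish them; strict dominance additionally requires a separating example in the other direction. So the first thing I would do is fix this definition precisely and record the reduction principle: it suffices to show that for each weaker model $\mathcal{B}$, there is a choice of OCN parameters (MPNN weights, the coefficients $\alpha_k$, and the orthogonalization/normalization bookkeeping) that recovers $\mathcal{B}$'s scoring function exactly, plus one graph pair on which OCN separates but $\mathcal{B}$ does not.

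For the ``strictly more expressive than GAE, CN, RA, AA'' part, I would exploit the structure of \Cref{OCNmodel}. Setting all $\alpha_k = 0$ for $k \ge 1$ leaves $\operatorname{MPNN}(i,A,X)\odot\operatorname{MPNN}(j,A,X)$, which is exactly GAE, so OCN $\supseteq$ GAE. To capture CN, RA, and AA: these are all of the form $\sum_{c \in CN^1(i,j)} w(c)$ for weights $w(c)\in\{1, 1/d(c), 1/\log d(c)\}$ respectively, i.e., linear functionals of the (unnormalized or normalized) first-order common-neighbor coefficient vector. Using $K=1$ and choosing the MPNN to output a constant (or degree-dependent, via the path-based normalization hook already built into the model as discussed in the Normalization section) feature $h = \operatorname{MPNN}(A,X)$, the term $\alpha_1 \operatorname{OCN}^1 \cdot h$ reproduces each heuristic; note $\operatorname{OCN}^1 = CN^1$ since the Gram--Schmidt process does nothing to the first vector, so no subtlety enters at $k=1$. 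Strictness then follows from the standard observation that GAE cannot count common neighbors (the classic two-pair counterexample: take $(i,j)$ with many common neighbors and $(i',j')$ with none but identical individual neighborhoods up to isomorphism), while CN/RA/AA ignore node features entirely and hence fail on a graph where two structurally identical pairs carry different features — OCN sees both through its two additive terms.

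For the second half I would invoke the related-work characterization already cited: NCN is MPNN-then-SF with first-order CNs, and \citet{wang2023neural} showed NCN dominates Neo-GNN and BUDDY, so it suffices to show OCN $\supseteq$ NCN and then transitively OCN $\supseteq$ Neo-GNN, BUDDY (I should still double-check that the paper wants me to re-derive the NCN $\supseteq$ Neo-GNN/BUDDY step or may cite it — I would cite it to \Cref{ApendixE}'s advantage). NCN's scoring function is precisely $h_i \odot h_j + \sum_{c\in CN^1(i,j)} h_c$, which is the $K=1$ specialization of \Cref{OCNmodel} with $\alpha_1=1$ and $\operatorname{OCN}^1=CN^1$; hence OCN recovers NCN verbatim, giving ``not more expressive than OCN'' for all three. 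The main obstacle, and the step I would spend the most care on, is making the orthogonalization and the mini-batch running-estimate machinery harmless to the expressivity argument: I need to argue that there exist weight/coefficient settings under which $OCN^k$ reduces to $CN^k$ (or to the exact Gram--Schmidt output on the full graph, not a batch estimate) so that the reductions above are literally exact rather than approximate — essentially a statement that the idealized (full-graph, exact) version of OCN is the right object for an expressivity claim, and that it sits above the batch-approximated one only in the sense of representational capacity. I would handle this by phrasing Theorem~\ref{thm:OCNexpressive} about the exact model and remarking that expressivity is a property of the function class, with the orthogonalization being an invertible linear reparametrization of the span of $\{CN^0,\dots,CN^K\}$ — so anything expressible with the $CN^k$ basis is expressible with the $OCN^k$ basis and vice versa, which is exactly what collapses all the weaker models into OCN.
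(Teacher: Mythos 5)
Your first half (OCN subsumes GAE by keeping only the $\operatorname{MPNN}(i)\odot\operatorname{MPNN}(j)$ term, and recovers CN/RA/AA because the MPNN can realize the required degree-dependent node weights) matches the paper's argument, and your separating examples for strictness (a common-neighbor-counting pair against GAE, a feature-based pair against CN/RA/AA) are valid even though the paper instead uses a single graph (Figure~\ref{SF-and-MPNN}) in which $(v_1,v_2)$ and $(v_1,v_3)$ differ only in their 2-hop common neighbors.

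The genuine gap is in the second half. To show ``Neo-GNN and BUDDY are not more expressive than OCN'' you rely on the chain OCN $\supseteq$ NCN $\supseteq$ Neo-GNN, BUDDY, citing \citet{wang2023neural} for the second inclusion. That inclusion is not established there and is almost certainly false as a subsumption statement: NCN only aggregates \emph{first-order} common neighbors, whereas Neo-GNN uses multi-hop degree-weighted structural features and BUDDY uses subgraph sketches encoding higher-distance counts, so NCN does not dominate them, and transitivity gives you nothing for these two models. (Your OCN $\supseteq$ NCN step does suffice for the NCN part of the claim, modulo the normalization caveat you yourself raise: the invertible-reparametrization argument covers Gram--Schmidt but not the per-node $\operatorname{diag}(1/|P_k(c)|)$ scaling, which for $k=1$ must instead be absorbed by the MPNN under the paper's assumption that it can learn arbitrary degree functions.) The paper closes the Neo-GNN/BUDDY/NCN half differently and more cheaply: it exhibits one concrete pair of links, $(v_1,v_2)$ and $(v_1,v_3)$ in \Cref{SF-and-MPNN}, that none of GAE, CN, RA, AA, Neo-GNN, BUDDY, or NCN can distinguish (the pairs are symmetric once features are ignored, and NCN degenerates to GAE there), while OCN separates them through their distinct 2-hop common neighbors; by the paper's definition of expressivity this immediately rules out any of those models being more expressive than OCN. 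To repair your proposal, replace the transitive citation with such a separating example (or prove OCN $\supseteq$ Neo-GNN and OCN $\supseteq$ BUDDY directly, which is a much stronger claim than the theorem needs).
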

\end{tcolorbox}

In our ablation study, when incorporating 3-hop common neighbors (3-hop CN), the model's performance did not show significant improvement and instead exhibited instability and increased fluctuations during training. 
This suggests that higher-order neighbors may introduce redundant information, affecting the model's stability and generalization ability. Therefore, despite the possibility of selecting more orthogonal bases, and considering training resources and model stability, we prefer to use \( OCN^1 \) and \( OCN^2 \) to ensure efficiency on large-scale graph datasets. The analysis of \(\alpha_1\) and \(\alpha_2\) along with a discussion highlighting differences from NCN can be found in Appendix \ref{ApendixN}.

\begin{wrapfigure}{r}{0.4\columnwidth}  
\vspace{-10pt}  
\centering
\includegraphics[width=\linewidth]{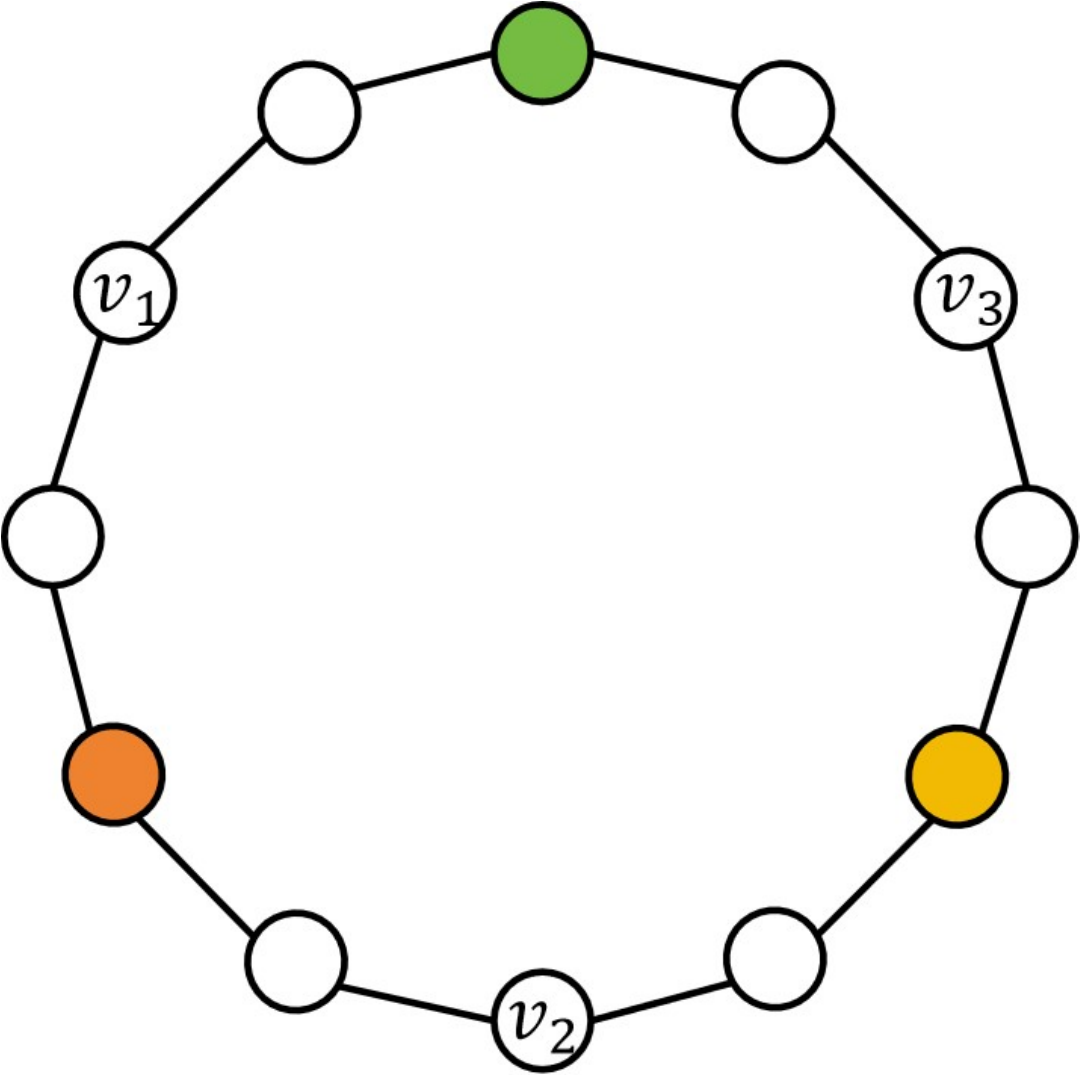}  
\caption{White, green, orange, and yellow represent node features 0, 1, 2, and 3, respectively. $v_2$ and $v_3$ are symmetric, and GAE cannot distinguish $(v_1, v_2)$ and $(v_1, v_3)$. With node features ignored, $(v_1, v_2)$ and $(v_1, v_3)$ are symmetric, so CN, RA, AA, Neo-GNN, and BUDDY cannot distinguish them. NCN also degenerates into GAE, so it also cannot. However, $(v_1, v_2)$ and $(v_1, v_3)$ have different 2-hop CNs, which allows OCN to distinguish them.}
\label{SF-and-MPNN}
\vspace{-40pt} 
\end{wrapfigure}

\section{Experiment}
In this section, we present a comprehensive evaluation of the performance of OCN. The full experimental setup is provided in Appendix \ref{ApendixF} and Appendix \ref{ApendixH}.

For our evaluation, we utilize seven well-known real-world datasets for link prediction. Three of these datasets come from Planetoid's citation networks: Cora, Citeseer, and Pubmed~\citep{pmlr-v48-yanga16}. The remaining datasets are sourced from the Open Graph Benchmark (OGB)~\citep{hu2020open}, including ogbl-collab, ogbl-ppa, ogbl-citation2, and ogbl-ddi. Detailed statistics and dataset splits are provided in Appendix \ref{ApendixF}.

\begin{table*}[t]
\caption{Results on link prediction benchmarks.  The format is average score $\pm$ standard deviation.  OOM means out of GPU memory. }
\label{Results-table}
\resizebox{\textwidth}{!}{ 
\begin{tabular}{@{}lccccccc@{}}
\toprule
 & \textbf{Cora} & \textbf{Citeseer} & \textbf{Pubmed} & \textbf{Collab} & \textbf{PPA} & \textbf{Citation2} & \textbf{DDI} \\
\midrule
\textbf{Metric} & HR@100 & HR@100 & HR@100 & HR@50 & HR@100 & MRR & HR@20 \\
\midrule
\textbf{CN} & 33.92$\pm$0.46 & 29.79$\pm$0.90 & 23.13$\pm$0.15 & 56.44$\pm$0.00 & 27.65$\pm$0.00 & 51.47$\pm$0.00 & 17.73$\pm$0.00 \\
\textbf{AA} & 39.85$\pm$1.34 & 35.19$\pm$1.33 & 27.38$\pm$0.11 & 64.35$\pm$0.00 & 32.45$\pm$0.00 & 51.89$\pm$0.00 & 18.61$\pm$0.00 \\
\textbf{RA} & 41.07$\pm$0.48 & 33.56$\pm$0.17 & 27.03$\pm$0.35 & 64.00$\pm$0.00 & 49.33$\pm$0.00 & 51.98$\pm$0.00 & 27.60$\pm$0.00 \\
\midrule
\textbf{GCN} & 66.79$\pm$1.65 & 67.08$\pm$2.94 & 53.02$\pm$1.39 & 44.75$\pm$1.07 & 18.67$\pm$1.32 & 84.74$\pm$0.21 & 37.07$\pm$5.07 \\
\textbf{SAGE} & 55.02$\pm$4.03 & 57.01$\pm$3.74 & 39.66$\pm$0.72 & 48.10$\pm$0.81 & 16.55$\pm$2.40 & 82.60$\pm$0.36 & 53.90$\pm$4.74 \\
\midrule
\textbf{SEAL} & 81.71$\pm$1.30 & 83.89$\pm$2.15 & 75.54$\pm$1.32 & 64.74$\pm$0.43 & 48.80$\pm$3.16 & 87.67$\pm$0.32 & 30.56$\pm$3.86 \\
\textbf{NBFnet} & 71.65$\pm$2.27 & 74.07$\pm$1.75 & 58.73$\pm$1.99 & OOM & OOM & OOM & 4.00$\pm$0.58 \\
\midrule
\textbf{Neo-GNN} & 80.42$\pm$1.31 & 84.67$\pm$2.16 & 73.93$\pm$1.19 & 57.52$\pm$0.37 & 49.13$\pm$0.60 & 87.26$\pm$0.84 & 63.57$\pm$3.52 \\
\textbf{BUDDY} & 88.00$\pm$0.44 & 92.93$\pm$0.27 & 74.10$\pm$0.78 & 65.94$\pm$0.58 & 49.85$\pm$0.20 & 87.56$\pm$0.11 & 78.51$\pm$1.36 \\
\midrule
\textbf{NCN} & 89.05$\pm$0.96 & 91.56$\pm$1.43 & 79.05$\pm$1.16 & 64.76$\pm$0.87 & 61.19$\pm$0.85 & 88.09$\pm$0.06 & 82.32$\pm$6.10 \\
\textbf{NCNC} & 89.65$\pm$1.36 & \underline{93.47$\pm$0.95} & 81.29$\pm$0.95 & 66.61$\pm$0.71 & 61.42$\pm$0.73 & \textbf{89.12$\pm$0.40}& 84.11$\pm$3.67 \\
\textbf{PLNLP} & -&  -& - & \underline{70.59$\pm$0.29} & 32.38$\pm$2.58 & 84.92$\pm$0.29 & 90.88$\pm$3.13  \\
\midrule
\textbf{OCN} & \underline{89.82$\pm$0.91} & \textbf{93.62$\pm$1.30} & \textbf{83.96$\pm$0.51} & \textbf{72.43$\pm$3.75} & \underline{69.79$\pm$0.85} & \underline{88.57$\pm$0.06} & \underline{97.42$\pm$0.34} \\
\textbf{OCNP} & \textbf{90.06$\pm$1.01} & 93.41$\pm$1.02 & \underline{82.32$\pm$1.21} &67.74$\pm$0.16  & \textbf{74.87$\pm$0.94} & 87.06$\pm$0.27 & \textbf{97.65$\pm$0.38} \\
\bottomrule
\end{tabular}}
\end{table*}

\subsection{Evaluation on Real-World Datasets}
In our evaluation on real-world datasets,  we adopted a series of baseline methods,  including traditional heuristic approaches such as CN~\citep{barabasi1999emergence},  RA~\citep{zhou2009predicting},  and AA~\citep{adar2003friends},  as well as GAE models such as GCN~\citep{kipf2016semi} and SAGE~\citep{hamilton2017inductive},  SF-then-MPNN models,  including SEAL~\citep{zhang2018link} and NBFNet~\citep{zhu2021neural},  as well as SF-and-MPNN models such as Neo-GNN~\citep{yun2021neo} and BUDDY~\citep{chamberlain2022graph}. We also compared with models that adopt the same MPNN-then-SF architecture,  including NCN and NCNC~\citep{wang2023neural}.  In addition, we also selected the strong baseline PLNLP~\citep{wangPLNLP2021}, which uses training tricks. Furthermore, we compared it with GIDN~\citep{wangGIDN2022}, which also utilizes training tricks, as well as several node embedding methods. The detailed comparison results can be found in Appendix \ref{Apendixtrick}.
The baseline results are derived from~\citep{wang2023neural}. Our model is OCN,  and its architecture is detailed in Appendix \ref{ApendixG}.  

The experimental results,  as shown in \Cref{Results-table},  demonstrate that OCN outperforms all baselines on all datasets except Citeseer and Citation2.  Compared to the best-performing NCNC,  OCN is only 0.6\% behind on Citeseer and Citation2,  but OCN still outperforms all baselines except NCNC on these two datasets.  On the remaining five datasets,  OCN improves by an average of 7.7\% over NCNC.  Across the seven datasets,  OCN improves by an average of 7.2\% over NCN and by an average of 12.4\% over BUDDY.  These excellent results undoubtedly prove the superior expressiveness of our OCN model.  Furthermore,  on the ogbl-ppa dataset,  OCN surpassed the large-scale GraphGPT~\citep{zhao2023graphgpt} (0.6876 $\pm$ 0.0067),  and on the ogbl-ddi dataset,  it achieved 97.42,  significantly outperforming the strongest baseline,  NCNC. OCNP not only maintains the overall performance of OCN but also significantly reduces computational complexity,  while achieving substantially superior results compared to NCNC on the ogbl-ppa. The famous baseline models' average rankings are: BUDDY: 4.7, NCN: 4.29, NCNC: 2.71, while our models' average rankings are: OCN: \textbf{1.57}, OCNP: \textbf{2.28}.

Additionally, we conducted a comparison between OCN, OCNP, and other Link Prediction Models, as detailed in \Cref{Apendixtrick}. Our models consistently demonstrated superior performance when compared to both node embedding-based methods and models that incorporated various training tricks, regardless of whether these tricks were applied or not. The results show that, even with the introduction of such training techniques, our models still maintain an edge in terms of prediction accuracy and efficiency. Furthermore, we have also provided an ablation study on the MPNN used in both OCN and OCNP in \Cref{ApendixL}. This study offers deeper insights into the specific factors in MPNN that enhance the performance. 

\begin{table*}[t]
\caption{Ablation study on link prediction benchmarks.  The format is average score $\pm$ standard deviation.  OOM means out of GPU memory. }
\label{Ablation-table}
\resizebox{\textwidth}{!}{ 
\begin{tabular}{@{}lccccccc@{}}
\toprule
 & \textbf{Cora} & \textbf{Citeseer} & \textbf{Pubmed} & \textbf{Collab} & \textbf{PPA} & \textbf{Citation2} & \textbf{DDI} \\
\midrule
\textbf{Metric} & HR@100 & HR@100 & HR@100 & HR@50 & HR@100 & MRR & HR@20 \\
\midrule
\textbf{OCN-Orth} & 88.62$\pm$1.16 & 92.04$\pm$1.53 & 80.95$\pm$1.14 & 64.64$\pm$ 0.45 & 18.39$\pm$5.94 & 74.53$\pm$4.31  & 36.08$\pm$9.36 \\
\textbf{OCN-normalizedCN} & 89.11$\pm$1.45 &92.25$\pm$1.46  & 81.35$\pm$0.98 &67.30$\pm$ 0.33  &59.94$\pm$ 2.65  & 86.33$\pm$0.17 &  97.56$\pm$0.43\\
\textbf{OCN-CAT} &88.63$\pm$ 0.99  & 92.52$\pm$ 1.37  & 81.12$\pm$ 0.39 & 63.20$\pm$ 0.80 & 34.30$\pm$ 6.55 & 87.34$\pm$ 0.23 &24.30$\pm$ 3.11  \\
\textbf{OCN-Linear} & 87.73$\pm$1.23 &90.60$\pm$1.24  & 79.71$\pm$1.08 &64.58$\pm$0.56  & 6.91$\pm$3.34& 88.56$\pm$0.13 &84.89$\pm$2.74  \\

\textbf{OCN-sfANDmpnn} &89.07$\pm$1.35  &91.98$\pm$1.32 & 80.27$\pm$1.01 &68.75$\pm$0.34 &59.30$\pm$0.54 & OOM & 80.13$\pm$12.65 \\

\textbf{OCN-3} &85.84$\pm$2.23 &88.60$\pm$4.59 & 69.56$\pm$4.66 &72.28$\pm$1.33 & OOM & 82.88$\pm$3.02 & 89.19$\pm$3.21 \\
\textbf{OCN-SPD}  &89.17$\pm$1.15 &91.97$\pm$1.60 & 80.69$\pm$1.33 &62.58$\pm$0.91 & 44.37$\pm$2.19 & 86.99$\pm$0.68 & 96.82$\pm$0.16\\
\midrule
\textbf{OCN} & 89.82$\pm$0.91 & 93.62$\pm$1.30 & 83.96$\pm$0.51 & 72.43$\pm$3.75 & 69.79$\pm$0.85 & 88.57$\pm$0.06 & 97.42$\pm$0.34 \\
\midrule
\textbf{OCNP-Filter} & 88.73$\pm$1.36 &92.18$\pm$2.65  &81.40$\pm$0.88  & 63.09$\pm$1.75 &30.86$\pm$1.03  & 86.96$\pm$0.19  &27.27$\pm$4.17  \\
\textbf{OCNP-CAT} & 88.05$\pm$1.53 & 91.71$\pm$1.66 & 81.61$\pm$0.73 & 63.89$\pm$0.39 & 28.11$\pm$2.01  &  86.98$\pm$0.44 &26.13$\pm$4.85  \\
\textbf{OCNP-Linear} &87.68$\pm$1.41  &90.93$\pm$1.79  & 80.29$\pm$0.90 & 60.89$\pm$0.91 & 12.82$\pm$1.37 & 87.10$\pm$0.28  & 49.48$\pm$0.34 \\

\textbf{OCNP-sfANDmpnn} &88.95$\pm$0.96  & 92.36$\pm$1.40 & 79.35$\pm$0.63 & 66.90$\pm$1.29 & 57.45$\pm$0.89 &  OOM & 96.86$\pm$0.11 \\

\textbf{OCNP-3} & 88.17$\pm$1.44 & 91.60$\pm$2.04 &  79.03$\pm$0.99 & 59.31$\pm$0.57 & OOM & OOM  & 91.31$\pm$2.79 \\
\textbf{OCNP-SPD} & 89.08$\pm$1.37 & 91.36$\pm$1.85 &  80.25$\pm$0.47 & 66.24$\pm$0.51 & 52.93$\pm$1.14 & 87.27$\pm$0.74 &  97.33$\pm$0.95\\
\midrule
\textbf{OCNP} & 90.06$\pm$1.01 & 93.41$\pm$1.02 & 82.32$\pm$1.21 &67.74$\pm$0.16  & 74.87$\pm$0.94 & 87.06$\pm$0.27 & 97.65$\pm$0.38 \\

\bottomrule
\end{tabular}}
\end{table*}

\subsection{Ablation Analysis}

To assess the effectiveness of the OCN and OCNP design,  we conducted a comprehensive ablation analysis,  as shown in \Cref{Ablation-table}.  By eliminating the influence of normalizedCN on the $k$-hop CN weights,  we derive OCN-Orth.  Similarly,  removing the effect of \emph{normalizedCN} from OCNP results in OCN-Filter.  Furthermore,  keeping the impact of \emph{normalizedCN} while removing the orthogonalization process in OCN and the filtering mechanism in OCNP leads to OCN-normalizedCN.  Additionally,  we modify both models in \Cref{OCNmodel} as follows:
\(    \operatorname{MPNN}(i,  A,  X) \odot \operatorname{MPNN}(j,  A,  X) \big|\big|\sum_{k=1}^K \alpha_k \operatorname{OCN^k} \cdot \operatorname{MPNN}(A,  X)\), which results in OCN-CAT and OCNP-CAT. The experimental results for OCN-CAT and OCNP-CAT show notably lower performance, especially on PPA. A detailed analysis of the significant performance gap resulting from changing the aggregation strategy from summation to concatenation can be found in Appendix \ref{ApendixO}. To investigate the impact of the nonlinear layers in our models,  we remove the nonlinear layers from both OCN and OCNP,  yielding OCN-Linear and OCNP-Linear,  respectively.  To verify that employing the \emph{MPNN-then-SF} paradigm better addresses the deficiencies mentioned earlier,  we construct two variants utilizing the \emph{SF-and-MPNN} framework.  Finally,  when incorporating 3-hop CNs,  the models are referred to as OCN-3 and OCNP-3. When we use SPD-based higher-order common neighbors, we obtain OCN-SPD and OCNP-SPD. 

Beyond thoroughly verifying the significant role of each component in our models, an intriguing experimental observation is that removing the nonlinear layers does not lead to a drastic performance drop, except for ogbl-ppa, where performance collapses entirely. While the overall performance gap on smaller-scale graph datasets remains relatively minor, it is important to highlight that OCN and OCNP are originally designed to achieve substantial improvements on large-scale graphs, whereas their benefits on smaller graphs are naturally less pronounced. This is likely because extensively leveraging high-order CNs may introduce more uninformative information in smaller-scale graphs. Furthermore, incorporating 3-hop CNs not only imposes a substantial increase in computational cost and time but also contributes to greater instability in model performance. This instability manifests as significant oscillations in the loss curve, which could potentially stem from the increased norm of the model. MPNN may still implicitly learn information from higher-order neighbors. Using SPD-based higher-order common neighbors also results in a performance decline, as we have analyzed, due to information loss.

\subsection{Scalability}

\begin{wrapfigure}{r}{0.5\columnwidth}  
\vspace{-25pt}  
\centering
\includegraphics[width=\linewidth]{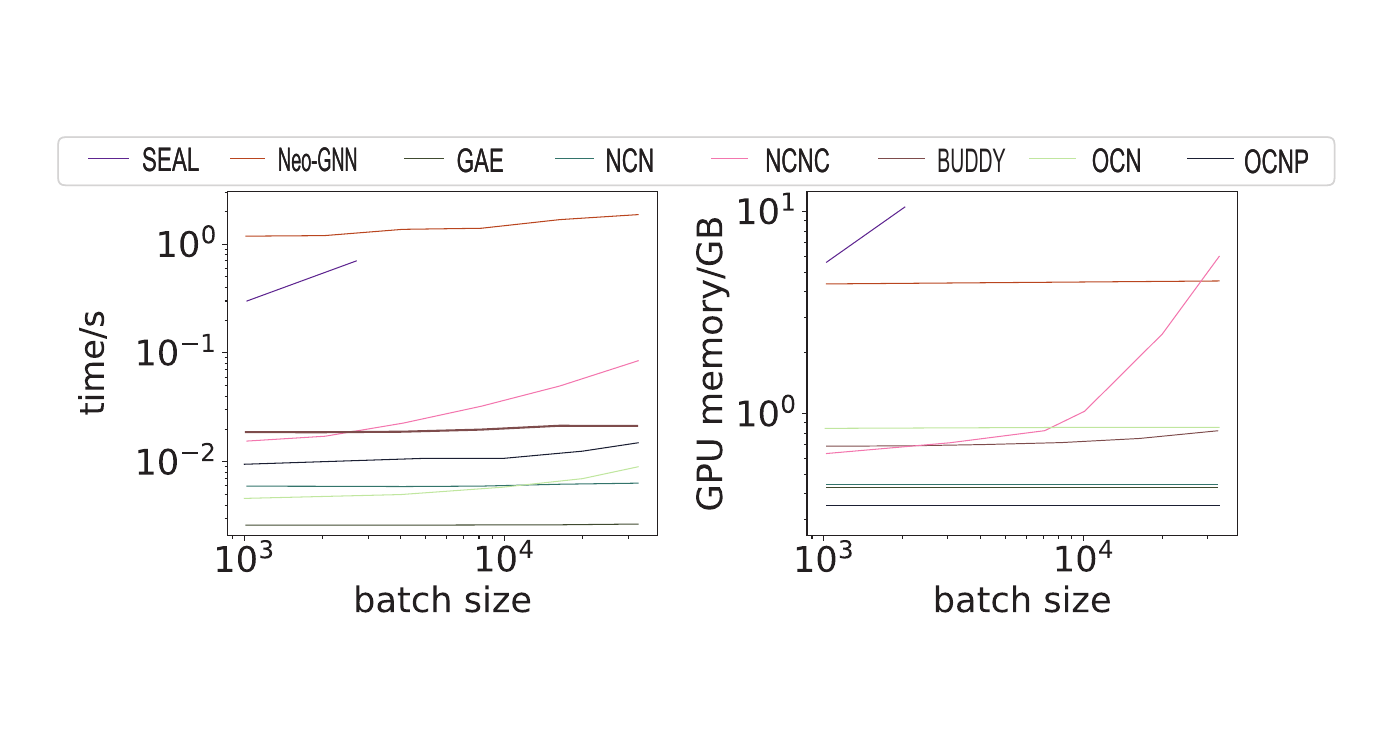}
\caption{Inference time and GPU memory on ogbl-collab. The measured process includes preprocessing and predicting one batch of test links. As shown in Appendix \ref{ApendixI}, the relation between time $y$ and batch size $t$ is $y = B + Ct$.}
\label{scalability1}
\vspace{-10pt}  
\end{wrapfigure}
We compare the inference time and GPU memory usage on the ogbl-collab dataset in \Cref{scalability1}. Both OCN and NCN exhibit similar computational overhead, as they only need to run the MPNN model once. However, in contrast, SEAL shows a significantly higher computational overhead, particularly as the batch size increases. This is because SEAL needs to rerun the MPNN model for each individual target link, which leads to a substantial increase in inference time with larger batch sizes. When it comes to GPU memory consumption, OCN generally requires more memory than NCN. OCNP tends to outperform OCN in both inference speed and memory consumption.

We also conducted similar tests on a variety of other datasets. The results are shown in \Cref{scalability2}. We observe patterns that are quite similar to the ones seen in the ogbl-collab dataset, as presented in \Cref{scalability1}. Specifically, both OCN and OCNP generally show better scalability than Neo-GNN, handling larger datasets more efficiently. On the other hand, SEAL continues to demonstrate the poorest scalability among the models tested. In terms of memory usage, the overhead of OCN is either comparable to or slightly higher than that of NCN, depending on the dataset.

\section{Conclusion}

We propose a novel method called \textbf{OCN} for the link prediction task and effectively alleviates two key issues. \textbf{OCN} demonstrates significant performance improvements across multiple datasets. Furthermore, to further reduce the time complexity of \textbf{OCN}, we introduce \textbf{OCNP}. The work presented in this study offers a new perspective on the link prediction task and provides valuable insights for future research on handling higher-order neighborhood information in large-scale graphs.

\begin{ack}
This work is supported by the National Key R\&D Program of China (2022ZD0160300) and the National Natural Science Foundation of China (62276003).
\end{ack}

\bibliographystyle{unsrtnat}
\bibliography{sample-base}

\newpage

\appendix

\section{Evaluating the Effectiveness of Orthogonalization}\label{ApendixA}

The orthogonalization process removes redundant information at the matrix granularity. However, for the point pairs \((i,j)\) that we are interested in, we need to measure the similarity of the distribution of information between different \(k\)-hop CNs at different orders for this point pair. By comparing the difference in similarity at the edge granularity before and after the orthogonalization process, we can objectively assess the effectiveness of orthogonalization in eliminating redundant information at a finer granularity. We introduce the Jensen-Shannon Divergence (JSD) as a metric, with the core goal being to verify whether orthogonalization can effectively reduce the similarity of the common neighbor matrices between different orders for each edge.

\begin{figure}[t]
\vskip -0.1in
\begin{center}
\centerline{\includegraphics[width=0.6\columnwidth]{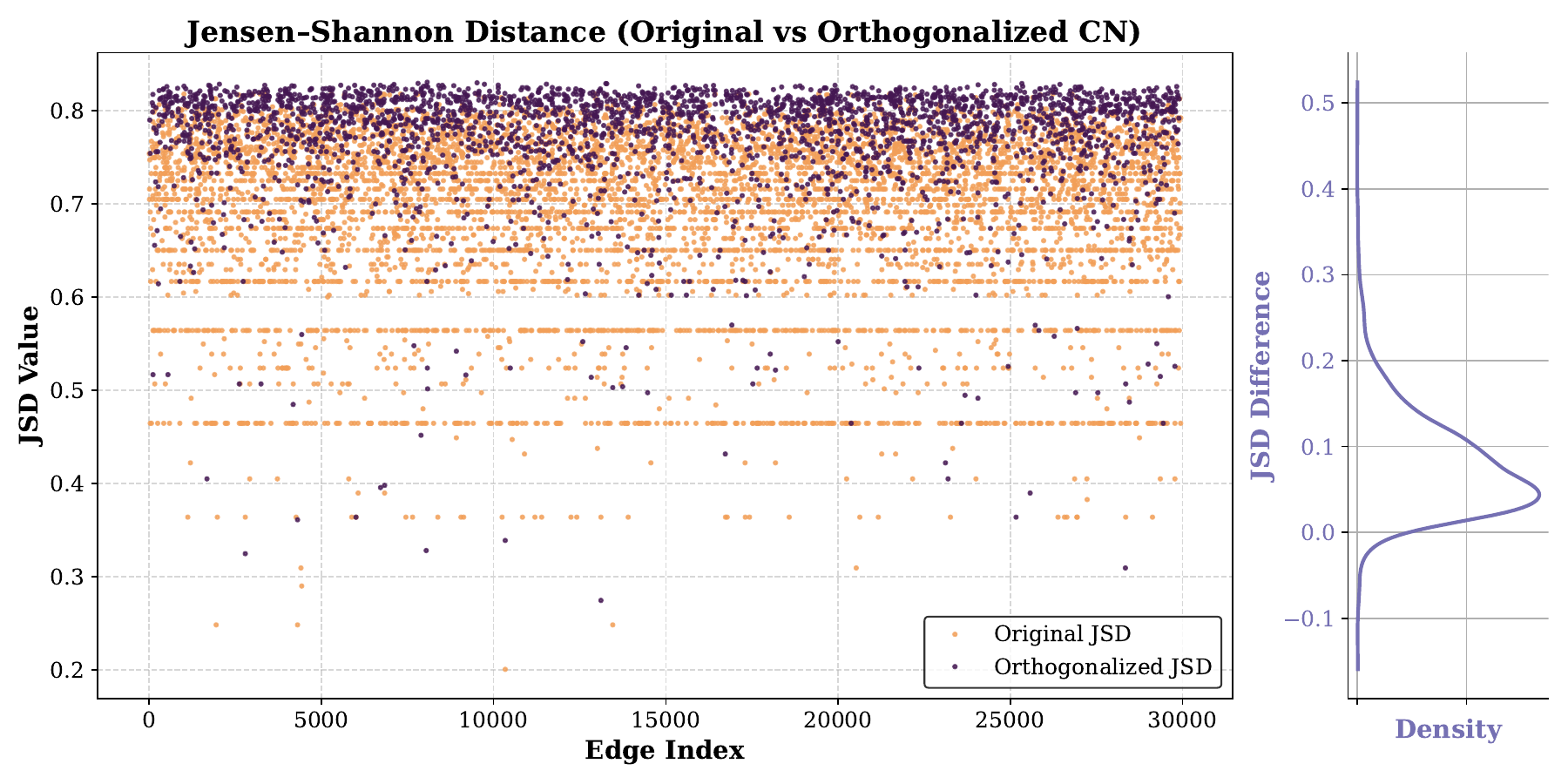}}
\vskip -0.1in
\caption{This figure demonstrates that orthogonalization reduces redundancy in higher-order common neighbors, as shown by the increased and concentrated JSD values, making the representations more independent and distinguishable.}
\label{JSD}
\end{center}
\end{figure}

 Let $p_e$ and $q_e$ represent the vector corresponding to edge $e = (i, j)$ in the matrices of $CN^\alpha$ and $CN^\beta$, respectively. We will normalize them: \(\tilde{p}_e = \frac{p_e}{\sum_k p_{e,k}}, \quad \tilde{q}_e = \frac{q_e}{\sum_k q_{e,k}}\), and then substitute them into the definition of Jensen-Shannon distance:
\begin{equation}
    D_{JS}(p_e, q_e) = \frac{1}{2} \left( D_{KL}(\tilde{p}_e \parallel \tilde{m}_e) + D_{KL}(\tilde{q}_e \parallel \tilde{m}_e) \right),
\end{equation}
where $\tilde{m}_e$ and $\tilde{q}_e$ represent the average of $\tilde{p}_e$ and $\tilde{q}_e$, which are defined as \(\tilde{m}_e = \frac{1}{2} (\tilde{p}_e + \tilde{q}_e)\).

As shown in the left part of \Cref{JSD}, taking the Collab dataset's \(CN^1\) and \(CN^2\) as an example, the JSD values between each edge's distribution of \(CN^1\) and \(CN^2\) are represented by the orange points. The JSD values after orthogonalization (purple points) are generally higher and more concentrated, indicating that the orthogonalization process has successfully transformed the higher-order CN matrices into more independent and distinguishable representations for each node pair \((i,j)\). The density curve on the right reflects the difference before and after orthogonalization for each edge. It is evident that the JSD values for the majority of \((i,j)\) pairs have increased after orthogonalization.

\section{Detailed Representation and Meaning of High-order Common Neighbors }\label{ApendixB}
According to \Cref{defNeighborhood}, assuming there exist $k$-hop CNs for the two endpoints of interest, the following cases arise:

For $k$-hop CNs, if $l = 2\lambda$ (even), the path contains exactly one $k$-hop CN (each endpoint is $k$-hops away from the CN). However, there might be more than one such $k$-hop path. In this case, we construct $\omega$ as shown below:
    \begin{equation}
    \left\{\begin{array}{l}\omega_{i_0 \cdots \hat{i}_k \cdots i_{2 k}}=A_{i_0 i_k}^k A_{i_{2 k} i_k}^k \quad \forall i_k \in V \\ \omega_{i_0} \cdots \hat{i}_p \cdots i_{2 k}=0 \quad p \in[0,2 k], p \in \mathbb{Z}, p \neq k\end{array}\right.
    \label{omega1}
    \end{equation}
    where $A_{a b}^k$represents the element in the $a$-th row and $b$-th column of the $k$-th power of the adjacency matrix. If $l = 2\lambda-1$ (odd), the path contains two $k$-hop CNs. we construct $\omega$ as shown below:
    \begin{equation}
    \begin{cases}
        \omega_{i_0 \cdots \hat{i}_{k-1} \cdots i_{2k-1} }= A_{i_0 i_{k-1}}^{k-1} A_{i_{2k-1} i_{k-1}}^k, & \forall i_{k-1} \in V\\
        \omega_{i_0 \cdots \hat{i}_k \cdots i_{2k-1}} = A_{i_0 i_k}^k A_{i_{2k-1} i_k}^{k-1} , & \forall i_k \in V\\
        \omega_{i_0 \cdots \hat{i}_p \ldots i_{2k-1}} = 0, \\
             p \in [0, 2k-1], \, p \in \mathbb{Z}, \, p \neq k-1, \, p \neq k
    \end{cases}
    \label{omega2}
    \end{equation}

As defined above which can be summarized plainly as:  
For each order \(k\), we maintain three \((h, n)\)-shaped matrices. For example: For \(4\)-order CN: Possible path length to \((u,v)\) combinations are \((4,4)\), \((3,4)\), or \((4,3)\). For \(7\)-order CN: Possible combinations are \((7,7)\), \((6,7)\), or \((7,6)\).

Take a matrix representing \((3,4)\) distances as example:
 Element \(z\) at position \((x, y)\) indicates:  
  For the \(x\)-th edge \((u,v)\) in the batch, node \(y\) has exactly \(z\) paths to \((u,v)\) with lengths precisely \((3,4)\). If \(z = 0\): Node \(y\) cannot reach \((u,v)\) via any \((3,4)\)-length paths → it's not a \(4\)-order CN under \((3,4)\) definition. It might still qualify as a \(4\)-order CN under \((4,4)\) or \((4,3)\) definitions. If none apply: Node y is not a \(4\)-order CN for \((u,v)\) at all.

\section{Proof of Convergence to Global Orthogonality. \label{ApendixC}}
\begin{proof} 

\begin{align} 
\hat{\xi}_t & =\left(1-\beta_t\right) \hat{\xi}_{t-1}+\beta_t \xi_t \\ & =\prod_{i=1}^t\left(1-\beta_i\right) \hat{\xi}_0+\sum_{i=1}^t \prod_{j=i}^t(1-\beta_{j+1}) \beta_j \xi_i \\ & =\frac{1}{t+1} \hat{\xi}_0+\sum_{i=1}^t \frac{1}{t+1} \xi_i \\ & =\frac{1}{t+1} \sum_{i=0}^t \xi_i \Leftrightarrow S M A\end{align}

The inner product $\xi_t^i$ of each mini-batch is an independent and identically distributed (i. i. d. ) random variable with a finite expected value $\mathbb{E}[\xi_t^i] = \xi^i = \langle CN^k,  OCN^i \rangle$ which represents the global inner product of the $k$-th $CN$ vector with the $i$-th $OCN$ vector over the entire graph. As $t \to \infty$,  the running average of the inner product converges to its expectation.  

\end{proof}

\section{Related Derivation of normalizedCN. \label{ApendixD}}

\begin{definition}
\label{def:latentspacemodel}
we introduce a latent space model~\citep{sarkar2011theoretical} for link prediction that describes a graph with $N$ nodes,  each associated with a location in the space:

\begin{equation}
P(i \sim j | d_{ij}) = 
\begin{cases} 
\frac{1}{1 + e^{\alpha (d_{ij} - \max\{r_i,  r_j\})}}  & \text{if } d_{ij} \leq \max\{r_i,  r_j\} \\
0 & \text{if } d_{ij} > \max\{r_i,  r_j\}
\end{cases}
\end{equation}
\end{definition}

where $P(i \sim j | d_{ij})$ denotes the probability of forming an edge between nodes $i$ and $j$ .  $d_{ij}$ represents the latent distance between nodes,  indicating the likelihood of a link forming between them.   The model has two parameters,  $\alpha$ and $r$,  where $\alpha > 0$ controls the steepness of the function. To ease the analysis,  we set $ \alpha = +\infty $.   $ r_i $ is a connecting threshold parameter corresponding to node $ i $.  With $ \alpha = +\infty $, we have
\begin{equation}
\frac{1}{1 + e^{\alpha(d_{ij} - \max\{r_i,  r_j\})}} = 0 \quad \text{if } d_{ij} > \max\{r_i,  r_j\} 
\label{probability}
\end{equation}
otherwise \eqref{probability} equals to 1. 

The nodes are distributed uniformly across a $ D $-dimensional Euclidean space,  with each node having an associated radius $ r $ and a corresponding volume $ V(r) $.  The probability of establishing a connection between any two nodes $ i $ and $ j $,  denoted as $ P(i \sim j) $,  is influenced by both the radii $ (r_i,  r_j) $ and the distance $ d_{ij} $ between them. 

\begin{enumerate}
    \item The volume of a ball with radius $ r $ is expressed as $ V(r) = V(1)r^D $,  where $ V(r) $ refers to the volume of a ball with radius $ r $ and $ V(1) $ is the volume of a unit-radius hypersphere. 
    \item The degree of a node $ i $,  represented by $ \text{Deg}(i) $,  is proportional to the volume $ V(r_i) $ of the ball corresponding to the node’s radius,  and is given by $ \text{Deg}(i) = N V(r_i) $,  where $ N $ represents the total number of nodes. 
\end{enumerate}

The likelihood of a link between two nodes $ i $ and $ j $ forming as a function of their distance $ d_{ij} $ is given by the following logistic expression:

\begin{equation}
    P(i \sim j \mid d_{ij}) = \frac{1}{1 + e^{\alpha(d_{ij} - \max\{r_i,  r_j\})}}, 
\end{equation}

where $ P(i \sim j \mid d_{ij}) $ signifies the probability of a connection between nodes $ i $ and $ j $,  $ \alpha > 0 $ governs the steepness of the transition,  and $ \max\{r_i,  r_j\} $ defines the critical radius at which the likelihood drops sharply. 

In order to ensure the proper normalization of probabilities,  we assume that all the nodes are contained within a hypersphere of unit volume in the $ D $-dimensional space.  The maximum allowable radius,  denoted $ r_{\text{MAX}} $,  is chosen such that:

\begin{equation}
V(r_{\text{MAX}}) = V(1)r_{\text{MAX}}^D = 1,  \quad \text{which implies that } r_{\text{MAX}} = \left(\frac{1}{V(1)}\right)^{1/D}. 
\end{equation}

For any pair of nodes $ i $ and $ j $,  the volume of the intersection of the balls with radii $ r_i $ and $ r_j $,  which are separated by a distance $ d_{ij} $,  is represented by $ A(r_i,  r_j,  d_{ij}) $.  The intersection volume is bounded as follows using the properties of hyperspheres:

\begin{equation}
\left(\frac{r_i + r_j - d_{ij}}{2}\right)^D \leq \frac{A\left(r_i,  r_j,  d_{ij}\right)}{V(1)} \leq \left( \left(r_{ij}^{\max}\right)^2 - \left(\frac{d_{ij}}{2}\right)^2 \right)^{D / 2}, 
\end{equation}

where $ r_{ij}^{\max} = \max\{r_i,  r_j\} $.  This relation connects the intersection volume $ A(r_i,  r_j,  d_{ij}) $ to the distance $ d_{ij} $ through the volume of the unit hypersphere. 

\begin{definition}
\label{def:simplepathandset}(simple path and set)
Given nodes $ i $ and $ j $ in a graph $ G(V,  E) $,  a simple path of length $ \ell $ from $ i $ to $ j $ is defined as a sequence $ \text{path}(i,  k_1,  k_2,  \dots,  k_{\ell-2},  j) $,  where $ i \sim k_1 \sim k_2 \sim \dots \sim k_{\ell-2} \sim j $,  and $ S_\ell(i,  j) $ represents the set of all such possible paths,  where each intermediate node $ k_1,  k_2,  \dots,  k_{\ell-2} $ belongs to the set of vertices $ V $. 

Let $ Y(i,  k_1,  k_2,  \dots,  k_{\ell-2},  j) $ be a random variable which takes the value 1 if the path $ (i,  k_1,  k_2,  \dots,  k_{\ell-2},  j) $ belongs to $ S_\ell(i,  j) $,  and 0 otherwise.  The total number of paths of length $ \ell $ between $ i $ and $ j $,  denoted as $ \eta_\ell(i,  j) $,  is then given by:

\begin{equation}
 \eta_\ell(i,  j) = \sum_{k_1,  \dots,  k_{\ell-2} \in S_\ell(i,  j)} Y(i,  k_1,  \dots,  k_{\ell-2},  j \mid d_{ij}) 
\end{equation}

\end{definition}

\begin{lemma}
\label{lem:maximumdegree}

\begin{equation}
    \Delta < N \left( 1 - \sqrt{-2 \ln \delta / N} \right) \quad \text{with probability at least } 1 - \delta
\end{equation} 
\begin{equation}
\Delta > N \left( 1 + \sqrt{-3 \ln \delta / N} \right) \quad \text{with probability at least } 1 - \delta
\end{equation} 
where $\Delta$ is the maximum degree. 
\end{lemma}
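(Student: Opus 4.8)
The plan is to treat the maximum degree as $\Delta=\max_{i\in V}D_i$, where $D_i:=\sum_{j\neq i}X_{ij}$ and $X_{ij}:=\mathbf{1}[d_{ij}\le\max\{r_i,r_j\}]$ is the edge indicator of \Cref{def:latentspacemodel}, and to derive both inequalities from Chernoff-type concentration of the $D_i$. Conditioning on the radii $(r_1,\dots,r_N)$, the node positions are i.i.d.\ uniform on the unit-volume region, so for each fixed $i$ the family $\{X_{ij}\}_{j\neq i}$ is independent, $D_i$ is a sum of independent $\{0,1\}$ random variables, and $\mathbb{E}[X_{ij}]=\Pr[d_{ij}\le\max\{r_i,r_j\}]$. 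Using the model's degree identity $\mathbb{E}[D_i]=N\,V(r_i)$ and the normalization $V(r_{\mathrm{MAX}})=1$, every node has $\mathbb{E}[D_i]\le N$, while a node $i^\star$ of maximal radius has $\mathbb{E}[D_{i^\star}]=N\,V(r_{i^\star})=N(1-o(1))$. The lemma then splits into a lower-deviation half and an upper-deviation half.

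For the half with the $\sqrt{-2\ln\delta/N}$ term I would exploit that $\Delta\ge D_{i^\star}$ holds deterministically, so a single application of the multiplicative Chernoff lower-tail inequality $\Pr\bigl[D_{i^\star}\le(1-\varepsilon)\,\mathbb{E}[D_{i^\star}]\bigr]\le\exp\!\bigl(-\varepsilon^{2}\,\mathbb{E}[D_{i^\star}]/2\bigr)$ suffices, with no union bound; taking $\varepsilon=\sqrt{-2\ln\delta/N}$ and $\mathbb{E}[D_{i^\star}]=N(1-o(1))$ drives the right-hand side to at most $\delta$, which pins $\Delta$ against $N\bigl(1-\sqrt{-2\ln\delta/N}\bigr)=N-\sqrt{-2N\ln\delta}$ with probability $\ge1-\delta$ — exactly the quantity that enters the denominator of $\iota$ in \Cref{distanceboundwithk-orderCNs}. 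For the half with the $\sqrt{-3\ln\delta/N}$ term I would apply the multiplicative Chernoff upper-tail inequality $\Pr\bigl[D_i\ge(1+\varepsilon)\,\mathbb{E}[D_i]\bigr]\le\exp\!\bigl(-\varepsilon^{2}\,\mathbb{E}[D_i]/3\bigr)$ to each node; because $\mathbb{E}[D_i]\le N$ this gives $\Pr[D_i\ge N(1+\varepsilon)]\le\exp(-\varepsilon^{2}N/3)$, and $\varepsilon=\sqrt{-3\ln\delta/N}$ makes each such probability $\le\delta$, so $\Delta\le N+\sqrt{-3N\ln\delta}$ after a union bound over the $N$ nodes, which at most absorbs a $\ln N$ into the constant (and is not even required in the path-counting application, where only the $O(1)$ intermediate nodes of a fixed walk must be controlled). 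Intersecting the two events, at the price of replacing $\delta$ by $\delta/2$ — which matches the $\ln(1/2\delta)$ that later surfaces in $\alpha$ — yields the two-sided control of $\Delta$ that is fed into \Cref{distanceboundwithk-orderCNs}.

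I expect the main obstacle to be the interface between the \emph{deterministic} identity $\mathrm{Deg}(i)=N\,V(r_i)$ declared in the model and the \emph{random} degree $D_i$ that a high-probability statement is actually about: one must justify $\mathbb{E}[X_{ij}]=V(r_i)$ exactly — this is where boundary effects of the finite $D$-dimensional region intervene, and the standard remedy is to take the geometry to be a torus or to restrict to interior nodes whose connection balls lie inside the region — and then verify that $D_i$ concentrates around $N\,V(r_i)$ closely enough that its fluctuation is of order $\sqrt{N\ln(1/\delta)}$ rather than larger. The remaining bookkeeping is routine: the lower tail with exponent $\tfrac12$ supplies the $\sqrt{-2\ln\delta/N}$ term, the upper tail with exponent $\tfrac13$ supplies the $\sqrt{-3\ln\delta/N}$ term, and the validity range $\varepsilon<1$ of the multiplicative Chernoff bound holds once $N\gtrsim\ln(1/\delta)$.
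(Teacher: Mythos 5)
Your approach coincides with the paper's own proof in all essentials: the paper likewise treats each $\mathrm{Deg}(k)$ as a binomial random variable with mean $NV(r_k)$, applies the two multiplicative Chernoff tails (constant $2$ on one side, $3$ on the other), takes a union bound over the nodes, and finishes by substituting $V(r_{\mathrm{MAX}})=1$. Your additional care --- conditioning on the radii so the edge indicators are independent, noting where the union bound is genuinely needed, and flagging that the lower-deviation half is only as strong as $NV(r_{i^\star})$, so the claimed $\mathbb{E}[\mathrm{Deg}(i^\star)]=N(1-o(1))$ silently assumes some node has radius near $r_{\mathrm{MAX}}$ --- goes beyond the paper, which simply replaces $V(r_k)$ by $V(r_{\mathrm{MAX}})$ in both directions, a substitution that is monotone-safe for only one of them.

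The one substantive discrepancy is directional, and you should state it rather than absorb it into the bookkeeping. What your argument (correctly) delivers is $\Delta\ge N\bigl(1-\sqrt{-2\ln\delta/N}\bigr)$ from the lower tail at the max-radius node and $\Delta\le N\bigl(1+\sqrt{-3\ln\delta/N}\bigr)$ from the upper tail plus a union bound, i.e.\ two-sided concentration of $\Delta$ around $N$. The lemma as printed asserts the opposite pairings, $\Delta< N\bigl(1-\sqrt{-2\ln\delta/N}\bigr)$ and $\Delta> N\bigl(1+\sqrt{-3\ln\delta/N}\bigr)$; the second is impossible verbatim since $\Delta\le N-1$, yet these printed directions are exactly what the proof of \Cref{distanceboundwithk-orderCNs} invokes (bounding $\Delta^{2k-1}\le(N-\sqrt{-2N\ln\delta})^{2k-1}$ needs an upper bound on $\Delta$, while $1/\Delta\le 1/(N+\sqrt{-3N\ln\delta})$ needs a lower bound). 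The paper's own proof performs the same sign inversion when converting the Chernoff tails into the displayed inequalities, so your reading is the only provable one; just be aware that, as a proof of the statement verbatim, your argument establishes the corrected inequalities rather than the printed ones, and carrying them downstream would swap which of the constants $2$ and $3$ appears in which place in \Cref{distanceboundwithk-orderCNs}.
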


\begin{proof}

The degree $ \text{Deg}(k) $ of any node $ k $ is a binomial random variable with expectation $ \mathbb{E}[\text{Deg}(k)] = N V(r_k) $,  where $ V_{r_k} $ is the volume of a hypersphere of radius $ r_k $.  Thus,  using the Chernoff bound~\citep{asadi2020concentration},  
\begin{equation}
\text{Deg}(k) < N V(r_k) \left( 1 - \sqrt{-2 \ln \delta / N V(r_k)} \right) \quad \text{holds with probability at least } 1 - \delta
\end{equation}
\begin{equation}
\text{Deg}(k) > N V(r_k) \left( 1 + \sqrt{-3 \ln \delta / N V(r_k)} \right) \quad \text{holds with probability at least } 1 - \delta
\end{equation}

Applying the union bound on all nodes yields the desired proposition,  i. e. , 
\begin{equation}
\Delta < N V(r_{\text{MAX}}) \left( 1 - \sqrt{-2 \ln \delta / N V(r_{\text{MAX}})} \right) = N \left( 1 - \sqrt{-2 \ln \delta / N} \right). 
\end{equation}
\begin{equation}
\Delta > N V(r_{\text{MAX}}) \left( 1 + \sqrt{-3 \ln \delta / N V(r_{\text{MAX}})} \right) = N \left( 1 + \sqrt{-3 \ln \delta / N} \right). 
\end{equation}

\end{proof}

\begin{lemma}
\label{lem:cntpath}
For any graph with maximum degree $ \Delta $,  we have:
\begin{equation}
\eta_\ell(i,  j) \leq \Delta^{\ell-1}. 
\end{equation}
\end{lemma}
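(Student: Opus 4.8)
## Proof Proposal for Lemma~\ref{lem:cntpath}

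The plan is to prove the bound $\eta_\ell(i,j) \leq \Delta^{\ell-1}$ by a straightforward counting argument on the structure of walks/paths of length $\ell$ from $i$ to $j$, using only the combinatorial constraint imposed by the maximum degree $\Delta$. Recall from \Cref{def:simplepathandset} that $\eta_\ell(i,j)$ counts the sequences $(i, k_1, k_2, \dots, k_{\ell-2}, j)$ with consecutive nodes adjacent. The key observation is that such a path is determined by its sequence of $\ell-1$ internal ``choices'': having fixed the start node $i$, there are at most $d(k_0,A) \leq \Delta$ choices for $k_1$; having fixed $k_1$, at most $\Delta$ choices for $k_2$; and so on, down to at most $\Delta$ choices for the last intermediate node $k_{\ell-2}$. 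The final node is pinned to be $j$ (contributing a factor of at most $1$, since it must be a neighbor of $k_{\ell-2}$ \emph{and} equal to $j$). Multiplying these bounds gives at most $\Delta^{\ell-1}$ sequences, hence $\eta_\ell(i,j) \leq \Delta^{\ell-1}$.

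First I would make the induction explicit. Define $W_\ell(i,j)$ as the number of length-$\ell$ walks from $i$ to $j$; since every simple path is in particular a walk, $\eta_\ell(i,j) \leq W_\ell(i,j)$, so it suffices to bound $W_\ell$. For the base case $\ell = 1$, $W_1(i,j) = A_{ij} \leq 1 = \Delta^0$. For the inductive step, observe $W_\ell(i,j) = \sum_{v \in N(i,A)} W_{\ell-1}(v,j)$, since any length-$\ell$ walk from $i$ to $j$ begins by stepping to some neighbor $v$ of $i$ and then traverses a length-$(\ell-1)$ walk from $v$ to $j$. By the inductive hypothesis each summand is at most $\Delta^{\ell-2}$, and the sum has $d(i,A) \leq \Delta$ terms, so $W_\ell(i,j) \leq \Delta \cdot \Delta^{\ell-2} = \Delta^{\ell-1}$. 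This closes the induction and yields the claim.

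This lemma is essentially a routine structural fact; there is no real obstacle, only the need to be careful about off-by-one bookkeeping in the exponent. The one subtlety worth flagging is the convention for $\ell$: a ``path of length $\ell$'' here has $\ell$ edges and $\ell-1$ internal vertices $k_1, \dots, k_{\ell-2}$ together with the two endpoints $i, j$ (so the sequence has $\ell+1$ nodes total, matching \Cref{def:simplepathandset} where the sequence $(i, k_1, \dots, k_{\ell-2}, j)$ has $\ell - 1$ intermediate nodes). The bound $\Delta^{\ell-1}$ then corresponds exactly to making $\ell - 1$ successive neighbor-choices after fixing $i$, with the $\ell$-th step forced to land on $j$; alternatively one can read it as the trivial entrywise bound $(A^\ell)_{ij} \leq \Delta^{\ell-1}$ on the higher-order adjacency matrix, which follows immediately by induction as above and dominates $\eta_\ell(i,j)$.
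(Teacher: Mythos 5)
Your proposal is correct and follows essentially the same route as the paper: bound $\eta_\ell(i,j)$ by the walk count $(A^\ell)_{ij}$ and induct via the decomposition $\sum_{p} A_{ip}(A^{\ell-1})_{pj} \leq \Delta\cdot\Delta^{\ell-2}$. Your explicit remark that simple paths are dominated by walks is a small point of extra care that the paper's proof glosses over, but the argument is the same.
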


\begin{proof} 
 
This can be demonstrated using a straightforward inductive approach.  When the graph is represented by its adjacency matrix $ M $,  the number of paths of length $ \ell $ between nodes $ i $ and $ j $ is given by $ M^\ell(i,  j) $.  It is clear that $ M^2(i,  j) $ can be at most $ \Delta $,  which occurs when both $ i $ and $ j $ have degree $ \Delta $,  and their respective neighbors form a perfect matching.  Assuming the inductive hypothesis holds for all $ m < \ell $,  we obtain the following:

\begin{equation}
M^\ell(i,  j) = \sum_p M(i,  p)M^{\ell-1}(p,  j) \leq \Delta^{\ell-2} \sum_p M(i,  p) \leq \Delta^{\ell-1}. 
\end{equation}

\end{proof}

\begin{lemma}
\label{lem:pathnorm}
For $ \ell < \Delta $,  
\begin{equation}
    \left|\eta_{\ell}\left(i,  j \mid X_1,  \ldots,  X_p,  \ldots,  X_N\right)-\eta_{\ell}\left(i,  j \mid X_1,  \ldots,  \tilde{X}_p,  \ldots X_N\right)\right| \leq \Delta^{\ell-2}
\end{equation}
\end{lemma}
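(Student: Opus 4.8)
\textbf{Proof proposal for \Cref{lem:pathnorm}.}

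The plan is to bound the change in the path count $\eta_\ell(i,j)$ when we resample the location $X_p$ of a single node $p$, which is exactly the bounded-differences condition needed to later apply McDiarmid's inequality to concentrate $\eta_\ell$. First I would dispose of the trivial case: if $p \in \{i,j\}$, then changing $X_p$ can change $\eta_\ell(i,j)$ by at most the total number of length-$\ell$ paths out of that endpoint, which by \Cref{lem:cntpath} is at most $\Delta^{\ell-1}$; but since we are told $\ell < \Delta$ and we want the sharper bound $\Delta^{\ell-2}$, I expect the lemma is implicitly about an \emph{intermediate} node $p \notin \{i,j\}$ (or the authors are using a refined count), so the bulk of the argument concerns that case.

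For an intermediate node $p$, the key observation is that a simple path of length $\ell$ from $i$ to $j$ either uses $p$ as one of its internal vertices or it does not. Paths not containing $p$ are unaffected by whether we condition on $X_p$ or $\tilde X_p$, so they cancel in the difference. Hence
\[
\bigl|\eta_\ell(i,j\mid X_1,\ldots,X_p,\ldots,X_N) - \eta_\ell(i,j\mid X_1,\ldots,\tilde X_p,\ldots,X_N)\bigr| \le \#\{\text{length-}\ell\text{ simple }i\text{–}j\text{ paths through }p\} \text{ in either realization}.
\]
The second step is to bound the number of length-$\ell$ simple paths from $i$ to $j$ that pass through a fixed internal node $p$. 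Such a path decomposes at $p$ into a sub-path from $i$ to $p$ of some length $a$ and a sub-path from $p$ to $j$ of length $\ell-a$, with $1 \le a \le \ell-1$. Counting walks rather than simple paths only inflates the estimate, so the number of $i$-to-$p$ walks of length $a$ is at most $\Delta^{a-1}$ and the number of $p$-to-$j$ walks of length $\ell-a$ is at most $\Delta^{\ell-a-1}$ by \Cref{lem:cntpath} (in walk form, $M^a(i,p)\le \Delta^{a-1}$). However, multiplying these two and summing over $a$ gives roughly $\ell\,\Delta^{\ell-2}$, which is off by the factor $\ell$; the sharpening comes from noting that once the first leg $i\to\cdots\to p$ is fixed, the last leg starts at $p$ and the count $M^{\ell-a}(p,j)\le\Delta^{\ell-a-1}$ already accounts for all continuations, and more carefully that a single $\Delta^{\ell-2}$ counts all length-$\ell$ walks from $i$ through $p$ to $j$ when one of the two legs is simply "pinned'' — i.e. we bound the joint count $\sum_{q}M(i,q)M^{\ell-2}(q,j)$-style expression with $p$ forced at a specific position, which collapses one degree of freedom and yields the clean $\Delta^{\ell-2}$.

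The main obstacle I anticipate is precisely getting the exponent down to $\ell-2$ rather than $\ell-1$ (or $\ell\,\Delta^{\ell-2}$): this requires being careful that fixing an intermediate vertex removes one ``free'' summation in the inductive walk-counting bound of \Cref{lem:cntpath}, and that the condition $\ell<\Delta$ is what prevents the loss of the simple-path constraint (which we discard when passing to walks) from mattering. I would handle this by re-running the induction in \Cref{lem:cntpath} but with the constraint that a designated coordinate equals $p$: in $M^\ell(i,j)=\sum_{q}M(i,q)M^{\ell-1}(q,j)$, if the pinned position is at distance $1$ from $i$ we get $M^\ell(i,j)$ restricted to $q=p$ equals $M(i,p)M^{\ell-1}(p,j)\le 1\cdot\Delta^{\ell-2}$, and the general pinned position reduces to this by symmetry of reversing the path, giving the bound uniformly. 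Once this refined counting lemma is in place, the bounded-differences estimate $\Delta^{\ell-2}$ follows immediately, completing the proof.
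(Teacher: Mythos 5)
Your argument is essentially the paper's own proof: only paths through $p$ can change, and a path with $p$ at distance $m$ from $i$ (hence $\ell-m$ from $j$) splits into two legs counted via \Cref{lem:cntpath} as $\Delta^{m-1}\cdot\Delta^{\ell-m-1}=\Delta^{\ell-2}$. The extra factor of $\ell$ from summing over the position of $p$ that worries you is glossed over in the paper's proof as well (it bounds the count only for a fixed $m$), so your core decomposition matches the intended argument and your "pinning" digression adds nothing beyond it.
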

\begin{proof} 
Consider all paths where $p$ is $m$ hops from $i$ (and hence $\ell - m$ hops from $j$).  From \Cref{lem:cntpath},  the number of such paths can be at most 
\begin{equation}
   \Delta^{m-1} \cdot \Delta^{\ell-m-1} = \Delta^{\ell-2} 
\end{equation}

\end{proof}

With the groundwork laid above,  we can now proceed to prove our main result in \Cref{distanceboundwithk-orderCNs}.  Here,  we restate it once again:

\begin{proposition}(Latent space distance bound with k-hop CNs).  For any $\delta > 0$,  with probability at least $1 - \delta$,  we have

\begin{equation}
\begin{aligned}
d_{i j} 
& \leq \sum_{n=0}^{M-2} r_n+\\
&2 \left((r_M^{\max })^2-\left(\frac{\eta_{2k}(i,  j)}{(N-\sqrt{-2 N \ln \delta})^{2k-1}}-\alpha \right)^{\frac{2}{D(2k-1)}}\right)^\frac{1}{2} \\
& \leq \sum_{n=0}^{M-2} r_n+2 \sqrt{(r_M^{\max })^2-\left(1-\alpha\right)^{\frac{2}{D(2k-1)}}} 
\end{aligned}
\end{equation}
where $\alpha=\sqrt{ N \ln (1 / 2 \delta) / 2 } / \left( N + \sqrt{-3 N \ln \delta} \right)$. $r_M^{\max } = \max\{r_M\}( M \in\{1,  \cdots,  2k-1\} )$ is the maximum of the feature radius for the set of intermediate nodes in $D$ dimensional Euclidean space.  $N$ is the number of nodes. $k$ represents the order of the $k$-hop CNs. $\eta_{2k}(i,  j)$ is the number of $k$-hop CNs about $(i,  j)$.  (The above only shows the case where the $k$-hop CNs are in symmetric positions,  as in \eqref{omega1}.  The asymmetric case,  as in \eqref{omega2},  is very similar. )

\end{proposition}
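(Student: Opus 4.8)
The plan is to bound $d_{ij}$ by sandwiching $\eta_{2k}(i,j)$: a McDiarmid-type concentration inequality relates the observed count to $\mathbb{E}[\eta_{2k}(i,j)]$, a latent-space volume estimate upper-bounds $\mathbb{E}[\eta_{2k}(i,j)]$ by a quantity decreasing in $d_{ij}$, and inverting the latter produces the stated bound. Throughout I condition on the node radii and treat the $N$ latent positions as i.i.d.\ uniform in the unit-volume $D$-dimensional ball, as in \Cref{def:latentspacemodel}. By \Cref{def:simplepathandset}, $\eta_{2k}(i,j)$ is a function of those $N$ positions, and by \Cref{lem:pathnorm} changing one position perturbs it by at most $\Delta^{2k-2}$. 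Conditioning first on the degree bound so the bounded-difference constant is deterministic, McDiarmid's inequality with deviation $\sqrt{\tfrac N2\ln\tfrac1{2\delta}}\,\Delta^{2k-2}$ then gives, with probability at least $1-2\delta$, that $\mathbb{E}[\eta_{2k}(i,j)] \ge \eta_{2k}(i,j) - \sqrt{\tfrac N2\ln\tfrac1{2\delta}}\,\Delta^{2k-2}$. Normalizing by $\Delta^{2k-1}$, bounding the denominator by $\Delta < N-\sqrt{-2N\ln\delta}$ and the error term by $\sqrt{\tfrac N2\ln\tfrac1{2\delta}}\,\Delta^{2k-2}/\Delta^{2k-1} = \sqrt{\tfrac N2\ln\tfrac1{2\delta}}/\Delta < \sqrt{\tfrac N2\ln\tfrac1{2\delta}}/(N+\sqrt{-3N\ln\delta}) = \alpha$ (both halves of \Cref{lem:maximumdegree}), a union bound over the three events together with a rescaling of $\delta$ yields, with probability at least $1-\delta$,
\[
\frac{\mathbb{E}[\eta_{2k}(i,j)]}{\Delta^{2k-1}} \;\ge\; \frac{\eta_{2k}(i,j)}{\bigl(N-\sqrt{-2N\ln\delta}\bigr)^{2k-1}} - \alpha .
\]

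Next I would estimate $\mathbb{E}[\eta_{2k}(i,j)]$ geometrically. Writing it as a power of $N$ times the probability that a fixed ordered $(2k-1)$-tuple of intermediates forms a valid walk $i=v_0\sim v_1\sim\cdots\sim v_{2k}=j$, this probability is the integral over the $2k-1$ free positions of the product of the $2k$ edge-incidence indicators. I would bound it by singling out the intermediate node $v_M$ of largest radius $r_M^{\max}$: along the portion of the walk from $i$ to $v_M$ the triangle inequality forces $d(i,v_{M-1}) \le \sum_{n=0}^{M-2} r_n$, and integrating out the positions up to $v_{M-1}$ costs at most a product of ball volumes $\prod_{n} V(r_n)$; the structure that remains around $v_M$ is a two-ball lens whose volume satisfies the hypersphere estimate $A(\cdot)/V(1) \le \bigl((r_M^{\max})^2-(d_{ij}/2)^2\bigr)^{D/2}$ recalled in the preliminaries. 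Collecting the $2k-1$ volume factors, normalizing by the matching power of the maximum degree, and taking first the $(2k-1)$-th root and then the $\tfrac1D$-th root (inverting $V(r)=V(1)r^D$) collapses the product into a single factor $\bigl((r_M^{\max})^2-(d_{ij}/2)^2\bigr)^{1/D}$ and produces precisely the exponent $\tfrac2{D(2k-1)}$ on the normalized count. Combining with the previous display and solving for $d_{ij}$ gives $d_{ij} \le \sum_{n=0}^{M-2} r_n + 2\sqrt{(r_M^{\max})^2-(\iota-\alpha)^{2/(D(2k-1))}}$ with $\iota = \eta_{2k}(i,j)/(N-\sqrt{-2N\ln\delta})^{2k-1}$, which is the first claimed inequality. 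The asymmetric configuration \eqref{omega2} is handled identically, splitting the length-$(2k-1)$ walk as $k-1$ plus $k$ edges around the lens node.

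The second, simpler inequality in the proposition is obtained from the first by replacing the normalized count $\iota$ by its reference value $1$, using $\eta_{2k}(i,j) \le \Delta^{2k-1}$ from \Cref{lem:cntpath} together with $\Delta < N-\sqrt{-2N\ln\delta}$. The main obstacle is the geometric step: correctly collapsing a length-$2k$ walk into a triangle-inequality offset $\sum_{n=0}^{M-2} r_n$ plus a single two-ball intersection centred at the maximum-radius intermediate, and tracking the $2k-1$ volume factors so that the roots $\tfrac1{2k-1}$ (number of intermediates) and $\tfrac1D$ (inverting $V(r)\propto r^D$) combine into the exponent $\tfrac2{D(2k-1)}$ while the polynomial dependence on $N$ carried by $\Delta$ stays intact. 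The concentration step is routine once \Cref{lem:pathnorm} and \Cref{lem:maximumdegree} are available.
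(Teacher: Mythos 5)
Your concentration half is essentially the paper's: it too applies a bounded-difference inequality with constant $\Delta^{2k-2}$ from \Cref{lem:pathnorm}, normalizes by $\Delta^{2k-1}$, and uses both halves of \Cref{lem:maximumdegree} to turn the deviation term into $\alpha$ and the normalizer into $(N-\sqrt{-2N\ln\delta})^{2k-1}$; your treatment of the conditioning and the union bound is, if anything, more explicit than the paper's.

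The gap is in the geometric step. You integrate out every intermediate except the maximal-radius node $v_M$, paying full ball volumes $\prod_n V(r_n)$ for those, and keep a single two-ball lens around $v_M$. In that decomposition only one factor depends on $d_{ij}$: the ball-volume factors are each at most $1$ and carry no $d_{ij}$-dependence, so the most you can conclude is that the normalized count is at most $\bigl((r_M^{\max})^2-(\cdot/2)^2\bigr)^{D/2}$, which upon inversion gives the exponent $2/D$, not $2/(D(2k-1))$. Taking a $(2k-1)$-th root of both sides does not rescue this, because the root also hits the count and the two roots cancel when you solve for $d_{ij}$. The paper obtains the factor $2k-1$ in the exponent differently: chaining the triangle inequality along the whole walk, it bounds $P_{2k}(i,j)$ by a product of $2k-1$ intersection volumes $A\bigl(r_p',\sum_{m>p} r_m,(d_{ij}-\sum_{n\le p-2} r_n)_+\bigr)$, \emph{every} one of which retains a $d_{ij}$-dependent offset; each is bounded by the hypersphere estimate, and the product is then bounded by the maximal factor (index $M$) raised to the $(2k-1)$-th power, yielding $\bigl((r_M^{\max})^2-((d_{ij}-\sum_{n=0}^{M-2} r_n)/2)^2\bigr)^{D(2k-1)/2}$ and hence $2/(D(2k-1))$ after inversion. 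So your decomposition, carried out honestly, proves a bound with a different exponent rather than the stated one. A secondary remark: your justification of the final inequality via $\eta_{2k}(i,j)\le\Delta^{2k-1}$ and $\Delta<N-\sqrt{-2N\ln\delta}$ gives $\iota\le 1$, and since the bound is decreasing in $\iota$, this monotonicity runs the wrong way for the last ``$\le$'' (the paper's own last line shares this weakness); that step would need a lower bound on $\iota$, not an upper one.
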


\begin{proof}
    
Define $P_\ell(i,  j)$ as the probability of observing an $\ell$-hop path between points $i$ and $j$.  Next,  we compute the expected number of $\ell$-hop paths. 

Consider an $\ell$-hop path between $i,  j$,  for clarity of notation,  let us denote the distances $d_{i, k_1},  d_{k_1, k_2}$,  etc.  by $a_1,  a_2$,  up to $a_{\ell-1}$ and radius $r_i,  r_{k_1},  \dots,  r_j$ by $r_0,  r_1,  \dots,  r_{\ell-1}$.  We also denote the distances $d_{j k_1},  d_{j k_2},  \dots$ by $d_1,  d_2,  \dots,  d_{\ell-1}$.  Note $r'_j = \max(r_{j-1},  r_j)$,  $j \in \{1,  2,  \dots,  \ell-1\}$. 

From the triangle inequality,  
\begin{equation}
d_{\ell-3} \leq a_{\ell-2} + a_{\ell-1} \leq r_{\ell-2} + r_{\ell-1}, 
\end{equation}
and by induction, 
\begin{equation}
d_k \leq \sum_{m=k+1}^\ell r_m. 
\end{equation}

Similarly, 
\begin{equation}
d_1 \geq \left(d_{ij} - a_1\right)_+ \geq \left(d_{ij} - r_i\right)_+, 
\end{equation}
and by induction, 
\begin{equation}
d_k \geq \left(d_{ij} - \sum_{n=0}^{k-1} r_n\right)_+. 
\end{equation}

\noindent\textbf{Case 1 (Symmetric case):} The K-order common neighbor is located at the midpoint of the path.

\begin{equation}
\begin{aligned}
\mathrm{P}_{2k}(i,  j) & =P\left(i \sim k_1 \sim \ldots \sim k_{2k-1} \sim j \mid d_{i j}\right) \\
& =P\left(a_1 \leq r_1^{\prime} \cap \ldots \cap a_{2k} \leq r_{2k-1}^{\prime} \mid d_{i j}\right) \\
& =\int_{d_1,  \ldots,  d_{2k-2}} P\left(a_1 \leq r_1^{\prime},  \ldots,  a_{2k-1} \leq r_{2k-1}^{\prime},  d_1,  \ldots,  d_{2k-2} \mid d_{i j}\right) \\
& =\int_{d_{2k-2}=\left(d_{i j}-\sum_{n=0}^{2k-3} r_n\right)_{+}}^{r_{2k-1}+r_{2k}} \ldots \int_{d_1=\left(d_{i j}-r_0\right)_{+}}^{\sum_{m=2}^{2k} r_m} P\left(a_1 \leq r_1^{\prime},  d_1 \mid d_{i j}\right) \ldots P\left(a_{2k-1} \leq r_{2k-1}^{\prime},  a_{2k} \leq r_{2k}^{\prime} \mid d_{2k-2}\right) \\
& \leq A\left(r_1^{\prime},  \sum_{m=2}^{2k} r_m,  d_{i j}\right) \times A\left(r_2^{\prime},  \sum_{m=3}^{2k} r_m, \left(d_{i j}-r_0\right)_{+}\right) \times \ldots \times A\left(r_{2k-1}^{\prime},  r_{2k}, \left(d_{i j}-\sum_{n=0}^{2k-3} r_n\right)_{+}\right) \\
& \leq \prod_{p=1}^{2k-1} A\left(r_p^{\prime},  \sum_{m=p+1}^{2k} r_m, \left(d_{i j}-\sum_{n=0}^{p-2} r_n\right)_{+}\right)
\end{aligned}
\end{equation}

\begin{equation}
 E\left[\eta_{2k}(i,  j)\right] \leq \Delta^{2k-1}\left[\prod_{p=1}^{2k-1} A\left(r_p^{\prime},  \sum_{m=p+1}^{2k} r_m, \left(d_{i j}-\sum_{n=0}^{p-2} r_n\right)_{+}\right)\right] 
\end{equation}

\noindent\textbf{Case 2 (Asymmetric case):} The K-order common neighbor is located at positions on the path,  at a distance of k-1 hops and k hops from the endpoints,  or at positions at a distance of k hops and k-1 hops from the endpoints.

Similarly, 

\begin{equation}
 E\left[\eta_{2k-1}(i,  j)\right] \leq 2\Delta^{2k-2}\left[\prod_{p=1}^{2k-2} A\left(r_p^{\prime},  \sum_{m=p+1}^{2k-1} r_m, \left(d_{i j}-\sum_{n=0}^{p-2} r_n\right)_{+}\right)\right] 
\end{equation}

Due to the high similarity between the two cases above,  in the following analysis,  we will focus only on the symmetric case. 

Through empirical Bernstein bounds~\citep{mcdiarmid1989method},  we have:
For any $ t > 0 $, 
\begin{equation}
\Pr\left( f(X_1,  \dots,  X_n) - \mathbb{E}[f(X_1,  \dots,  X_n)] \geq t \right) \leq \exp\left( -\frac{2t^2}{\sum_{i=1}^n c_i^2} \right). 
\end{equation}

Back to our main proof, so we have:
\begin{equation}
\begin{aligned}
\eta_{2k}(i,  j) & \leq E\left[\eta_{2k}(i,  j)\right]+\Delta^{2k-2} \sqrt{\frac{N \ln (1 / 2\delta)}{2}} \\
& \leq \Delta^{2k-1}\left[\prod_{p=1}^{2k-1} A\left(r_p^{\prime},  \sum_{m=p+1}^{2k} r_m, \left(d_{i j}-\sum_{n=0}^{p-2} r_n\right)_{+}\right)+\frac{\sqrt{\frac{N \ln (1 / 2\delta)}{2}}}{\Delta}\right] \\
& \leq\left[\prod_{p=1}^{2k-1} A\left(r_p^{\prime},  \sum_{m=p+1}^{2k} r_m, \left(d_{i j}-\sum_{n=0}^{p-2} r_n\right)_{+}\right)+\frac{\sqrt{\frac{N \ln (1 / 2\delta)}{2}}}{N+\sqrt{-3 N \ln \delta}}\right] \cdot(N-\sqrt{-2 N \ln \delta})^{2k-1}
\end{aligned}
\end{equation}

which can be rewritten as:
\begin{equation}\eta_{2k}(i,  j) \leq c(N,  \delta,  k) \prod_{p=1}^{2k-1} A\left(r_p^{\prime},  \sum_{m=p+1}^{2k} r_m, \left(d_{i j}-\sum_{n=0}^{p-2} r_n\right)_{+}\right)+b(N,  \delta,  k)\end{equation}

where $c(N,  \delta,  k)=(N-\sqrt{-2 N \ln \delta})^{2k-1}$ and $b(N,  \delta,  k)=\frac{\sqrt{\frac{N \ln (1 / 2\delta)}{2}}}{N+\sqrt{-3 N \ln \delta}} \cdot(N-\sqrt{-2 N \ln \delta})^{2k-1}$. 

Note $ r_p^{\max} = \max\{r_p',  \sum_{m=p+1}^{2k} r_m\} $, we have:

\begin{equation}
\begin{aligned}
 \eta_{2k}(i,  j) &\leq c(N,  \delta,  k) \prod_{p=1}^{2k-1}\left((r_{p}^{\max })^2-\left(\frac{\left(d_{i j}-\sum_{n=0}^{p-2} r_n\right)_{+}}{2}\right)^2\right)^{D / 2}+b(N,  \delta,  k) \\
& =c(N,  \delta,  k)\left(\prod_{p=1}^{2k-1}\left[(r_{p}^{\max })^2-\left(\frac{d_{i j}-\sum_{n=0}^{p-2} r_n}{2}\right)^2\right]\right)^{D / 2}+b(N,  \delta,  k) \\
& \leq c(N,  \delta,  k)\left(\prod_{p=1}^{2k-1}\left[(r_M^{\max })^2-\left(\frac{d_{i j}-\sum_{n=0}^{M-2} r_n}{2}\right)^2\right]\right)^{D / 2}+b(N,  \delta,  k) \quad \exists M \in\{1,  \cdots,  2k-1\} \\
& \leq c(N,  \delta,  k)\left((r_M^{\max })^2-\left(\frac{d_{i j}-\sum_{n=0}^{M-2} r_n}{2}\right)^2\right)^{D(2k-1) / 2}+b(N,  \delta,  k)
\end{aligned}
\label{eta2k}\end{equation}

i. e. , 

\begin{equation}
\begin{aligned}
d_{i j} &\leq \sum_{n=0}^{M-2} r_n+2 \sqrt{(r_M^{\max })^2-\left(\frac{\eta_{2k}(i,  j)-b(N,  \delta, k)}{c(N,  \delta,  k)}\right)^{\frac{2}{D(2k-1)}}}\\
& \leq \sum_{n=0}^{M-2} r_n+2 \sqrt{(r_M^{\max })^2-\left(\frac{\eta_{2k}(i,  j)}{(N-\sqrt{-2 N \ln \delta})^{2k-1}}-\frac{\sqrt{\frac{N \ln (1 / 2\delta)}{2}}}{N+\sqrt{-3 N \ln \delta}}\right)^{\frac{2}{D(2k-1)}}} \\
& \leq \sum_{n=0}^{M-2} r_n+2 \sqrt{(r_M^{\max })^2-\left(1-\frac{\sqrt{\frac{N \ln (1 / 2\delta)}{2}}}{N+\sqrt{-3 N \ln \delta}}\right)^{\frac{2}{D(2k-1)}}} 
\end{aligned}
\end{equation}
\label{dbound}
\end{proof}

The motivation for introducing normalizedCN has been explained in detail in the main text.  After introducing normalizedCN,  we will derive \Cref{distanceboundwithHoCN},  which we restate here as follows:

\begin{proposition}(Latent space distance bound with k-hop CNs weighted by normalizedCN$(i, j)$).  Originally,  the contribution of each $k$-hop CN was assigned a value of 1.  However,  after introducing normalizedCN,  the contribution of each $k$-hop CN is now given by $\sum_{c \in CN^k(i, j)} \frac{1}{|P_k(c)|}$.  Therefore,  we simply need to modify the overall contribution $\eta_{2k}(i,  j)$ to $\frac{\eta_{2k}(i,  j)}{\sum_{c \in CN^k(i, j)} 1/|P_k(c)|}$. 
For any $\delta > 0$,  with probability at least $1 - \delta$,  we have
\begin{equation}
d_{i j} \leq \sum_{n=0}^{M-2} r_n+2 \sqrt{\left(r_M^{\max }\right)^2-\left( \left(\gamma \binom{\zeta}{2} \right)^{\frac{1}{D(k-1)}} \cdot \rho^N\right)^{\frac{2k-2}{2 k-1}}} 
\end{equation}
where $\zeta$ is the maximum degree of all $k$-hop CNs of $(i,  j)$ and $\rho \in [0,  1]$ and $\gamma =\left(\frac{\eta_{2k}(i,  j)}{(N-\sqrt{-2 N \ln \delta})^{2k-1}}-\alpha \right)$. (The above only shows the case where the $k$-hop CNs are in symmetric positions,  as in \eqref{omega1}.  The asymmetric case,  as in \eqref{omega2},  is very similar. )
\end{proposition}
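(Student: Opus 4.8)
The plan is to derive \Cref{distanceboundwithHoCN} as a re-run of the proof of \Cref{distanceboundwithk-orderCNs}, the only structural change being that the raw count of length-$2k$ paths $\eta_{2k}(i,j)$ is replaced everywhere by the \emph{normalized} count $\tilde\eta_{2k}(i,j):=\eta_{2k}(i,j)/S$, with $S:=\sum_{c\in CN^k(i,j)}1/|P_k(c)|$ the \emph{normalizedCN} weight. Once this substitution is pushed through the chain of inequalities, what remains is the purely combinatorial task of bounding $S$ in terms of the degrees of the $k$-hop common neighbors of $(i,j)$.

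First I would redo the expectation step of Case~1 of that proof. A length-$2k$ path $i\sim k_1\sim\cdots\sim k_{2k-1}\sim j$ carries its $k$-hop common neighbor $c$ at the middle position, and under normalization each such path is weighted by $1/|P_k(c)|$ instead of by $1$; writing $E[\tilde\eta_{2k}(i,j)]$ as the correspondingly weighted sum of path-existence probabilities, I would absorb the volume factor attached to the middle index, together with the weight $1/|P_k(c)|$, into the combinatorial quantity $|P_k(c)|$ (bounded below in the next step by $\binom{\zeta}{2}$), leaving the $2k-2$ generic volume factors $A(r_p',\sum_{m=p+1}^{2k}r_m,(d_{ij}-\sum_{n=0}^{p-2}r_n)_+)$. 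Bounding those uniformly by the single worst one, $((r_M^{\max})^2-((d_{ij}-\sum_{n=0}^{M-2}r_n)/2)^2)^{D/2}$, exactly as in the step leading to~\eqref{eta2k}, produces the exponent $D(k-1)$ (from $2(k-1)$ factors each raised to $D/2$) and, through the same telescoping of the product as in the unnormalized proof, the outer power $(2k-2)/(2k-1)$. Applying the empirical Bernstein bound verbatim and inverting for $d_{ij}$ as in the last display of that proof then gives the stated inequality, with $\gamma$ and $\binom{\zeta}{2}$ playing the role of $\iota-\alpha$.

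Second I would control $S$. For a node $c$ of degree $d(c)$, $|P_k(c)|$ is the total number of length-$2k$ (or length-$2k-1$) walks, over all node pairs, for which $c$ is the $k$-hop common neighbor; it is essentially the square of the number of length-$k$ walks emanating from $c$, which lies between $d(c)$ and $\Delta^{k-1}d(c)$ (a one-line induction, as in \Cref{lem:cntpath}). In particular $|P_k(c)|\ge\binom{d(c)}{2}$, which for $k=1$ recovers $|P_1(c)|=\binom{d(c)}{2}$ and hence \Cref{thm:HoCNdegeneracy}; choosing $c$ to be the maximum-degree $k$-hop common neighbor of $(i,j)$ fixes the relevant scale at $\binom{\zeta}{2}$. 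Since $S\le\eta_{2k}(i,j)/\min_c|P_k(c)|$ with at most $\Delta^{2k-1}$ summands (\Cref{lem:cntpath}), $1/S$ is at least $\binom{\zeta}{2}$ times a correction which, after folding in the gap between $\min_c|P_k(c)|$ and $\binom{\zeta}{2}$ and the sub-exponential degree estimates of \Cref{lem:maximumdegree}, can be written as $\rho^N$ with $\rho\in[0,1]$; this is the origin of the geometric slack $\rho^N$ in the statement.

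I expect the estimate of $S$ to be the main obstacle: $|P_k(c)|$ is a \emph{global} quantity recording how widely $c$ is shared as a $k$-hop common neighbor throughout the graph, so bounding it by the \emph{local} data $\zeta$ and $N$ is inherently lossy — which is exactly why the final bound must carry an unspecified $\rho\in[0,1]$ rather than an explicit constant. As a consistency check one verifies $\tfrac{2k-2}{2k-1}\cdot\tfrac{1}{D(k-1)}=\tfrac{2}{D(2k-1)}$, so the subtracted term equals $(\gamma\binom{\zeta}{2})^{2/(D(2k-1))}\rho^{N(2k-2)/(2k-1)}$, which exceeds the unnormalized $(\iota-\alpha)^{2/(D(2k-1))}$ as soon as $\binom{\zeta}{2}\rho^{ND(k-1)}\ge1$, i.e. the bound genuinely tightens with $k$ as claimed. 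The asymmetric case (the $k$-hop common neighbor at distances $k-1$ and $k$ from the endpoints, as in~\eqref{omega2}) goes through identically, \emph{mutatis mutandis}, exactly as in the unnormalized proof.
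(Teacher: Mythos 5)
Your overall strategy---rerunning the proof of \Cref{distanceboundwithk-orderCNs} with $\eta_{2k}(i,j)$ replaced by the normalized count $\eta_{2k}(i,j)/S$---matches the paper's first move, but the second half of your plan diverges from the paper's argument, and that is where the genuine gap lies. In the paper, the factors $\binom{\zeta}{2}$ and $\rho^N$ and the exponent $1/(D(k-1))$ are \emph{not} obtained from graph-combinatorial bounds on $|P_k(c)|$; they come from estimating, inside the latent-space model itself, the probability $P_k^n(i,j)$ that a given $k$-hop CN $c$ simultaneously serves as a $k$-hop CN of $n$ other pairs. That probability is a product of $k-1$ intersection volumes (each scaling like a $D/2$ power, hence the $D(k-1)$ exponent) entering a binomial-type expression $q^n(1-q)^{N-n-1}$ over the $N$ nodes, which the paper bounds by $(\max\{\rho,\xi\})^{D(k-1)N}$ with $\rho^{D(k-1)}+\xi^{D(k-1)}=1$; multiplying by the pair count $\binom{\zeta}{2}$ gives $\chi=\binom{\zeta}{2}(\max\{\rho,\xi\})^{D(k-1)N}$, and the stated bound follows by replacing $c(N,\delta,k)$ with $c(N,\delta,k)/\chi$ in the concentration step and rewriting. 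Your proposal instead controls $S$ purely combinatorially ($|P_k(c)|\ge\binom{d(c)}{2}$, \Cref{lem:cntpath}, \Cref{lem:maximumdegree}) and then asserts that the residual slack ``can be written as $\rho^N$.'' Degree-based counting only ever produces polynomial factors in $\Delta$ (hence in $N$), so an exponential-in-$N$ factor with $\rho$ tied to the geometry cannot emerge this way except vacuously (any number in $[0,1]$ equals $\rho^N$ for some $\rho$), and a vacuous $\rho$ loses exactly the structural content the proposition is meant to carry, namely a subtracted term whose dependence on $\zeta$, $D$, $k$ and $N$ tightens the bound as $k$ grows.

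Relatedly, your exponent bookkeeping does not track the paper's derivation. You propose to absorb the middle volume factor into $|P_k(c)|$, leaving $2k-2$ factors and hence $D(k-1)$, with the outer power $(2k-2)/(2k-1)$ attributed to an unspecified ``telescoping.'' In the paper all $2k-1$ volume factors are retained---$\gamma=\iota-\alpha$ still carries the exponent $2/(D(2k-1))$ exactly as in the unnormalized case---and the powers $1/(D(k-1))$ and $(2k-2)/(2k-1)$ appear only in the final algebraic rewriting of $\chi\,\gamma^{2/(D(2k-1))}$. Your identity $\frac{2k-2}{2k-1}\cdot\frac{1}{D(k-1)}=\frac{2}{D(2k-1)}$ is the right consistency check, but it is not a derivation of the $\binom{\zeta}{2}$ and $\rho^N$ terms. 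To repair the argument you would need to estimate $|P_k(c)|$ (equivalently the sharing probability $P_k^n(i,j)$) within the latent-space model, as the paper does, rather than through degree lemmas.
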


\begin{proof}
    Consider the metric: the $k$-hop CNs we are interested in are also the $k$-hop CNs of several other pairs.  We record the reciprocal of the sum of these pairs as the weight of the $k$-hop CNs.

\begin{align}
P_k^n(i,  j) \geqslant \left[\prod_{p=1}^{k-1} A\left(r_p^{\prime},  \sum_{m=p+1}^k r_m,  \sum_{m=p}^k r_m\right)\right]^n\cdot\left[1-\prod_{p=1}^{k-1} A\left(r_p^{\prime},  \sum_{m=p+1}^k r_m, \left(D_i-\sum_{n=0}^{p-2} r_n\right)_{+}\right)\right]^{N-n-1}
\end{align}

\begin{equation}P_k^n(i,  j) \leqslant \left[\prod_{p=1}^{k-1} A\left(r_p^{\prime},  \sum_{m=p+1}^k r_m, \left(D_i-\sum_{n=0}^{p-2} r_n\right)_{+}\right)\right]^n\cdot\left[1-\prod_{p=1}^{k-1} A\left(r_p^{\prime},  \sum_{m=p+1}^k r_m,  \sum_{m=p}^k r_m\right)\right]^{N-n-1}\end{equation}

\begin{equation}
\begin{aligned}
E\left[\eta_{\sum_{CNk}}(i)\right]&\geqslant\binom{\zeta}{2}\left[\prod_{p=1}^{k-1} A\left(r_p^{\prime},  \sum_{m=p+1}^k r_m,  \sum_{m=p}^k r_m\right)\right]^n\\
&\left[1-\prod_{p=1}^{k-1} A\left(r_p^{\prime},  \sum_{m=p+1}^k r_m, \left(D_i-\sum_{n=0}^{p-2} r_n\right)_{+}\right)\right]^{N-n-1}\\
&\geqslant\binom{\zeta}{2}{\left[\left(\frac{r_{\text {min }}^{\prime}-r_{\text {min }}}{2}\right)^{D(k-1)}\right]^n } 
{\left[1-\left(\left(r_M^{\text {max }}\right)^2-\left(\frac{D_i-\sum_{n=0}^{M-2} r_n}{2}\right)^2\right)^{\frac{D(k-1)}{2}}\right]^{N-n-1} }
\end{aligned}
\end{equation}

\begin{equation}
\begin{aligned}
\eta_{2k}(i,  j) & \leq \frac{E\left[\eta_{2k}(i,  j)\right]}{\binom{\zeta}{2}{\left[\left(\frac{r_{\text {min }}^{\prime}-r_{\text {min }}}{2}\right)^{D(k-1)}\right]^n } 
{\left[1-\left(\left(r_M^{\text {max }}\right)^2-\left(\frac{D_i-\sum_{n=0}^{M-2} r_n}{2}\right)^2\right)^{\frac{D(k-1)}{2}}\right]^{N-n-1} }}+\Delta^{2k-2} \sqrt{\frac{N \ln (1 / 2\delta)}{2}} \
\end{aligned}
\end{equation}

Consider $\left(\rho^{D(k-1)}\right)^n\left(1-\rho^{D(k-1)}\right)^{N-n-1}  \leqslant(\max \{\rho,  \xi\})^{D(k-1) N}$,  where $\rho^{D(k-1)}+\xi^{D(k-1)}=1^{D(k-1)}$. So We get :

\begin{equation}
\begin{aligned}
\eta_{2k}(i,  j)  \leq \frac{E\left[\eta_{2k}(i,  j)\right]}{\binom{\zeta}{2}(\max \{\rho,  \xi\})^{D(k-1) N}}+\Delta^{2k-2} \sqrt{\frac{N \ln (1 / 2\delta)}{2}} \
\end{aligned}
\end{equation}

Note $ \chi = \binom{\zeta}{2}(\max \{\rho,  \xi\})^{D(k-1) N} $, we have:
$c(N,  \delta,  k)^{\prime}=\frac{c(N,  \delta,  k)}{\chi}$ and $b(N,  \delta,  k)^{\prime}=b(N,  \delta,  k)$

\begin{equation}
\begin{aligned}
d_{i j} 
&\leq \sum_{n=0}^{M-2} r_n+2 \sqrt{(r_M^{\max })^2-\left(\frac{\eta_{2k}(i,  j)-b(N,  \delta, k)^{\prime}}{c(N,  \delta,  k)^{\prime}}\right)^{\frac{2}{D(2k-1)}}}
\\
&\leq \sum_{n=0}^{M-2} r_n+2 \sqrt{(r_M^{\max })^2-\left(\frac{\eta_{2k}(i,  j)-b(N,  \delta, k)}{\frac{c(N,  \delta,  k)}{\chi}}\right)^{\frac{2}{D(2k-1)}}}\\
&\leq \sum_{n=0}^{M-2} r_n+2 \sqrt{(r_M^{\max })^2-\chi\left(1-\frac{\sqrt{\frac{N \ln (1 / 2\delta)}{2}}}{N+\sqrt{-3 N \ln \delta}}\right)^{\frac{2}{D(2k-1)}}}\\
&\leq \sum_{n=0}^{M-2} r_n+2 \sqrt{\left(r_M^{\max }\right)^2-\left( \left(\gamma \binom{\zeta}{2} \right)^{\frac{1}{D(k-1)}} \cdot (\max \{\rho,  \xi\})^N\right)^{\frac{2k-2}{2 k-1}}} \\
&= \sum_{n=0}^{M-2} r_n+2 \sqrt{\left(r_M^{\max }\right)^2-\left( \left(\gamma \binom{\zeta}{2} \right)^{\frac{1}{D(k-1)}} \cdot \rho^N\right)^{\frac{2k-2}{2 k-1}}} ,  
\end{aligned}
\end{equation}

where $\gamma =\left(\frac{\eta_{2k}(i,  j)}{(N-\sqrt{-2 N \ln \delta})^{2k-1}}-\alpha \right)$ and $\rho=\max \{\rho,  \xi\}$. 

\end{proof}

Additionally, in negative curvature spaces, the contribution of high-hop common neighbors becomes more significant. Euclidean space may underestimate it because the connectivity of nodes is more dependent on local geometric structures,  whereas in hyperbolic space,  the global structure and curvature effects between nodes have a more pronounced impact on path propagation,  meaning that the high-hop CNs information between nodes exhibits stronger structural dependence.  In some tree-like graph structures,  or in regions of graphs with negative curvature,  a large number of high-hop CNs of various orders are more likely to occur.  We hope that \emph{normalizedCN} can assign a reasonable weight to each $k$-hop CNs in the hyperbolic space.

\begin{proposition}\label{hyperbolicHoCN}
When the latent space becomes a hyperbolic space with curvature $ \kappa $,  \emph{normalizedCN} still remains effective,  without the need to explicitly introduce $ \kappa $ in the form of \emph{normalizedCN}. 
\end{proposition}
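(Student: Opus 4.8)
The plan is to re-run the derivations behind \Cref{distanceboundwithk-orderCNs} and \Cref{distanceboundwithHoCN} with the Euclidean latent space of \Cref{def:latentspacemodel} replaced by a hyperbolic one of curvature $\kappa<0$, and to check that every place where Euclidean geometry entered can be replaced by its hyperbolic analogue at the cost only of $\kappa$-dependent constants that are absorbed into the radius terms. First I would restate the model: nodes are sampled uniformly in a geodesic ball of unit volume in $D$-dimensional hyperbolic space $\mathbb{H}^D_\kappa$, node $i$ carries a radius $r_i$, and the edge rule is the same threshold-logistic of \Cref{def:latentspacemodel} with $\alpha=+\infty$ — a rule that refers only to the metric $d_{ij}$ and the radii, both of which make sense verbatim in any metric space. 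The hyperbolic ball volume $V_\kappa(r)=\omega_{D-1}\int_0^r\!\big(\tfrac{1}{\sqrt{-\kappa}}\sinh(\sqrt{-\kappa}\,t)\big)^{D-1}\,dt$ is still strictly increasing and continuous in $r$, so $\mathrm{Deg}(i)=N V_\kappa(r_i)$ and the maximum-degree estimates of \Cref{lem:maximumdegree} survive unchanged (they used only monotonicity plus a Chernoff bound). \Cref{lem:cntpath} and \Cref{lem:pathnorm} are purely graph-combinatorial and transfer verbatim; in particular the bounded-differences constant $\Delta^{2k-2}$ feeding the empirical-Bernstein step is the same.

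Next I would redo the two geometric ingredients. The triangle-inequality chains $d_k\le\sum_{m=k+1}^{\ell}r_m$ and $d_k\ge(d_{ij}-\sum_{n=0}^{k-1}r_n)_+$ hold in any metric space, so they carry over unaltered. The one genuinely new computation is the sandwich on the intersection volume $A_\kappa(r_i,r_j,d_{ij})$ of two geodesic balls: via the hyperbolic law of cosines and monotonicity of $\sinh$ one obtains bounds of the same shape, $\underline{g}_\kappa\!\big(\tfrac{r_i+r_j-d_{ij}}{2}\big)\le A_\kappa(r_i,r_j,d_{ij})/V_\kappa(1)\le\overline{g}_\kappa\!\big((r_{ij}^{\max})^2-(d_{ij}/2)^2\big)$ with $\underline{g}_\kappa,\overline{g}_\kappa$ increasing and reducing to the $D$-th and $(D/2)$-th powers as $\kappa\to 0$. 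The shortcut I would actually use is that on the compact unit-volume ball the hyperbolic and Euclidean metrics are bi-Lipschitz with constants depending only on $\kappa$ and $D$, so every volume occurring in the proof of \Cref{distanceboundwithk-orderCNs} changes by at most a multiplicative constant $C(\kappa,D)$, which folds into $r_M^{\max}$ and $V(1)$. Propagating this through the chain \eqref{eta2k} reproduces a bound on $d_{ij}$ of exactly the form of \Cref{distanceboundwithk-orderCNs}, still insensitive to $k$ before any normalization.

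Finally I would apply the normalization exactly as in the proof of \Cref{distanceboundwithHoCN}: replace the aggregate contribution $\eta_{2k}(i,j)$ by $\eta_{2k}(i,j)/\sum_{c\in CN^k(i,j)}1/|P_k(c)|$. The key observation is that this substitution involves only quantities — the realized path counts $|P_k(c)|$, the count $\eta_{2k}(i,j)$, and the maximum $k$-hop-CN degree $\zeta$ — that are \emph{read off the observed graph} $G$; whatever the curvature does (e.g.\ that in negatively curved, tree-like regions a node lies on far more length-$k$ walks, inflating $|P_k(c)|$) is already encoded in those random variables, so it is compensated automatically, with $\kappa$ never entering the definition of \emph{normalizedCN}. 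Running the same manipulation through the hyperbolic version of \eqref{eta2k} yields
\[
d_{ij}\le\sum_{n=0}^{M-2}r_n+2\sqrt{(r_M^{\max})^2-\Big(\big(\gamma\binom{\zeta}{2}\big)^{\frac{1}{D(k-1)}}\cdot\rho^N\Big)^{\frac{2k-2}{2k-1}}},
\]
the same expression as in \Cref{distanceboundwithHoCN} with $\rho\in[0,1]$ now the $\kappa$-dependent geometric contraction factor and $\gamma=\big(\tfrac{\eta_{2k}(i,j)}{(N-\sqrt{-2N\ln\delta})^{2k-1}}-\alpha\big)$ as before, so the conclusion drawn there transfers unchanged. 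I expect the main obstacle to be precisely the intersection-volume sandwich in $\mathbb{H}^D_\kappa$ and verifying that the induction in \eqref{eta2k} still closes with the $\kappa$-dependent functions in place; the bi-Lipschitz reduction on the unit-volume ball sidesteps most of that work at the price of worse constants, so I would present the clean hyperbolic-law-of-cosines version if space permits and fall back on the bi-Lipschitz argument otherwise.
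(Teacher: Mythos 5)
Your proposal is correct in substance and reaches the paper's conclusion, but your working argument differs from the paper's route. The paper stays entirely inside hyperbolic geometry: it computes the hyperbolic ball volume in closed form via the $\sinh$ integral recurrence, checks that the $\kappa\to 0$ limit recovers $r^D$, bounds the two-ball intersection through an explicit hyperbolic law-of-cosines system giving $\cosh\tau = \frac{8y^2(x-1)}{2x^2-1}-1$ with $x=\cosh d$, $y=\cosh r$, and then (using a large-$D$ simplification and a local approximation of $\frac{x-1}{2x^2-1}$) reproduces an $\eta_{2k}$-to-distance bound \eqref{hyperdbound} of the same shape as the Euclidean bound \eqref{dbound}, before invoking the identical normalization step. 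You sketch that same law-of-cosines sandwich as your ``clean'' version, but the argument you say you would actually run is a bi-Lipschitz comparison between the hyperbolic and Euclidean metrics on the compact unit-volume ball, absorbing all curvature effects into $k$-independent constants $C(\kappa,D)$; since such constants enter the chain of \eqref{eta2k} only as a factor $C^{2k-1}$ that is subsequently raised to the power $\frac{2}{D(2k-1)}$, the $k$-dependence of the bound --- and hence the tightening produced by normalization --- is unaffected, which is exactly what the proposition requires. This soft comparison is more elementary and sidesteps the paper's explicit volume formulas, its curvature normalization, and the $D>64$ approximation, at the price of uninformative constants and of not exhibiting the explicit hyperbolic bound \eqref{hyperdbound}; conversely, the paper's computation makes the curvature dependence concrete but leans on looser approximations. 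Your closing observation --- that \emph{normalizedCN} is built solely from graph-observable path counts $|P_k(c)|$, so $\kappa$ never needs to enter its definition --- is the same key point on which the paper's proof ultimately rests, so both arguments converge on the same final step.
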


\begin{proof}

The volume of a sphere or a ball in hyperbolic $n$-space with sectional curvature $ \kappa $ is given by
\begin{equation}
V_\kappa(r) = c_{n-1} \int_0^r \left( \frac{\sinh(\sqrt{\kappa}t)}{\sqrt{\kappa}} \right)^{n-1} dt, 
\end{equation}
where $ c_{n-1} := \frac{2\pi^{n/2}}{\Gamma(n/2)} $ is the $ n - 1 $-dimensional area of a unit sphere in $ \mathbb{R}^n $ (see~\citep{chavel1995riemannian}). 

Let $ R = \frac{1}{\sqrt{-\kappa}} $,  we have:

\begin{equation}
V_{\kappa}(r) = C_{n-1} R^{n-1} \int_0^r \sinh^{n-1} \frac{t}{R} dt
\end{equation}

Using the recurrence formula for integrals involving hyperbolic functions $ \int \sinh^n c x dx = \frac{1}{c n} \sinh^{n-1} c x \cosh c x - \frac{n-1}{n} \int \sinh^{n-2} c x dx \quad (n>0) $,  we finally obtain:

\begin{enumerate}
    \item When $ n $ is even ($ n = 2k $):
    \begin{equation}
    A_n = f(n) + \sum_{k=1}^{\frac{n-2}{2}} (-1)^k \frac{(n-1)(n-3)\cdots [n-(2k-1)]}{n(n-2)(n-4)\cdots [n-(2k-2)]} f(n-2k)- (-1)^{\frac{n-2}{2}} \frac{(n-1)(n-3)\cdots 3 \cdot 1}{n(n-2)\cdots 4 \cdot 2} r
    \end{equation}

    \item When $ n $ is odd ($ n = 2k+1 $):
    \begin{equation}
    A_n = f(n) + \sum_{k=1}^{\frac{n-1}{2}} (-1)^k \frac{(n-1)(n-3)\cdots [n-(2k-1)]}{n(n-2)(n-4)\cdots [n-(2k-2)]} f(n-2k)
    \end{equation}
\end{enumerate}

Where $ f(n) = \frac{1}{an} (\sinh^{n-1} a r)(\cosh a r) $,  $ a = \sqrt{-\kappa} $,  and 
$V_{\kappa}(r) = C_{n-1} R^{n-1} A_{n-1}$

It follows naturally that:

\begin{equation}
V(r) = V(1) \left( \frac{e^{a r} - e^{-a r}}{e^a - e^{-a}} \right)^D \cdot \left( \frac{e^{a r} + e^{-a r}}{e^a + e^{-a}} \right)
\end{equation}

Clearly,  when $ \kappa $ approaches 0,  which means the space degenerates from hyperbolic space to Euclidean space,  we have:

\begin{equation}
\lim_{\kappa \rightarrow 0} \left( \frac{e^{a r} - e^{-a r}}{e^a - e^{-a}} \right)^D \cdot \left( \frac{e^{a r} + e^{-a r}}{e^a + e^{-a}} \right) = r^D
\end{equation}

This is consistent with the result for Euclidean space.

\begin{figure}[ht]
\vskip 0.2in
\begin{center}
\centerline{\includegraphics[width=0.4\columnwidth]{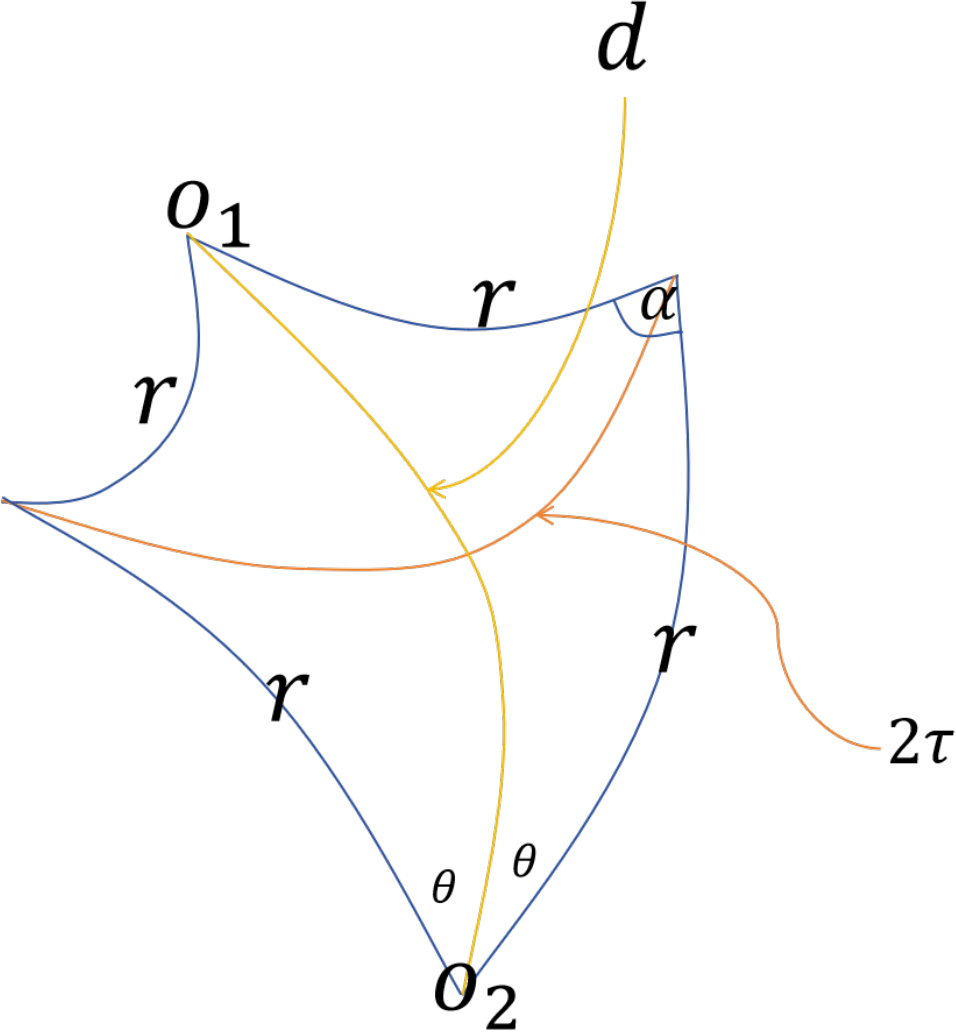}}
\caption{The upper bound for the volume of the intersection of two spheres in hyperbolic $n$-space with centers at $ O_1 $ and $ O_2 $ is the volume of a sphere with radius $ \tau $. }

\label{hyperbolic}
\end{center}
\vskip -0.2in
\end{figure}

We first need to find an upper bound for the volume of the intersection of two spheres in hyperbolic $n$-space with centers at $O_1$ and $O_2$ by scaling.  In this case,  we assume symmetry,  as shown in \Cref{hyperbolic},  where the upper bound for the volume of the intersection is the volume of a sphere with radius $ \tau $.  We can set up the following system of equations:

\begin{equation}
\left\{
\begin{aligned}
\cosh d &= \cosh^2 r - \sinh^2 r \cdot \cos \alpha \\
\frac{\sinh d}{\sin \alpha} &= \frac{\sinh r}{\sin \theta} \\
\cosh \tau &= \cosh^2 r - \sinh^2 r \cdot \cos(2\theta)
\end{aligned}
\right. 
\end{equation}

Here,  assuming the Gaussian curvature of the space is $ \frac{1}{\pi^2} $,  $ d $ represents $ \frac{d}{\pi} $,  $ r $ represents $ \frac{r}{\pi} $,  and $ \tau $ represents $ \frac{\tau}{\pi} $ for simplicity in form.  The solution is:

\begin{equation}
\cosh \tau = \frac{8 y^2(x-1)}{2 x^2 - 1} - 1
\end{equation}

where $ x = \cosh d $ and $ y = \cosh r $.

Thus,  when $ D $ is greater than 64,  we can almost assume that:

\begin{equation}
\frac{A\left(r_i,  r_j,  d_{ij}\right)}{V(1)} \leqslant \left(\frac{e^{a r} - e^{-a r}}{e^a - e^{-a}}\right)^D \cdot \left(\frac{e^{a r} + e^{-a r}}{e^a + e^{-a}}\right) \leqslant \left[(\sqrt{-\kappa})^r\right]^D
\end{equation}

Similar to \eqref{eta2k},  we can derive:

\begin{equation}
\eta_{2k}(i,  j) \leqslant c(N,  \delta,  k)\left({\sqrt{\kappa}}^{\cosh^{-1}\left(\frac{8 y^2(x-1)}{2 x^2-1}-1\right)}\right)^{D(2k-1)} + b(N,  \delta,  k)
\end{equation}

Since the range of $ d_{ij} $ is $ [0,  1] $,  within this range,  we can approximate $ \frac{x-1}{2x^2 - 1} $ as $ \frac{1}{5}(x-1)^{\frac{1}{2}} $,  which is similar to the approach taken in Euclidean space.  Ultimately,  we can derive:

\begin{equation}
    \cosh \frac{d_{ij}}{\kappa} \leqslant \left( \frac{5\left[\cosh \left(\log_{\sqrt{-\kappa}}\left(\frac{\eta-b}{c}\right)^{\frac{1}{D(2 k-1)}}\right) + 1\right]}{8 \cosh^2 \frac{r_M^{min}}{\kappa}} \right)^2 + 1
    \label{hyperdbound}
\end{equation}

The form of \eqref{hyperdbound} is very similar to \eqref{dbound}.  Following the same approach as in Euclidean space,  after introducing normalizedCN,  we can finally conclude that the effect of $ d_{ij} $ brought by normalizedCN is consistent with that in Euclidean space.

\end{proof}

\section{Proof of \texorpdfstring{\Cref{thm:OCNexpressive}}{Theorem \ref{thm:OCNexpressive}} \label{ApendixE}}

\begin{proof}

When algorithm A can distinguish all the link pairs that algorithm B can distinguish,  we consider algorithm A to be more expressive than algorithm B,  provided there exists a pair of links that A can distinguish but B cannot.  Therefore,  we can prove this by constructing a simple counterexample. 

Graph Autoencoder’s prediction for link $(i,  j)$ is $\langle\operatorname{MPNN}(i,  A,  X),  \operatorname{MPNN}(j,  A,  X)\rangle$.  So
$\operatorname{MPNN}(i,  A,  X) \odot \operatorname{MPNN}(j,  A,  X)$ leads to GAE which is a part of OCN,  so OCN can also express GAE. 
As MPNN can learn arbitrary functions of node degrees,  OCN can express CN, RA, AA . we construct an example in \Cref{SF-and-MPNN}. White,  green,  orange,  and yellow represent node features 0,  1,  2,  and 3,  respectively.  $ v_2 $ and $ v_3 $ are symmetric,  and GAE cannot distinguish $ (v_1,  v_2) $ and $ (v_1,  v_3) $.  With node features ignored,  $ (v_1,  v_2) $ and $ (v_1,  v_3) $ are symmetric,  so CN,  RA,  AA,  Neo-GNN,  and BUDDY cannot distinguish them.  NCN also degenerates into GAE,  so it also cannot.  However,  $ (v_1,  v_2) $ and $ (v_1,  v_3) $ have different 2-hop CNs,  which allows OCN to distinguish them.

\end{proof}

\section{Dataset Statistics \label{ApendixF}}

The statistics for each dataset are presented in \Cref{Dataset-Stats}.  The data is randomly split with 70\%,  10\%,  and 20\% allocated to the training,  validation,  and test sets,  respectively.  Unlike the others,  the collab dataset permits the use of validation edges as input for the test set. 

Cora, Citeseer, and Pubmed are very small-scale graphs where link prediction is relatively easy. Existing models have nearly saturated performance on these small graphs. The real challenge in link prediction lies in large-scale graphs. Our model achieves significant performance improvements on these high-value, challenging large graphs, particularly on ogbl-ppa and ogbl-ddi.

\begin{table*}[t]
\caption{Statistics of dataset. }
\vskip 0.15in
\label{Dataset-Stats}
\resizebox{\textwidth}{!}{ 
\begin{tabular}{@{}lccccccc@{}}
\toprule
 & \textbf{Cora} & \textbf{Citeseer} & \textbf{Pubmed} & \textbf{Collab} & \textbf{PPA} & \textbf{DDI} & \textbf{Citation2} \\
\midrule
\#Nodes & 2, 708 & 3, 327 & 18, 717 & 235, 868 & 576, 289 & 4, 267 & 2, 927, 963 \\
\#Edges & 5, 278 & 4, 676 & 44, 327 & 1, 285, 465 & 30, 326, 273 & 1, 334, 889 & 30, 561, 187 \\
splits & random & random & random & fixed & fixed & fixed & fixed \\
average degree & 3.9 & 2.74 & 4.5 & 5.45 & 52.62 & 312.84 & 10.44 \\
\bottomrule
\end{tabular}}
\end{table*}

\section{Model Architecture\label{ApendixG}}

\begin{figure}[ht]
\vskip 0.2in
\begin{center}
\centerline{\includegraphics[width=0.6\columnwidth]{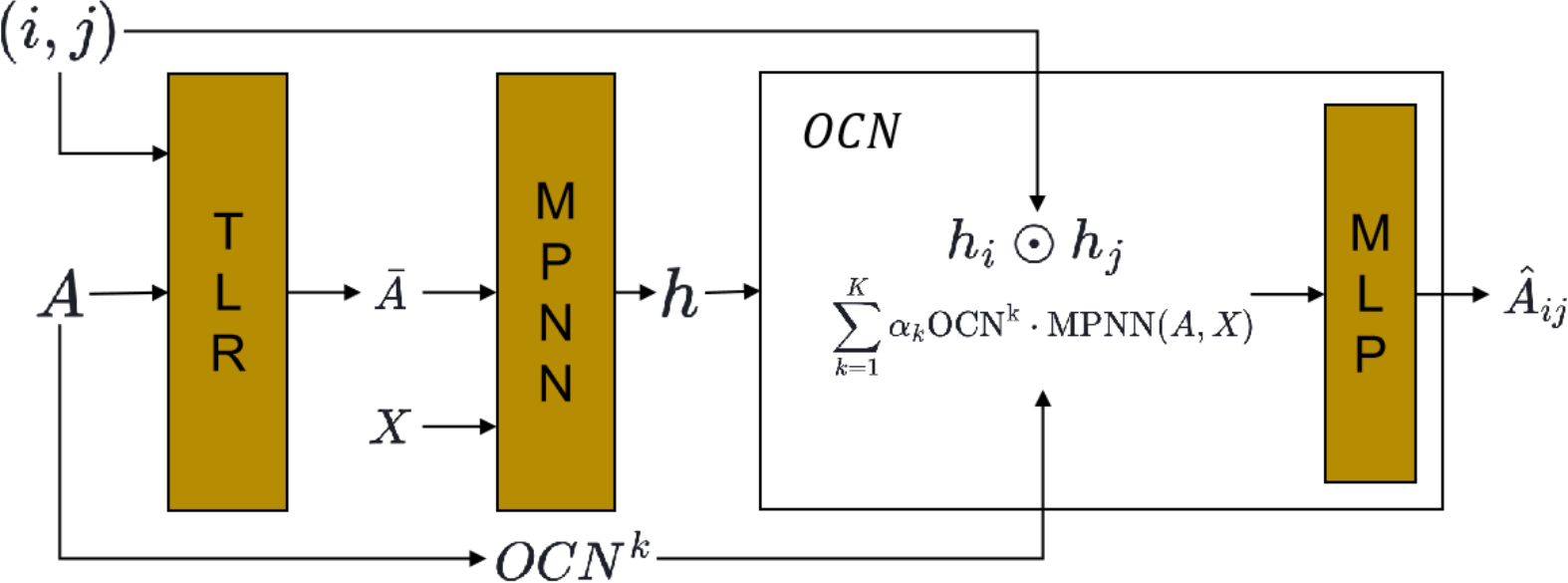}}
\caption{Architecture of OCN. }
\label{Architecture}
\end{center}
\vskip -0.2in
\end{figure}

\textbf{Target Link Removal. } We do not modify the input graph during the validation and test phases,  where the target links remain hidden.  For the training set,  we remove the target links from $A$,  and we define the modified graph as $ \Bar{A}$. 

\textbf{MPNN. } To generate the node representations $h$,  we employ the MPNN framework.  For each node $i$,  the representation is obtained by:

\begin{equation}
    h_i = \text{MPNN}(i,   \Bar{A}, X)
\end{equation}

In the case of all target links,  MPNN is executed only once. 

\textbf{Predictor. } The link prediction task leverages node representations and the graph structure.  The link representations for OCN are computed as follows:

\begin{equation}
z_{ij} =\operatorname{MPNN}(i,  A,  X) \odot \operatorname{MPNN}(j,  A,  X)+\sum_{k=1}^K \alpha_k \operatorname{OCN^k} \cdot \operatorname{MPNN}(A,  X)
\label{OCN_zij}
\end{equation}

Here,  $z_{ij}$ denotes the representation of the link $ (i, j) $,  The resulting representation is then processed to predict the likelihood of the link's existence:

\begin{equation}
\hat{A}_{ij} = \text{sigmoid}(\text{MLP}(z_{ij})) 
\end{equation}

\section{ Experimental Settings\label{ApendixH}}

\textbf{Computing Setup. } We utilize PyTorch Geometric~\citep{fey2019fast} and PyTorch~\citep{paszke2019pytorch} for developing the models.  All experiments are performed on a Linux server equipped with an Nvidia 4090 GPU. 

\textbf{Baselines. } The results reported in~\citep{wang2023neural} are directly used for comparison. 

\textbf{Model Hyperparameters. } For hyperparameter tuning,  we employ Optuna~\citep{akiba2019optuna} to conduct random search.  The hyperparameters that yield the best validation scores are chosen for each model. The complete hyperparameter configuration is listed in \Cref{param-table} and \Cref{param-table-ocnp}.The key hyperparameters in our framework are defined as follows:
\begin{itemize}
    \item \texttt{maskinput}: Boolean indicator for target link removal during training
    \item \texttt{mplayers}: Number of message passing layers in the GNN architecture
    \item \texttt{nnlayers}: Depth of Multilayer Perceptron (MLP) components
    \item \texttt{ln}: Layer normalization switch for MPNN modules
    \item \texttt{lnnn}: Layer normalization control for MLP components
    \item \texttt{jk}: Jumping Knowledge connection enablement
    \item \texttt{gnndp}: Dropout probability applied to GNN node representations
    \item \texttt{gnnedp}: Edge dropout ratio for graph adjacency matrices
    \item \texttt{predp}: Dropout rate in the prediction head network
    \item \texttt{preedp}: Edge dropout probability during prediction
    \item \texttt{gnnlr}: Learning rate for GNN parameter optimization
    \item \texttt{prelr}: Learning rate specific to predictor module
\end{itemize}

\begin{table*}[t]
\caption{Parameter configurations (OCN) across different datasets.}
\label{param-table}
\resizebox{\textwidth}{!}{ 
\begin{tabular}{@{}lccccccccccccccccccc@{}}
\toprule
 \textbf{Dataset} & \textbf{maskinput} & \textbf{mplayers} & \textbf{nnlayers} & \textbf{hiddim} & \textbf{ln} & \textbf{lnnn} & \textbf{res} & \textbf{jk} & \textbf{gnndp} & \textbf{xdp} & \textbf{tdp} & \textbf{gnnedp} & \textbf{predp} & \textbf{preedp} & \textbf{gnnlr} & \textbf{prelr} & \textbf{use\_xlin} & \textbf{tallact} \\
\midrule
Cora     & T & 1 & 3 & 256 & T & T & F & T & 0.05  & 0.7  & 0.3  & 0.0  & 0.05 & 0.4  & 0.0043 & 0.0024 & T & T \\
Citeseer & T & 2 & 3 & 512 & F & T & T & F & 0.35  & 0.5  & 0.3  & 0.6  & 0.5  & 0.5  & 0.0005 & 0.0008 & T & F \\
Pubmed   & T & 1 & 3 & 256 & T & T & F & T & 0.1   & 0.3  & 0.0  & 0.0  & 0.05 & 0.0  & 0.0097 & 0.002  & T & T \\
Collab   & T & 1 & 3 & 256 & T & T & F & T & 0.05  & 0.7  & 0.3  & 0.0  & 0.05 & 0.4  & 0.0043 & 0.0024 & T & T \\
PPA      & T & 1 & 3 & 64  & T & T & F & T & 0.0   & 0.0  & 0.0  & 0.0  & 0.0  & 0.0  & 0.0013 & 0.0013 & T & T \\
Citation2& F & 5 & 3 & 32  & T & F & T & T & 0.28  & 0.5  & 0.3  & 0.2  & 0.1  & 0.12 & 0.00023& 0.0009 & T & T \\
DDI      & T & 3 & 3 & 64  & F & T & T & F & 0.25  & 0.13 & 0.38 & 0.5  & 0.10 & 0.13 & 0.00086& 0.0008 & T & F \\
\bottomrule
\end{tabular}}
\end{table*}

\begin{table*}[t]
\caption{Parameter configurations (OCNP) across different datasets.}
\label{param-table-ocnp}
\resizebox{\textwidth}{!}{
\begin{tabular}{@{}lccccccccccccccccccc@{}}
\toprule
 \textbf{Dataset} & \textbf{maskinput} & \textbf{mplayers} & \textbf{nnlayers} & \textbf{hiddim} & \textbf{ln} & \textbf{lnnn} & \textbf{res} & \textbf{jk} & \textbf{gnndp} & \textbf{xdp} & \textbf{tdp} & \textbf{gnnedp} & \textbf{predp} & \textbf{preedp} & \textbf{gnnlr} & \textbf{prelr} & \textbf{use\_xlin} & \textbf{tallact} \\
\midrule
Cora     & T & 1 & 3 & 256 & T & T & F & T & 0.05  & 0.7  & 0.3  & 0.0  & 0.05 & 0.4  & 0.0043 & 0.0024 & T & T \\
Citeseer & T & 3 & 1 & 64  & F & T & F & T & 0.12  & 0.73 & 0.88 & 0.07 & 0.19 & 0.06 & 0.0069 & 0.0010 & T & F \\
Pubmed   & T & 1 & 3 & 256 & T & T & F & T & 0.1   & 0.3  & 0.0  & 0.0  & 0.05 & 0.0  & 0.0097 & 0.002  & T & T \\
Collab   & T & 1 & 3 & 256 & T & T & F & T & 0.1   & 0.25 & 0.05 & 0.3  & 0.3  & 0.0  & 0.0082 & 0.0037 & T & T \\
PPA      & T & 1 & 3 & 64  & T & T & F & T & 0.0   & 0.0  & 0.0  & 0.0  & 0.0  & 0.0  & 0.0013 & 0.0013 & T & T \\
Citation2& F & 5 & 3 & 32  & T & F & T & T & 0.28  & 0.5  & 0.3  & 0.2  & 0.1  & 0.12 & 0.0002 & 0.0008 & T & T \\
DDI      & T & 3 & 3 & 64  & F & T & T & F & 0.25  & 0.13 & 0.38 & 0.5  & 0.10 & 0.13 & 0.0009 & 0.0008 & T & F \\
\bottomrule
\end{tabular}}
\end{table*}

\textbf{Training Procedure. } We optimize models using the Adam optimizer.  Results for all models are averaged from 10 runs with different random seeds. 

\textbf{Computation Time. } The total computational cost for reproducing all experiments is shown \Cref{table:time}. 

\begin{table*}[t]
\caption{Total time(s) needed in one run}
\label{table:time}
\vskip 0.15in
\begin{center}
\begin{small}
\begin{sc}
\begin{tabular}{lccccccc}
\toprule
\textbf{} & \textbf{Cora} & \textbf{Citeseer} & \textbf{Pubmed} & \textbf{Collab} & \textbf{PPA} & \textbf{Citation2} & \textbf{DDI} \\
\midrule
\textbf{OCN} &10  &24  &110  & 380 & 17010  & 18132 & 1600  \\
\textbf{OCNP} &9  &21  &96  & 350 & 16770  & 21520 & 1131  \\
\bottomrule
\end{tabular}
\end{sc}
\end{small}
\end{center}
\vskip -0.1in
\end{table*}

\section{ Time and Space Complexity \label{ApendixI}}

\begin{table*}[t]
\caption{Scalability comparison.  $h$,  $h'$,  $h''$: the complexity of hash function in BUDDY,  where all $d \geq l$.  $F$: the dimension of node representations.  When predicting the $t$ target links,  time and space complexity of existing models can be expressed as $O(B + Ct)$ and $O(D + Et)$,  respectively. }
\label{COMPLEXITY}
\vskip 0.15in
\begin{center}
\begin{small}
\begin{sc}
\begin{tabular}{lccccccc}
\toprule
 \textbf{Method} & \textbf{B} & \textbf{C} & \textbf{D} & \textbf{E} \\
\midrule
 GAE & $ndF + nF^2$ & $F^2$ & $nF$ & $F$  \\
 Neo-GNN & $ndF + nF^2 + nd^l$ & $d^l + F^2$ & $nF + nd^l$ & $d^l + F$ \\
 BUDDY & $ndF + nh$ & $h' + F^2$ & $nF + nh''$ & $F + h'$ \\
 SEAL & 0 & $d^{l'+1} F + d^{l'} F^2$ & 0 & $d^{l'+1} F$ \\
 NCN & $ndF + nF^2$ & $dF + F^2$ & $nF$ & $dF$ \\
 NCNC & $ndF + nF^2$ & $d^2F + dF^2$ & $nF$ & $d^2F$ \\
 OCN & $ndF + nF^2$ & $d^k+k^2n+dF + F^2$ & $nF$ & $dF$ \\
 OCNP & $ndF + nF^2$ & $d^k+kn+dF + F^2$ & $nF$ & $dF$ \\
 
\bottomrule
\end{tabular}
\end{sc}
\end{small}
\end{center}
\vskip -0.1in
\end{table*}

Let $t$ represent the number of target links,  $n$ the total number of nodes in the graph,  and $d$ the maximum degree of a node.  The time and space complexities of the existing models can be written as $O(B + Ct)$ and $O(D + Et)$,  respectively.  The constants $B$,  $C$,  $D$,  and $E$ are independent of $t$,  as summarized in \Cref{COMPLEXITY}.  The derivation of complexity is as follows: models such as NCN~\citep{wang2023neural},  GAE~\citep{kipf2016variational},  and GNN,  which utilize different structural features and operate on the original graph,  exhibit similar complexities of $ndF + nF^2$.  Specifically,  the method by BUDDY~\citep{chamberlain2022graph} uses a simplified version of MPNN,  simplifying the complexity term $B$ to $ndF$.  Additionally,  Neo-GNN~\citep{yun2021neo} requires precomputing the higher-order graph $A^l$,  which results in time and space complexity of $O(nd^l)$. BUDDY hashes each node,  resulting in $O(nh)$ time and $O(nh')$ space complexity.  In contrast,  SEAL~\citep{zhang2018link}'s $B$ is 0,  as it does not run MPNN on the original graph.  For each target link,  a vanilla GNN simply requires feeding the feature vector to an MLP,  yielding $C = F^2$.  In addition to GAE’s operation,  BUDDY also hashes the structural features,  which introduces a higher complexity per edge,  $O(d^l)$,  where $l$ is the number of hops Neo-GNN considers.  For each target link,  SEAL segregates a subgraph of size $O(d^{l'})$,  where $l'$ represents the number of hops in the subgraph,  and runs MPNN on it,  which gives $C=d^{l^{\prime}} F^2+d^{l^{\prime}+1} F$. NCN computes common neighbors in $O(d)$ time,  pools the node embeddings with a complexity of $O(dF)$,  and feeds them into an MLP,  resulting in $O(F^2)$.  NCNC-1 runs NCN for each possible common neighbor,  leading to a time complexity of $O(d^2F + dF^2)$.  For higher-order computations,  NCNC-$K$ executes $O(d)$ times NCNC-$K$,  resulting in a time complexity of $O(d^{K+1}F + d^KF^2)$. OCN computes $k$-hop CNs with complexity $O(d^k)$.  The process of Schmidt orthogonalization has a time complexity of $O(k^2 n)$,  and it computes \eqref{OCN_zij} with a complexity of $O(dF)$.  Finally,  OCN feeds it into an MLP,  resulting in $O(F^2)$.  Similarly,  the only difference between OCNP and OCN is that OCNP replaces the Schmidt orthogonalization process with Polynomial Filters,  which results in a time complexity of $O(kn)$.

\section{OCNP \label{ApendixJ}}

According to \Cref{defNeighborhood},  we can express $C N_k$ and transform the Hadamard product into a Kronecker product($\otimes$) with many desirable properties.  We have:

\begin{equation}
    \begin{aligned}
C N_k& = \bigcup_{2(k-1) < k_1 + k_2 \leq 2k,  k_1\leq k,  k_2 \leq k}\left(P_1 A^{k_1}\right) \odot\left(P_2 A^{k_2}\right) \\
& =J\left(P_1 A^{k_1} \otimes P_2 A^{k_2}\right) K \\
& =J\left(P_1 \otimes P_2\right)\left(A^{k_1} \otimes A^{k_2}\right) K \\
& \stackrel{k_1=k_2=k}{=} J\left(P_1 \otimes P_2\right)(A \otimes A)^k K \\
& =J\left(P_1 \otimes P_2\right)(W \otimes W)(\Sigma \otimes \Sigma)^k\left(W^{\top} \otimes W^{\top}\right) K, 
\end{aligned}
\end{equation}

where $P_1$ and $P_2$ are called selection matrices,  defined as $P[j,  k] = \delta(S[1,  j],  k)$,  where $S \in \mathbb{R}^{2 \times h}$ describes the start and end points of each edge. Expanding $\left( \sum \otimes \sum \right)^k$,  we abbreviate it as $\left[ k \right]$:

\begin{equation}
\left[ k \right] := \left( \sum \otimes \sum \right)^k = \begin{pmatrix}
\lambda_1^k\lambda_1^k  \\
& \lambda_1^k\lambda_2^k  \\
 &  & \ddots &  \\
  &  &  &\lambda_1^k\lambda_h^k  \\
  &  & & &\lambda_2^k\lambda_1^k  \\
   &  & & & &\ddots  \\
 &  & & & & & \lambda_h^k\lambda_h^k
\end{pmatrix}, 
\end{equation}

Therefore,  we can write $C N_k$ as $C N_{k_n} = U\left[k_n\right] V$.  We note that the Frobenius inner product in the Schmidt orthogonalization process can be derived as follows:

\begin{equation}
    \begin{aligned}
\left\langle C N_{K_A},  C N_{K_B}\right\rangle_F& =\frac{1}{4}\left(\left\|C N_{K_A}+C N_{K_B}\right\|_F^2-\left\|C N_{K_A}-C N_{K_B}\right\|_F^2\right) \\
& =\frac{1}{4}\left\|U\left[k_A+k_B\right] V+U\left[k_A-k_B\right] V\right\| \left\|U\left[k_A+k_B\right] V-U\left[k_A-k_B\right] V\right\| \\
& =\sqrt{\sum_{i=1}^n \sigma_i^2\left(U\left[k_A\right] V\right)} \cdot \sqrt{\sum_{i=1}^n \sigma_i^2\left(U\left[K_B\right] V\right)}, 
\end{aligned}
\end{equation}

Here,  $\sigma_i^2\left(A\right)$ represents the $i$-th singular value of $A$.  Next,  we abbreviate $\sqrt{\sum_{i=1}^n \sigma_i^2\left(U\left[k_B\right] V\right)}$ as $\sqrt{\sum \sigma^2\left(k_B\right)}$.  Then,  we can write the Schmidt orthogonalization process for $B$ as follows:

\begin{equation}
\begin{aligned} B- & \langle B,  A_1\rangle A_1-\left\langle B,  A_2\rangle A_2 \cdots \cdots\right.  \\ & =U\left[k_B\right] V-\sqrt{\sum \sigma^2\left(k_A\right)} \sqrt{\sum \sigma^2\left(k_B\right)} U\left[k_1\right] V  -\sqrt{\sum \sigma^2\left(k_B\right)} \sqrt{\sum \sigma^2\left(K_{A_i}\right)} U\left[k_i\right] V \\ & =U \Omega V
\end{aligned}
\end{equation}

For a given term $\lambda_i^k \lambda_j^k$ in $\left[ k \right]$,  we abbreviate it as $\lambda_{ij}^k$.  Therefore,  after undergoing Schmidt orthogonalization,  each eigenvalue in the diagonal matrix undergoes the following transformation:

\begin{equation}
\begin{aligned}
 \lambda_{i j}^{k_B} &\rightarrow \sqrt{\Sigma \sigma^2\left(k_B\right)}\left(\sqrt{\frac{1}{\sum \sigma^2\left(k_B\right)}}-\sqrt{\sum \sigma^2\left(k_1\right)} \frac{\lambda_{i j}^{k1}}{\lambda_{i j}^{k_B}}-\sqrt{\Sigma \sigma^2\left(k_2\right)} \frac{\lambda_{i j}^{k_2}}{\lambda_{i j}^{k_B}} \cdots \cdot \right)\lambda_{i j}^{k_B}  \\
& =\left[1-\left(\sum_{i=1}^{k_B-1} \sqrt{\sum \sigma^2\left(k_B\right)} \sqrt{\sum \sigma^2\left(k_B-i\right)} \lambda_{i j}^{-i}\right)\right] \lambda_{i j}^{k_B}
\end{aligned}
\end{equation}

That is to say,  the original $B$ is equivalent to the following transformation:

\begin{equation}
\begin{aligned}
CN_{k_B}=U\begin{pmatrix}
 
   \ddots &  \\
   &\lambda_{ij}^{k_B}  \\
 & &\ddots  \\
  
\end{pmatrix}V & \Rightarrow U\begin{pmatrix}
 
   \ddots &  \\
   &\left[1-\left(\sum_{i=1}^{k_B-1} \sqrt{\sum \sigma^2\left(k_B\right)} \sqrt{\sum \sigma^2\left(k_B-i\right)} \lambda_{i j}^{-i}\right)\right] \lambda_{i j}^{k_B} \\
 & &\ddots  \\
  
\end{pmatrix}V \\ & = CN_{k_B} \begin{pmatrix}
 
   \ddots &  \\
   &1-\left(\sum_{i=1}^{k_B-1} \sqrt{\sum \sigma^2\left(k_B\right)} \sqrt{\sum \sigma^2\left(k_B-i\right)} \lambda_{i j}^{-i}\right) \\
 & &\ddots  \\
  
\end{pmatrix}
\end{aligned}
\end{equation}

We abbreviate $\begin{pmatrix}
 
   \ddots &  \\
   & 1 - \left(\sum_{i=1}^{k_B-1} \sqrt{\sum \sigma^2\left(k_B\right)} \sqrt{\sum \sigma^2\left(k_B-i\right)} \lambda_{ij}^{-i}\right) \\
 & & \ddots \\
  
\end{pmatrix}$ as $g(k_B)$.  We then select two different $CN_{k_A}$ and $CN_{k_B}$ to analyze the relationship between $g(k_A)$ and $g(k_B)$. When $k_B$ is not significantly larger than $k_A$, we note that there is:

\begin{equation}
\begin{aligned}
&\left[1-\left(\sum_{i=1}^{k_A-1} \sqrt{\sum \sigma^2\left(k_A\right)} \sqrt{\sum \sigma^2\left(k_A-i\right)} \lambda_{i j}^{-i}\right)\right] \cdot 
 \left[1-\left(\sum_{i=1}^{k_B-1} \sqrt{\sum \sigma^2\left(k_B\right)} \sqrt{\sum \sigma^2\left(k_B-i\right)} \lambda_{i j}^{-i}\right)\right]\\&
 =\left[1-\left(\sum_{i=1}^{k_A-1} \sqrt{\sum \sigma^2\left(k_A\right)} \sqrt{\sum \sigma^2\left(k_A-i\right)} \lambda_{i j}^{-i}\right)\right]^2\\&
 =\left[1-\left(\sum_{i=1}^{k_A-1} \sqrt{\sum_{i=1}^n \sigma_i^2\left(U\left[k_A\right] V\right)} \sqrt{\sum_{i=1}^n \sigma_i^2\left(U\left[k_A-i\right] V\right)} \lambda_{i j}^{-i}\right)\right]^2
\end{aligned}
\end{equation}

We assume that the introduction of $J\left(P_1 \otimes P_2\right)$ and $K$ does not significantly alter the larger eigenvalues in the adjacency matrix $A$.  Therefore,  we consider the larger singular values to correspond to the larger eigenvalues.  Assuming we only consider the top $h$ largest singular values,  we have:

\begin{equation}
\begin{aligned}
g(k_A) \odot g(k_B) &= 
\sum_\lambda\left[1-\left(\sum_{i=1}^{k_A-1} \sqrt{\sum_{i=1}^n \sigma_i^2\left(U\left[k_A\right] V\right)} \sqrt{\sum_{i=1}^n \sigma_i^2\left(U\left[k_A-i\right] V\right)} \lambda_{i j}^{-i}\right)\right]^2 \\ & \sim
h-\sqrt{\frac{(h+1)^2\left(\lambda_1^2+\lambda_2^2+\cdots \lambda_4^2\right)}{\lambda_1^2+\lambda_1^4+\cdots+\lambda_n^2+\cdots}} \sim
0
\end{aligned}
\end{equation}

This means that in our framework,  $g(k_A)$ and $g(k_B)$ are approximately orthogonal.

Thus, we can obtain OCNP through the following operations:
Let the $k$-th term of the selected polynomials be $T_n$. Then, $OCN^k = CN^k \operatorname{diag}(T_k)$, and we only need to replace the step in \Cref{alg:OCNOverBatch} with:

\begin{equation}
    OCN^k = CN^k \operatorname{diag}(T_k).
    \label{OCNP}
\end{equation}

\section{Comparison with other Link Prediction Models\label{Apendixtrick}}
A key strength of GNNs lies in their inherent capacity to preserve permutation equivariance, meaning that edges with identical structural patterns---referred to as isomorphic edges---can give the same prediction. On the other hand, traditional node embedding methods, such as Node2Vec~\citep{grover2016node2vec}, LINE~\citep{tang2015}, and DeepWalk~\citep{perozzi2014deepwalk}, often provide inconsistent results for isomorphic edges, which can impair their ability to generalize. In our study, we compared the performance of our proposed methods against these well-established node embedding methods, using several OGB datasets. Furthermore, PLNLP~\citep{wangPLNLP2021} and GIDN~\citep{wangGIDN2022} improve their performance by employing a variety of training strategies, such as adjustments to the loss function and data augmentation techniques. As seen in our experiments (\Cref{trick}), we also applied the PLNLP tricks. While these adjustments did not yield substantial improvements, our models still delivered superior performance compared to both the node embedding methods and the models incorporating training tricks, irrespective of whether the tricks were applied.

\begin{table}[t]
\caption{Results on link prediction benchmarks. The format is average score $\pm$ standard deviation. +tricks means model with tricks of PLNLP.}
\label{trick}
\vskip 0.15in
\begin{center}
\begin{small}
\begin{sc}
\begin{tabular}{lcccccccc}
\toprule
 & \textbf{Collab} & \textbf{PPA} & \textbf{Citation2} & \textbf{DDI} \\
\midrule
\textbf{Metric} & \textbf{Hits@50} & \textbf{Hits@100} & \textbf{MRR} & \textbf{Hits@20} \\
\midrule
\textbf{Node2Vec} & 41.36$\pm$0.69 & 27.83$\pm$2.02 & 53.47$\pm$0.12 & 21.95$\pm$1.58 \\
\textbf{DeepWalk} & 50.37$\pm$0.34 & 28.88$\pm$1.53 & 84.48$\pm$0.30 & 26.42$\pm$6.10 \\
\textbf{LINE} & 55.13$\pm$1.35 & 26.03$\pm$2.55 & 82.33$\pm$0.52 & 10.15$\pm$1.69 \\
\textbf{PLNLP} & 70.59$\pm$0.29 & 32.38$\pm$2.58 & 84.92$\pm$0.29 & 90.88$\pm$3.13 \\
\textbf{GIDN} & 70.96$\pm$0.55 & - & - & - \\
\textbf{OCN} & 72.43$\pm$3.75 & 69.79$\pm$0.85 & 88.57$\pm$0.06 & 97.42$\pm$0.34\\
\textbf{OCNP}  &67.74$\pm$0.16  & 74.87$\pm$0.94 & 87.06$\pm$0.27 & 97.65$\pm$0.38 \\
\textbf{NCN+tricks} &68.04$\pm$0.42  & - & - & 90.83$\pm$2.83 \\
\textbf{OCN+tricks} &69.03$\pm$0.94  & 69.23$\pm$0.39 & 88.97$\pm$0.12 & 94.25$\pm$0.71 \\
\textbf{OCNP+tricks} & 69.89$\pm$0.22 & 73.44$\pm$0.73 & 88.79$\pm$0.21 & 97.43$\pm$0.29 \\
\midrule
\end{tabular}
\end{sc}
\end{small}
\end{center}
\vskip -0.1in
\end{table}

\section{Ablation of MPNN \label{ApendixL}}

We provide an ablation study on the MPNN used in OCN and OCNP.  The results are shown in \Cref{AblationstudyonMPNN}.   The MPNN models include GIN~\citep{xu2018powerful},  GraphSage~\citep{hamilton2017inductive},  MPNN with MAX aggregation,  MPNN with SUM aggregation,  MPNN with MEAN aggregation,  and GCN~\citep{kipf2016semi}.

\begin{table*}[t]
\caption{Ablation study on MPNN. }
\label{AblationstudyonMPNN}
\vskip 0.15in
\begin{center}
\begin{small}
\begin{tabular}{lccccccc}
\toprule
\textbf{Dataset} & \textbf{Model} & \textbf{GCN} & \textbf{GIN} & \textbf{GraphSage} & \textbf{MAX} & \textbf{SUM}  & \textbf{MEAN} \\
\midrule
Cora & GAE & 89.01$\pm$1.32 & 70.45$\pm$1.88  & 70.59$\pm$1.70 & 61.63$\pm$4.43  & - & -\\
    & NCN & 89.05$\pm$0.96 & 70.62$\pm$1.68  & 70.94$\pm$1.47 & 66.53$\pm$2.27  & - & -\\
    & OCN & 89.82$\pm$0.91 & 73.55$\pm$1.91  & 53.14$\pm$1.87 & 46.94$\pm$1.84  & 45.46$\pm$1.32 & 45.29$\pm$1.70\\
     & OCNP & 90.06$\pm$1.01 & 75.09$\pm$1.02 & 74.96$\pm$1.25 & 67.92$\pm$1.75 & 71.78$\pm$1.42 & 71.90$\pm$1.37\\
\midrule
Citeseer & GAE & 91.78$\pm$0.94 & 61.21$\pm$1.18  & 61.23$\pm$1.28 & 53.02$\pm$3.75  & - & -\\
    & NCN & 91.56$\pm$1.43 & 61.58$\pm$1.18  & 61.95$\pm$1.05 & 53.40$\pm$2.34  & - & -\\
        & OCN & 89.57$\pm$1.97 & 66.29$\pm$4.09 & 65.87$\pm$3.73 & 88.91$\pm$3.27 &  90.05$\pm$2.28 &93.62$\pm$1.30\\
          & OCNP & 89.95$\pm$2.34 &68.69$\pm$3.26  & 69.95$\pm$4.86  & 89.13$\pm$2.82 & 90.52$\pm$1.66 & 93.41$\pm$1.02 \\
\midrule
Pubmed & GAE & 78.81$\pm$1.64 & 59.00$\pm$0.31  & 57.20$\pm$1.37 & 55.08$\pm$1.43  & - & -\\
    & NCN & 79.05$\pm$1.16 & 59.06$\pm$0.49  & 58.06$\pm$0.69 & 56.32$\pm$0.77  & - & -\\

        & OCN & 83.96$\pm$0.51  & 65.10$\pm$1.14 & 63.80$\pm$1.06 & 52.43$\pm$6.07 & 61.82$\pm$0.58&52.62$\pm$6.01  \\
       & OCNP & 82.32$\pm$1.21 & 62.39$\pm$1.65 & 60.48$\pm$1.20  &  56.90$\pm$2.55 &59.48$\pm$2.49&54.43$\pm$2.27 \\
\midrule
collab & GAE & 36.96$\pm$0.95 & 38.94$\pm$0.81  & 28.11$\pm$0.26 & 27.08$\pm$0.61  & - & -\\
    & NCN & 64.76$\pm$0.87 & 64.38$\pm$0.06  & 63.94$\pm$0.43 & 64.19$\pm$0.18  & - & -\\

        & OCN & 68.19$\pm$0.21 & 72.43$\pm$3.75 & 66.12$\pm$0.34 & 65.44$\pm$0.96 & 65.35$\pm$0.40 &65.36$\pm$0.34  \\
       & OCNP & 67.74$\pm$0.16&57.70$\pm$ 3.56 &  58.26$\pm$2.64 & 59.31$\pm$1.26 & 60.30$\pm$3.08  &59.09$\pm$1.11  \\
\midrule
ppa & GAE & 19.49$\pm$0.75 & 18.20$\pm$0.45  & 11.79$\pm$1.02 & 20.86$\pm$0.81  & - & -\\
    & NCN & 61.19$\pm$0.85 & 47.94$\pm$0.89  & 56.41$\pm$0.65 & 57.31$\pm$0.30  & - & -\\

    & OCN & 69.79$\pm$0.85 & 67.29$\pm$0.91 & OOM & OOM & 59.23$\pm$0.56 &OOM  \\
    & OCNP & 74.87$\pm$0.94 & 73.08$\pm$1.03 & OOM & OOM & 69.59$\pm$1.05 &OOM\\
    \midrule
Citation2 & OCN & 88.57$\pm$0.06 & 70.09$\pm$2.77 & OOM & OOM & 0.24$\pm$0.01  &OOM \\
    & OCNP & 87.06$\pm$0.27 & OOM & OOM & OOM & 0.34$\pm$0.09 & OOM \\
    \midrule
DDI & OCN &  97.42$\pm$0.34 & 55.88$\pm$1.71 & 49.33$\pm$13.66 & 0.45$\pm$0.33 & 0.68$\pm$0.28&12.27$\pm$4.43 \\
    & OCNP & 97.65$\pm$0.38 &52.38$\pm$2.98  & 49.03$\pm$12.72 & 0.19$\pm$0.15 & 1.04$\pm$0.46 &6.94 $\pm$2.71 \\
\bottomrule
\end{tabular}
\end{small}
\end{center}
\end{table*}

\section{Scalability Comparison on Datasets \label{ScalabilityComparison}}

The time and memory consumption of models on different datasets are shown in \Cref{scalability2}. On these datasets, we observe results that are somewhat similar to those on the ogbl-collab dataset in \Cref{scalability1}. OCN and OCNP generally scale better than Neo-GNN. SEAL has the worst scalability. The memory overhead of OCN is comparable to or slightly higher than NCN. In general, both the time and memory overhead of OCNP are better than those of OCN.

\begin{figure}[t]
\vskip -0.1in
\begin{center}
\centerline{\includegraphics[width=1\columnwidth]{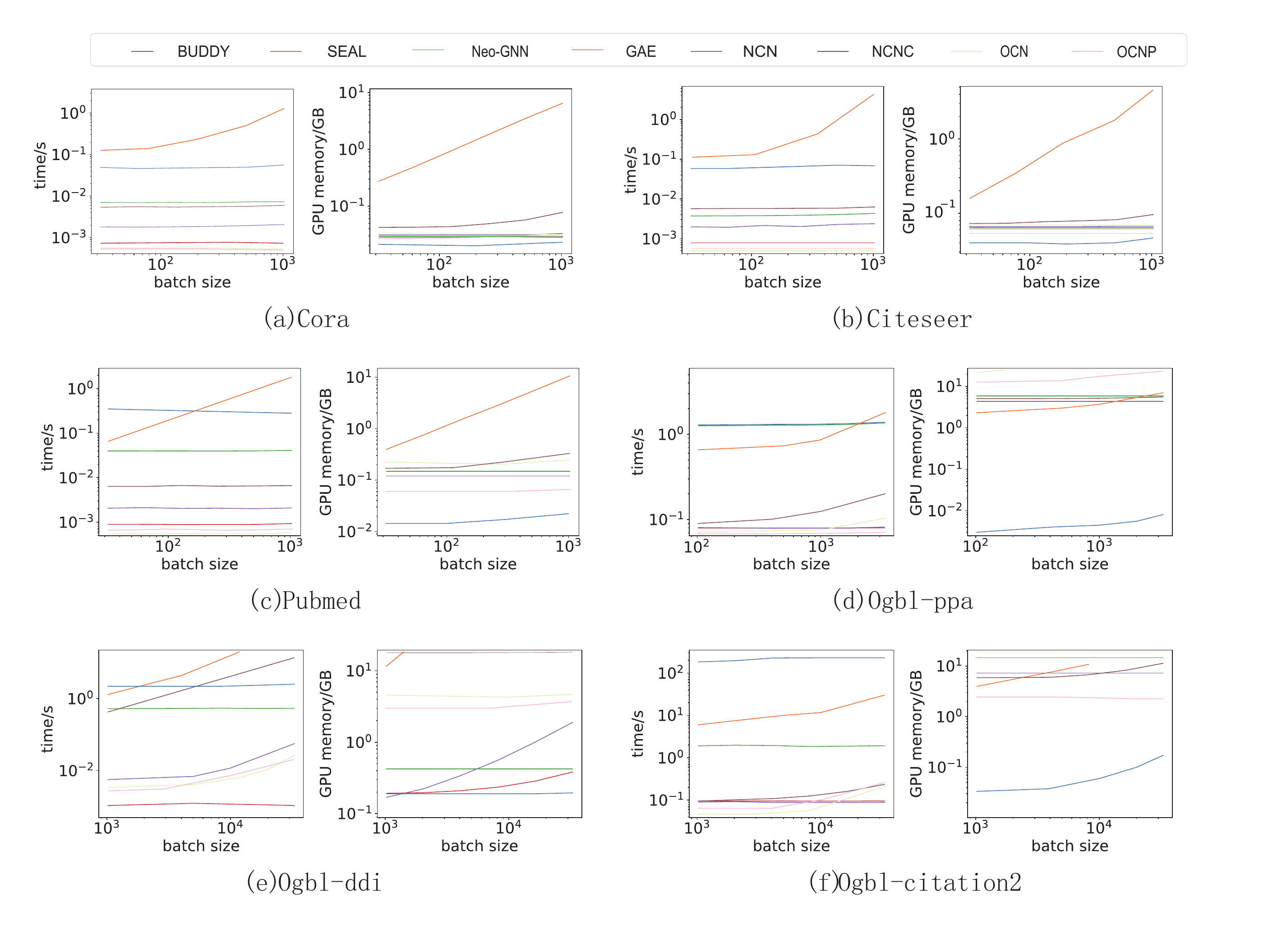}}
\vskip -0.1in
\caption{Inference time and GPU memory on datasets. The process we measure includes preprocessing, MPNN, and predicting one batch of test links.}
\label{scalability2}
\end{center}
\end{figure}

\citet{liang2024can} experimentally emphasizes the importance of explicitly incorporating NCN (number of common neighbors)-dependent structural information and highlights that ordinary GNNs cannot learn such structural patterns. However, \citet{dong2024pure} further demonstrates that while ordinary GNNs indeed cannot learn CN, they can acquire CN knowledge by introducing noise as auxiliary input (though the learned CN contains variance). This approach significantly increases both parameter size and computational overhead, resulting in poor scalability. Our model exhibits substantially better scalability and efficiency compared to MPLP \citep{dong2024pure}.

\section{Discussion Highlighting Differences from NCN \label{ApendixN}}

The primary motivation of NCN (and NCNC) lies in proposing a new architecture - \textbf{MPNN-then-SF} - aiming to address various limitations of the previous two major architectures (SF-then-MPNN and SF-and-MPNN). Here, NCN is just one instantiation of MPNN-then-SF, while NCNC is merely an iterative version of NCN. Neither focuses on incorporating higher-order CN; in fact, NCN found that attempting to introduce higher-order CN leads to performance degradation, and causes out-of-memory (OOM) issues on large graphs.

In contrast, our work directly targets the utilization of higher-order CN from the outset. We systematically investigate and summarize why existing methods of incorporating higher-order CN perform poorly. Ultimately, we identified the two key phenomena mentioned in our study - these are precisely what cause the performance deterioration when introducing higher-order CN. Therefore, our core motivation is to unlock the potential of higher-order CN by addressing these two phenomena.

\Cref{OCNmodel} may appear similar to NCN at first glance. However, our primary motivation and contribution lie in employing two key methods to incorporate higher-order CN for addressing the two phenomena we first discovered. Thus, our core contribution focuses on the preprocessing of CN before it reaches \Cref{OCNmodel} - the equation itself merely represents a summation step. The structural resemblance to NCN exists because our innovation doesn't involve architectural modifications, hence we retained NCN's MPNN-then-SF framework. We also analyzed the performance using SF-and-MPNN structure in our ablation studies.

We focus on the values of: $\alpha_1$ for $\operatorname{OCN^1}$ and $\alpha_2$ for $\operatorname{OCN^2}$. To ensure fair comparison with OCN, we also introduced higher-order CN to NCN and analyzed: $\alpha_1$ for $\operatorname{CN^1}$ and $\alpha_2$ for $\operatorname{CN^2}$. We present the results from \Cref{fig:subfig7} to \Cref{fig:subfig18}.

\begin{figure}[htbp]
  \centering 
  \subfloat[OCN: Cora]
  {\includegraphics[width=0.4\textwidth]{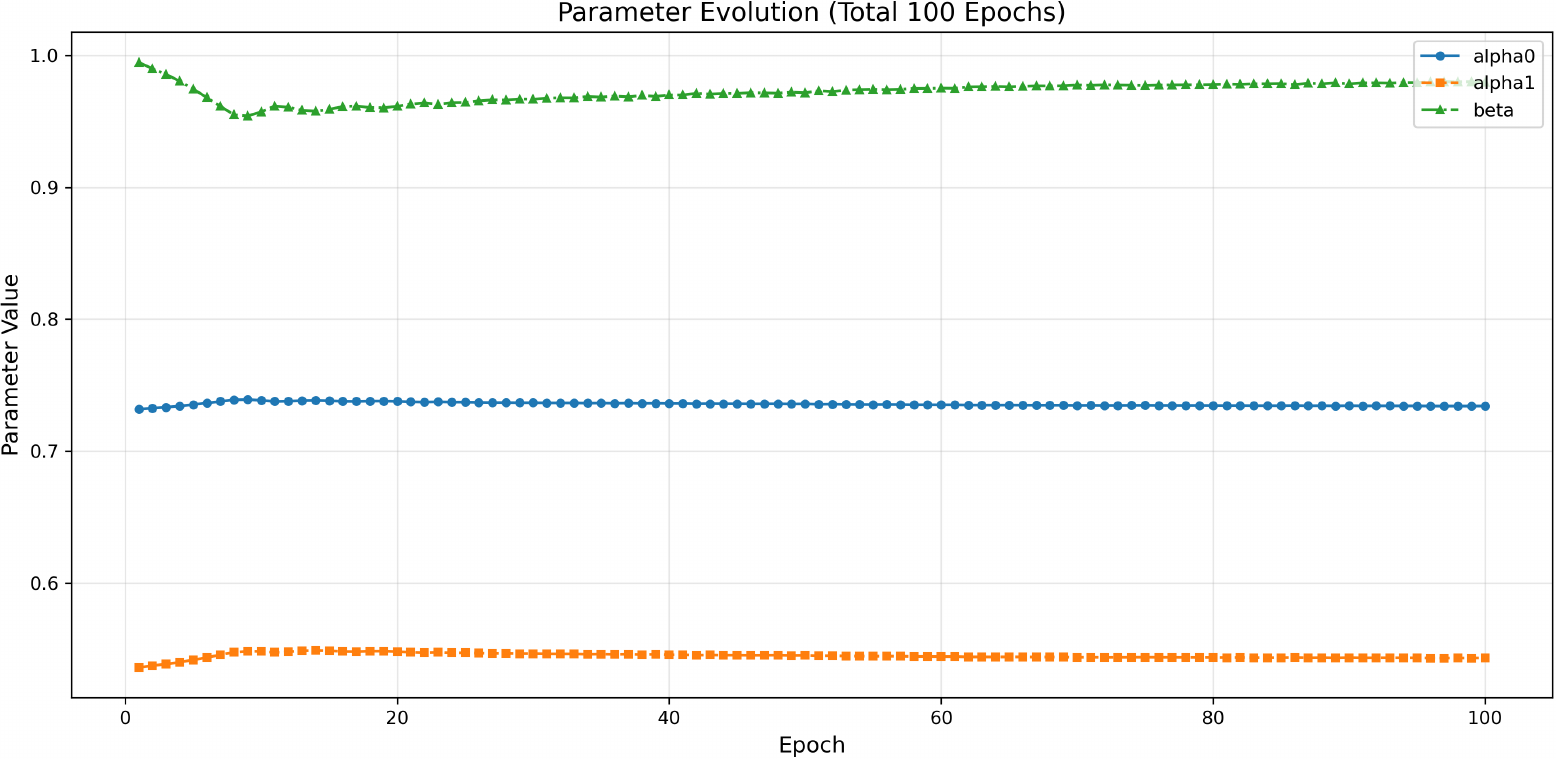}\label{fig:subfig7}}
   \quad   
  \subfloat[OCNP: Cora]
  {\includegraphics[width=0.4\textwidth]{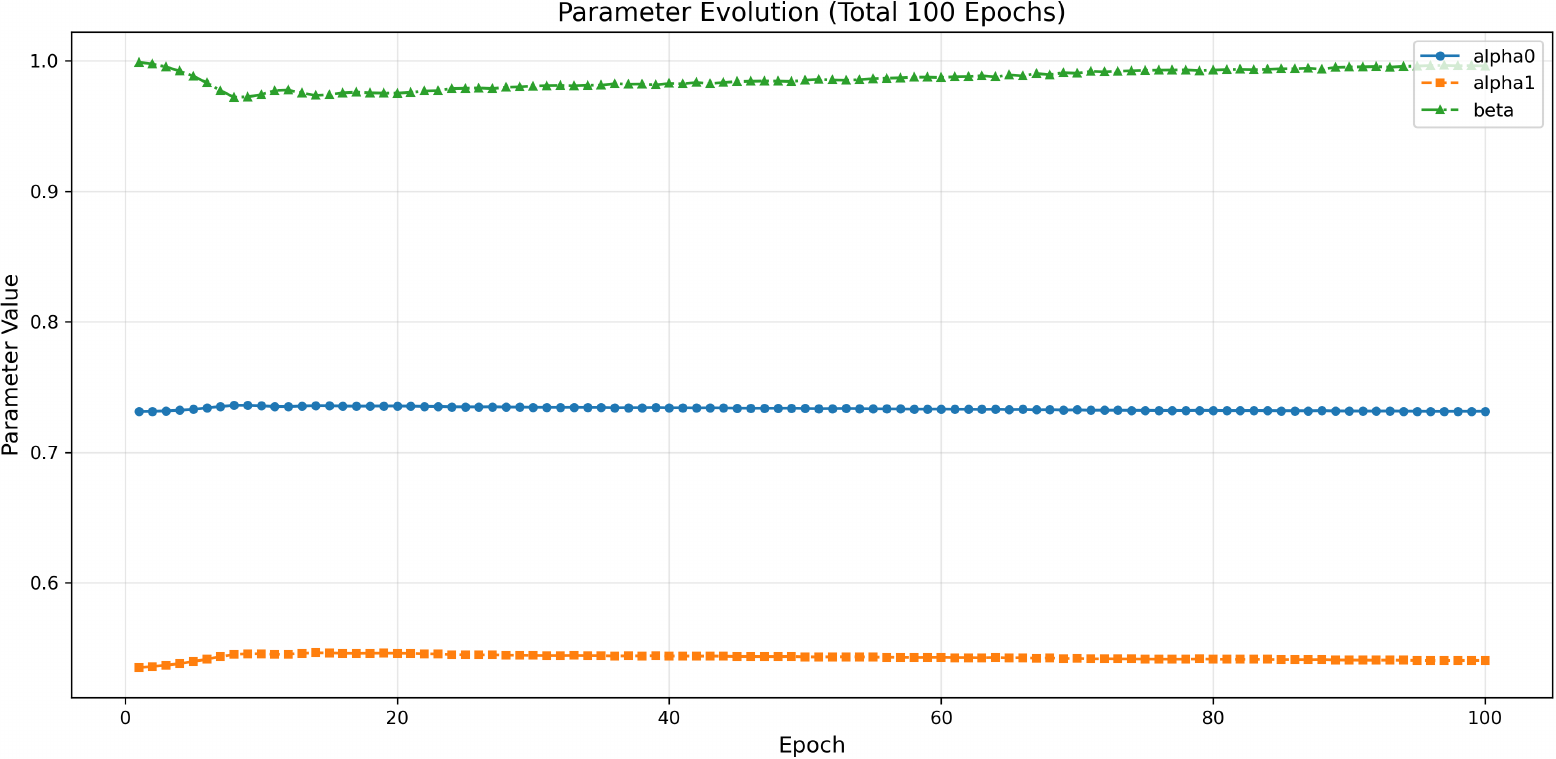}\label{fig:subfig8}}
  \quad
  \subfloat[NCN: Cora]
  {\includegraphics[width=0.4\textwidth]{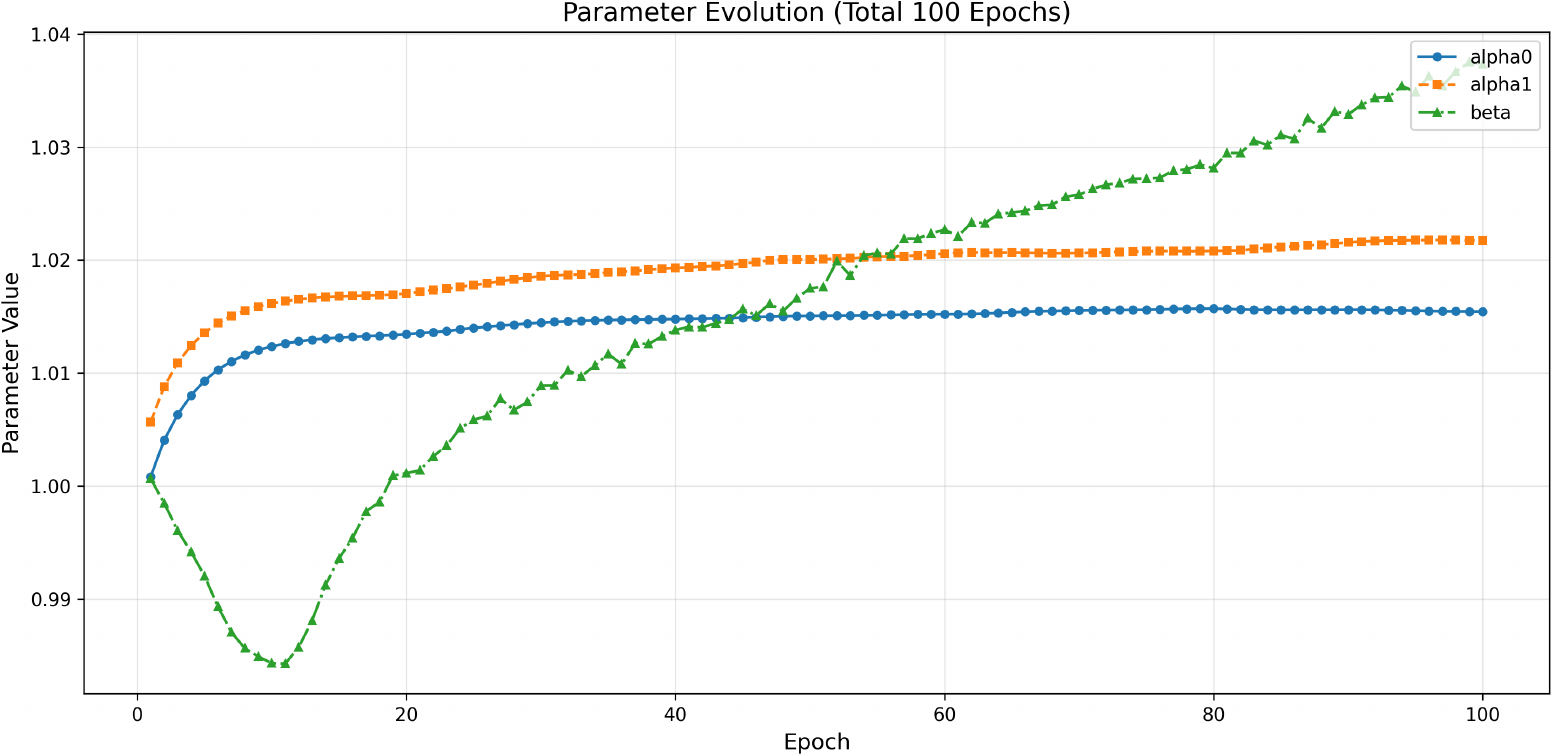}\label{fig:subfig9}}
  \quad
  \subfloat[OCN: Citeseer]
  {\includegraphics[width=0.4\textwidth]{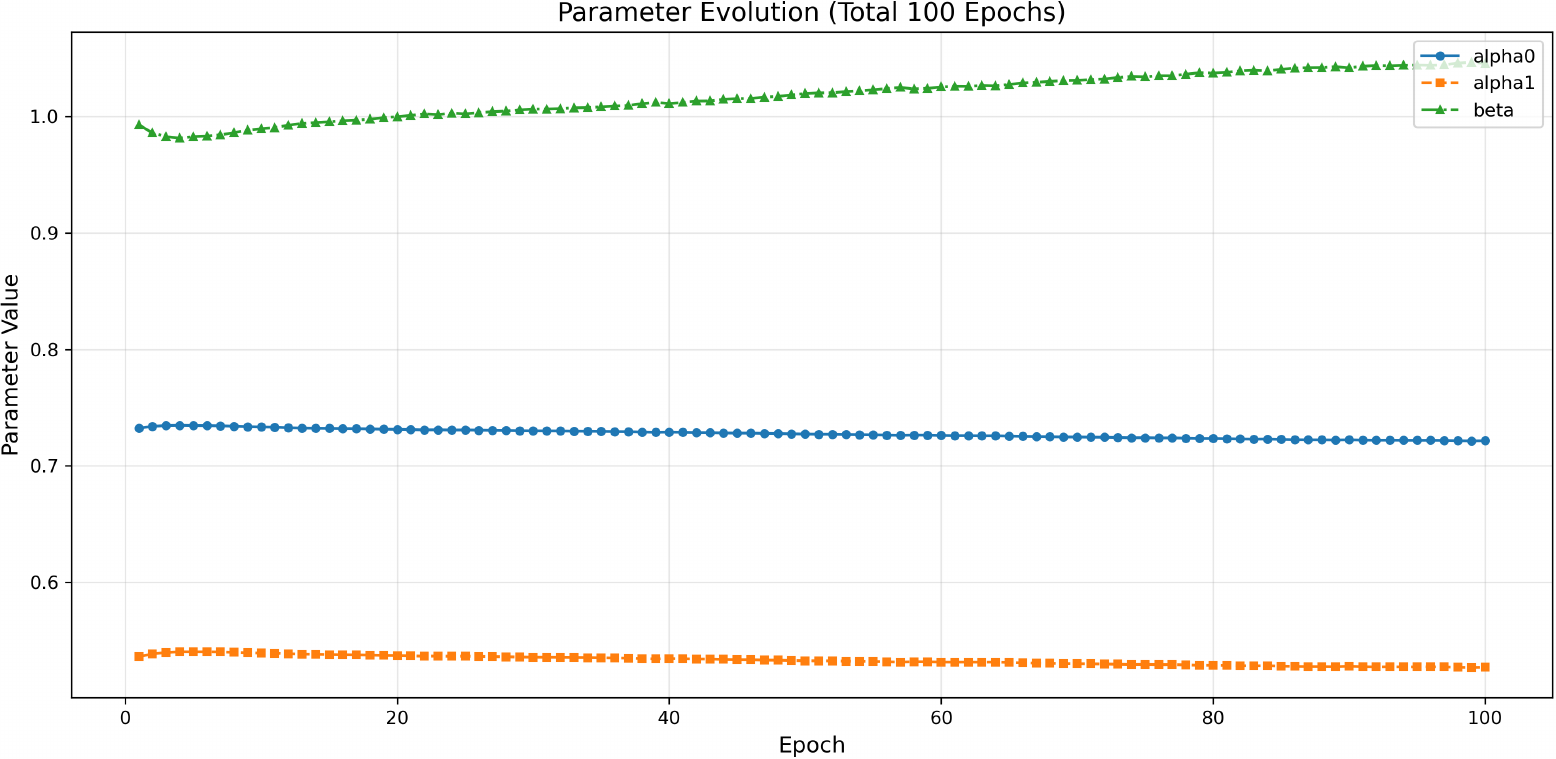}\label{fig:subfig10}}
  \quad
  \subfloat[OCNP: Citeseer]
  {\includegraphics[width=0.4\textwidth]{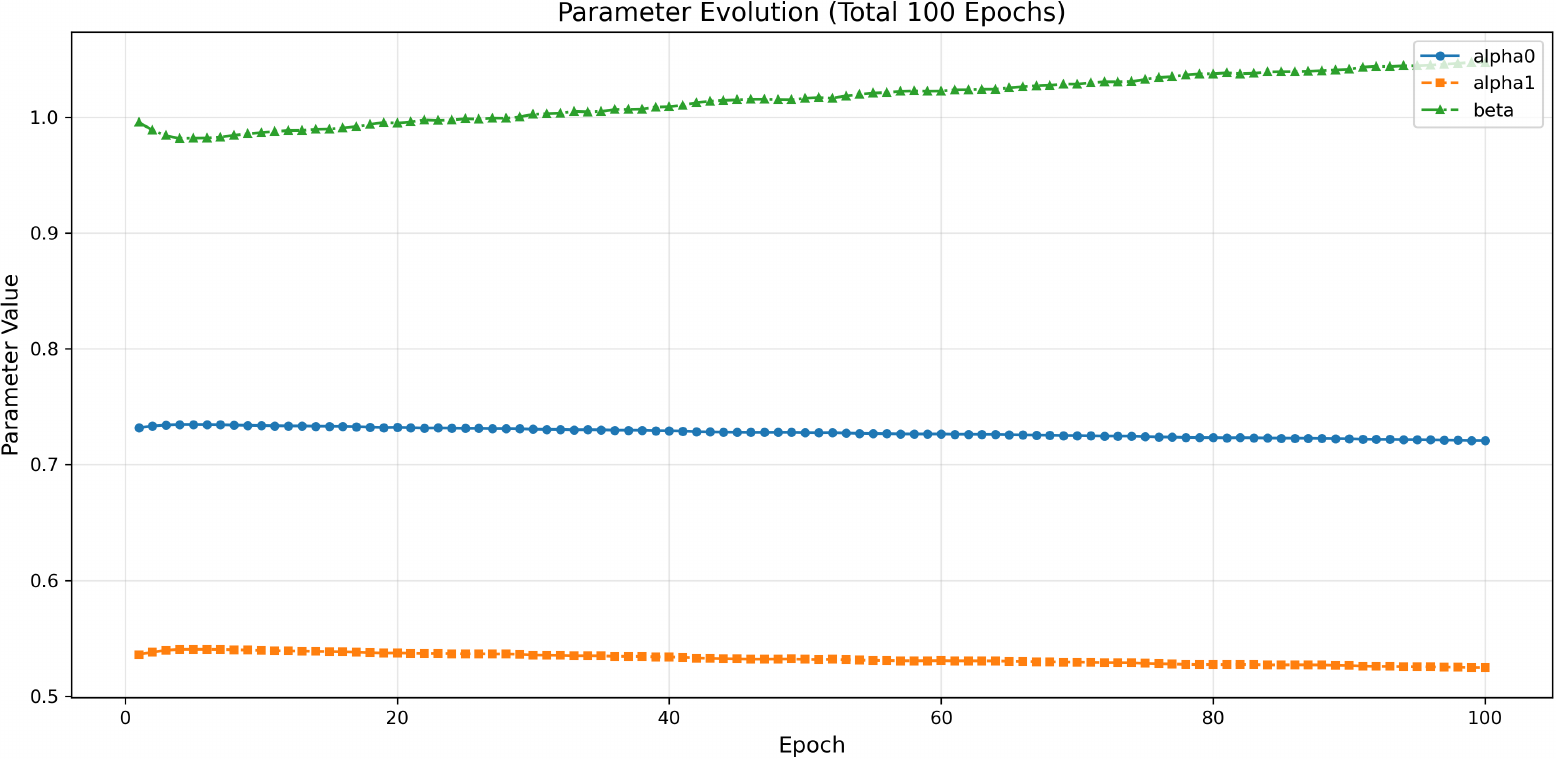}\label{fig:subfig11}}
  \quad
  \subfloat[NCN: Citeseer]
  {\includegraphics[width=0.4\textwidth]{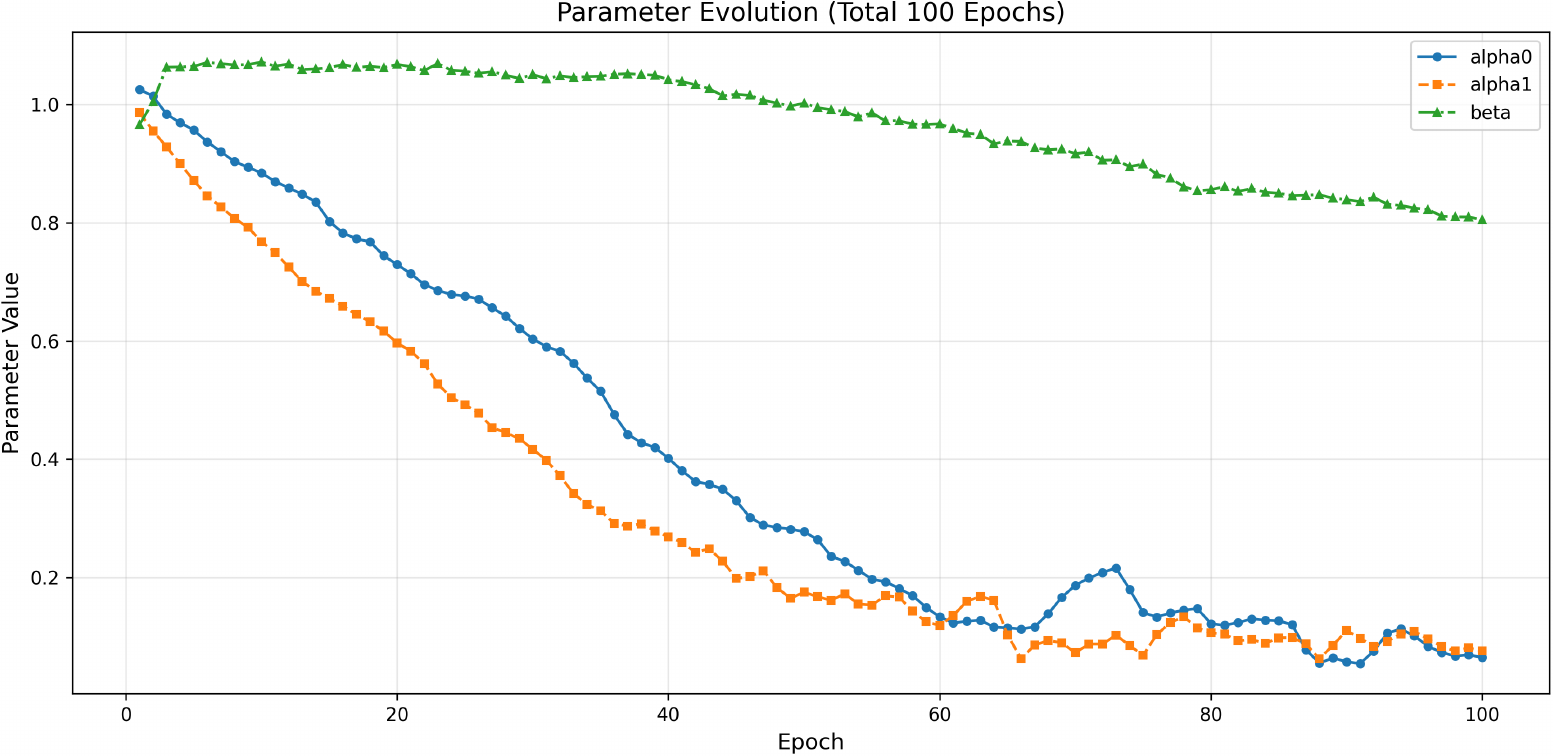}\label{fig:subfig12}}
  \quad
  \subfloat[OCN: Pubmed]
  {\includegraphics[width=0.4\textwidth]{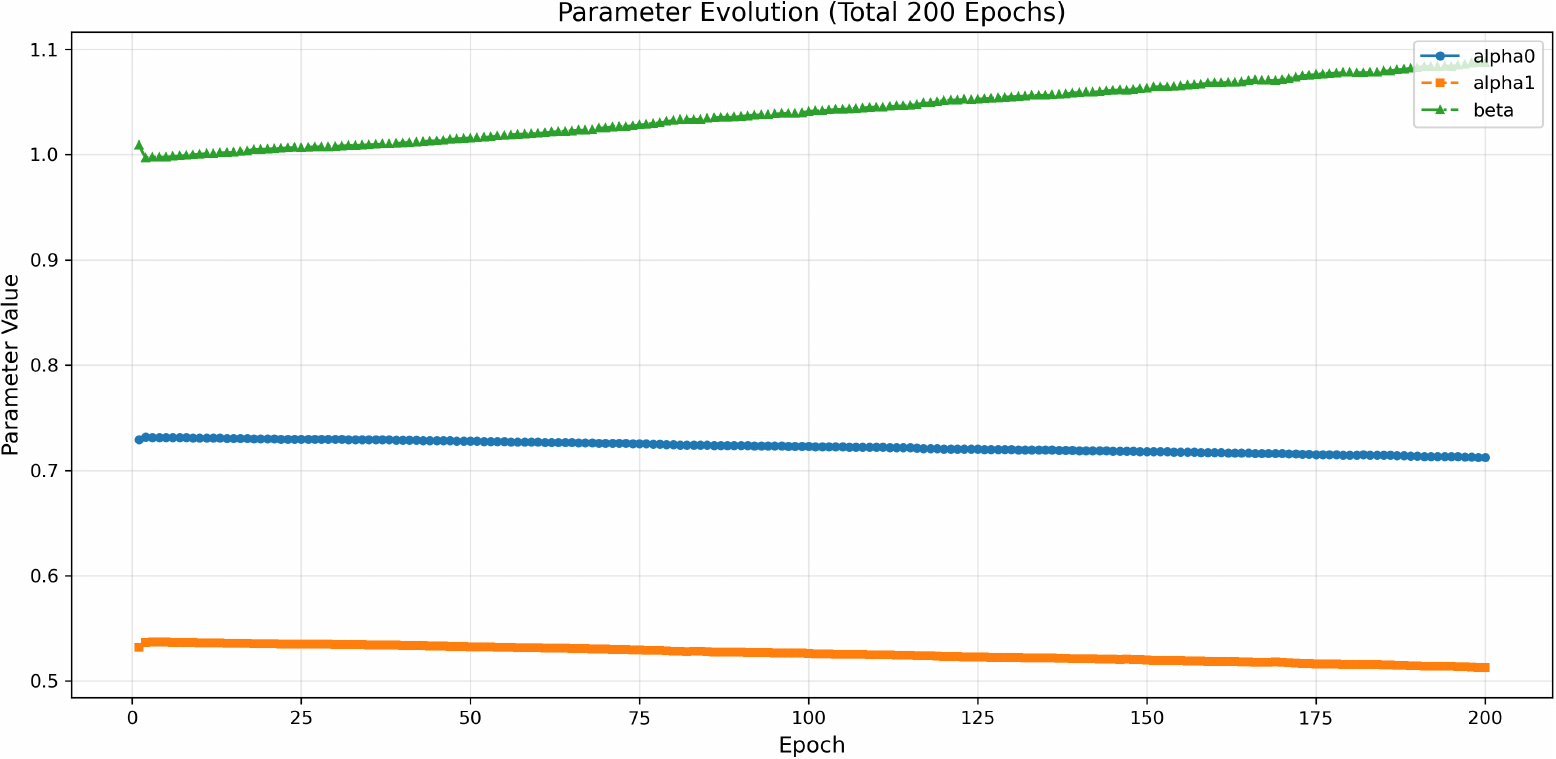}\label{fig:subfig13}}
  \quad
    \subfloat[OCNP: Pubmed]
  {\includegraphics[width=0.4\textwidth]{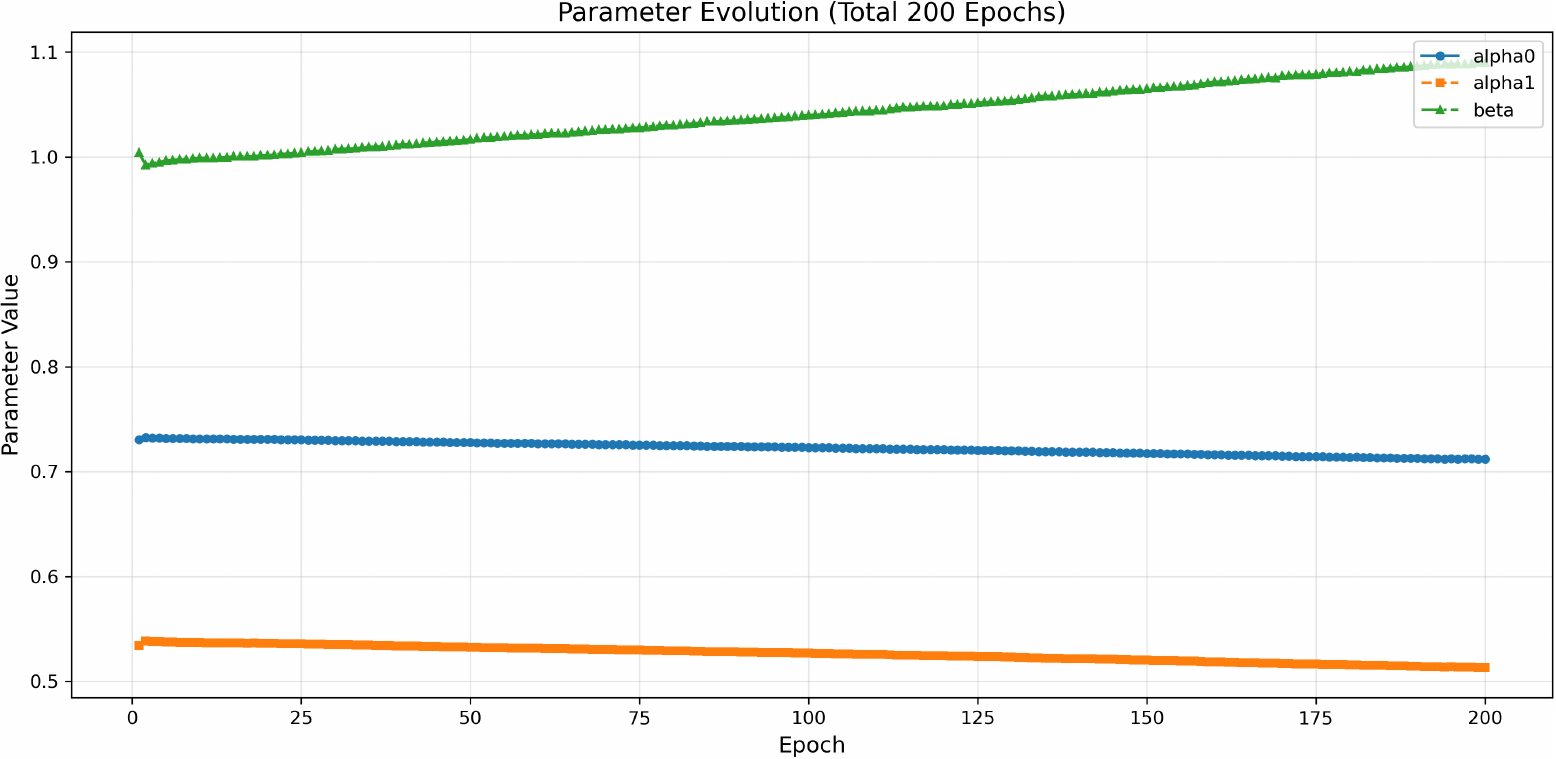}\label{fig:subfig14}}
  \quad
    \subfloat[NCN: Pubmed]
  {\includegraphics[width=0.4\textwidth]{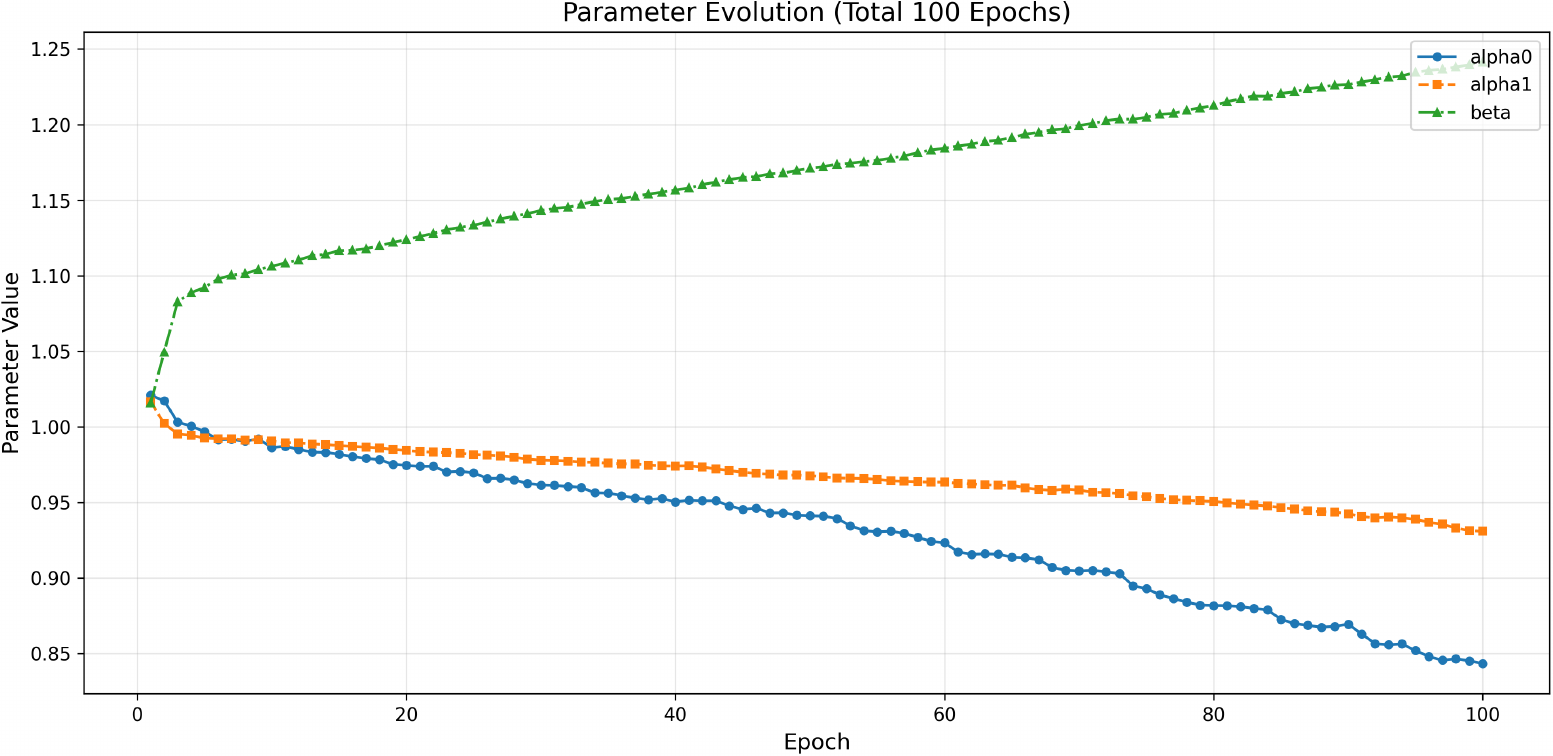}\label{fig:subfig15}}
  \quad
    \subfloat[OCN: Collab]
  {\includegraphics[width=0.4\textwidth]{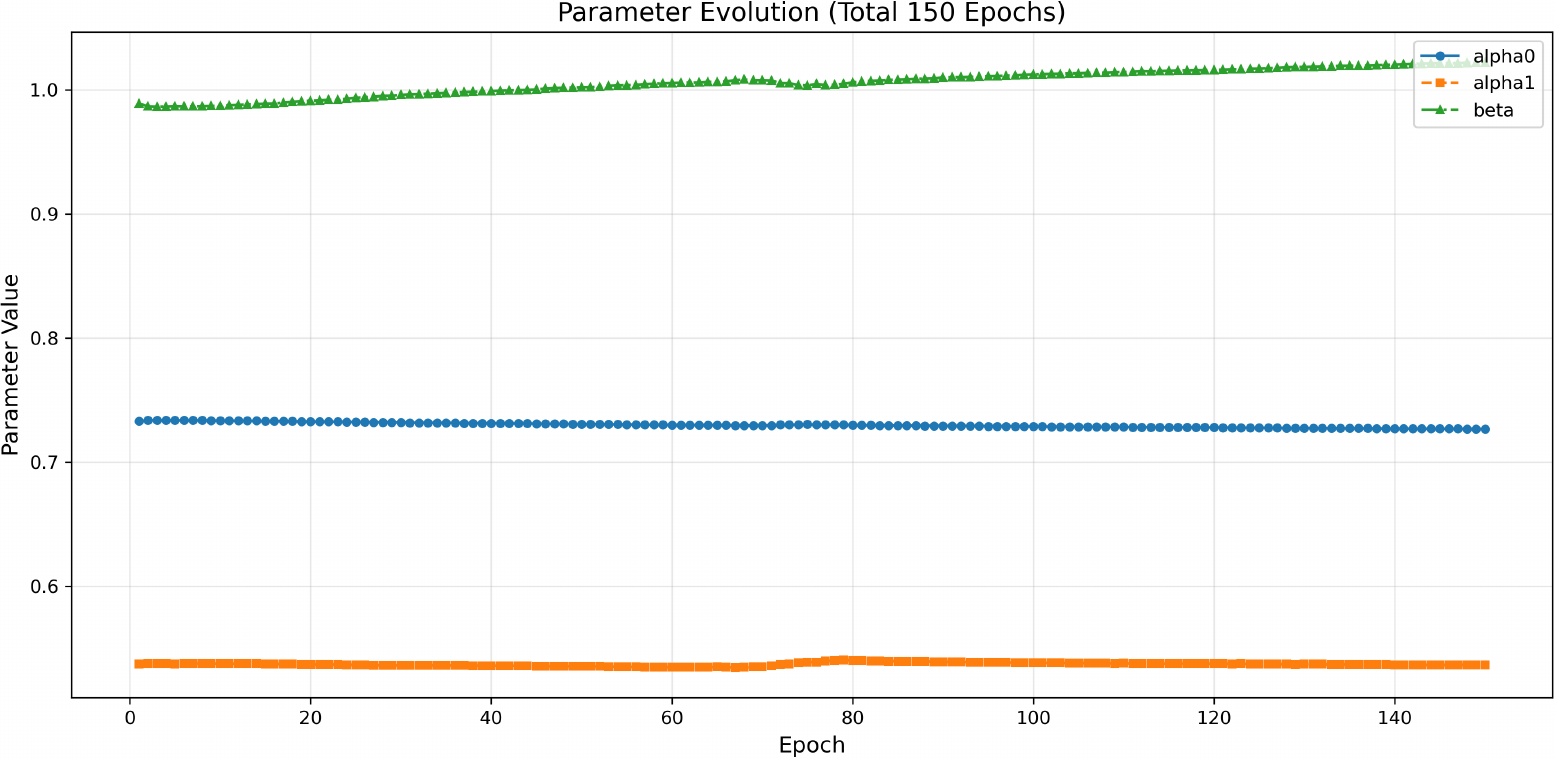}\label{fig:subfig16}}
  \quad
    \subfloat[OCNP: Collab]
  {\includegraphics[width=0.4\textwidth]{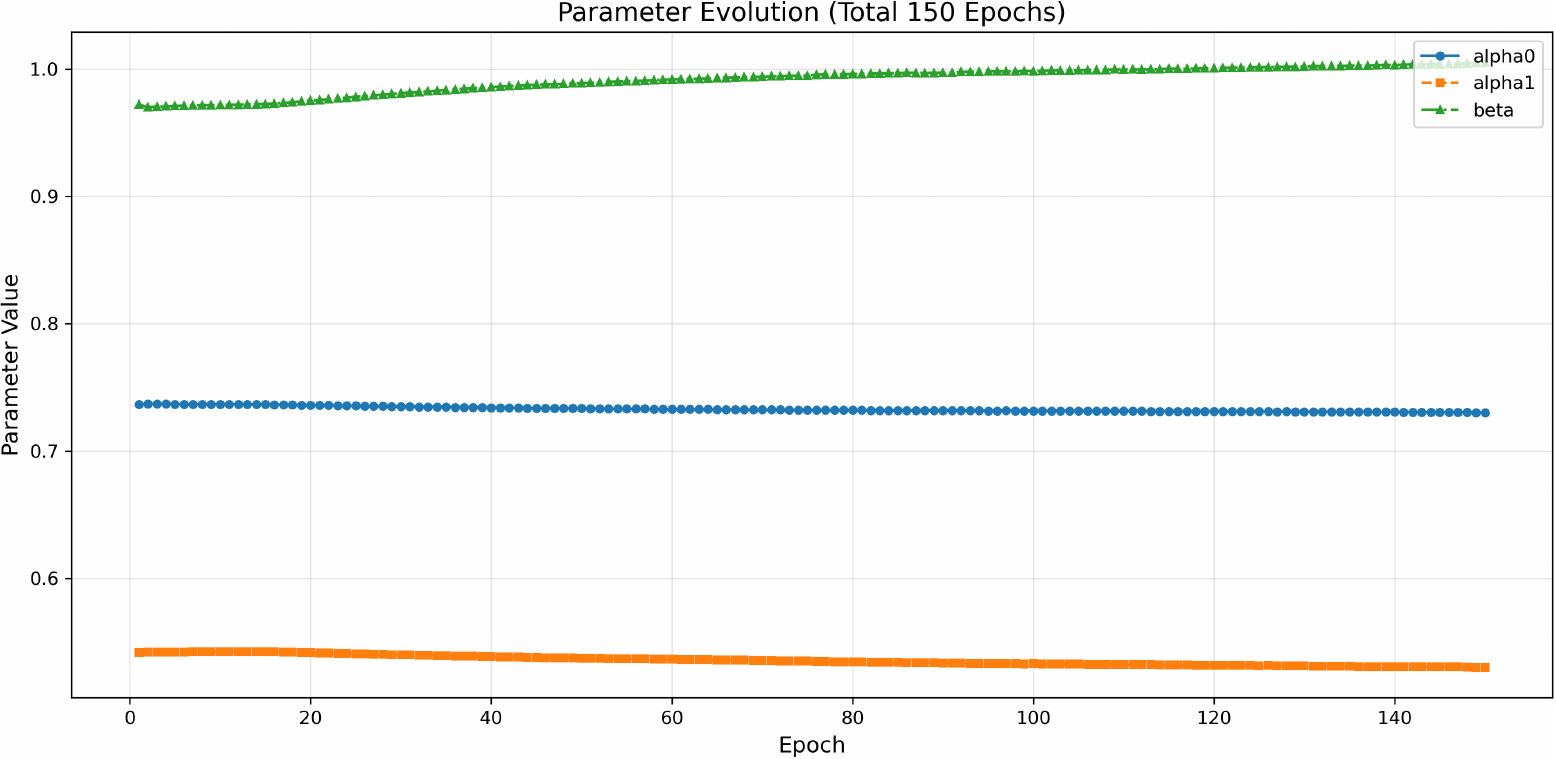}\label{fig:subfig17}}
  \quad
    \subfloat[NCN: Collab]
  {\includegraphics[width=0.4\textwidth]{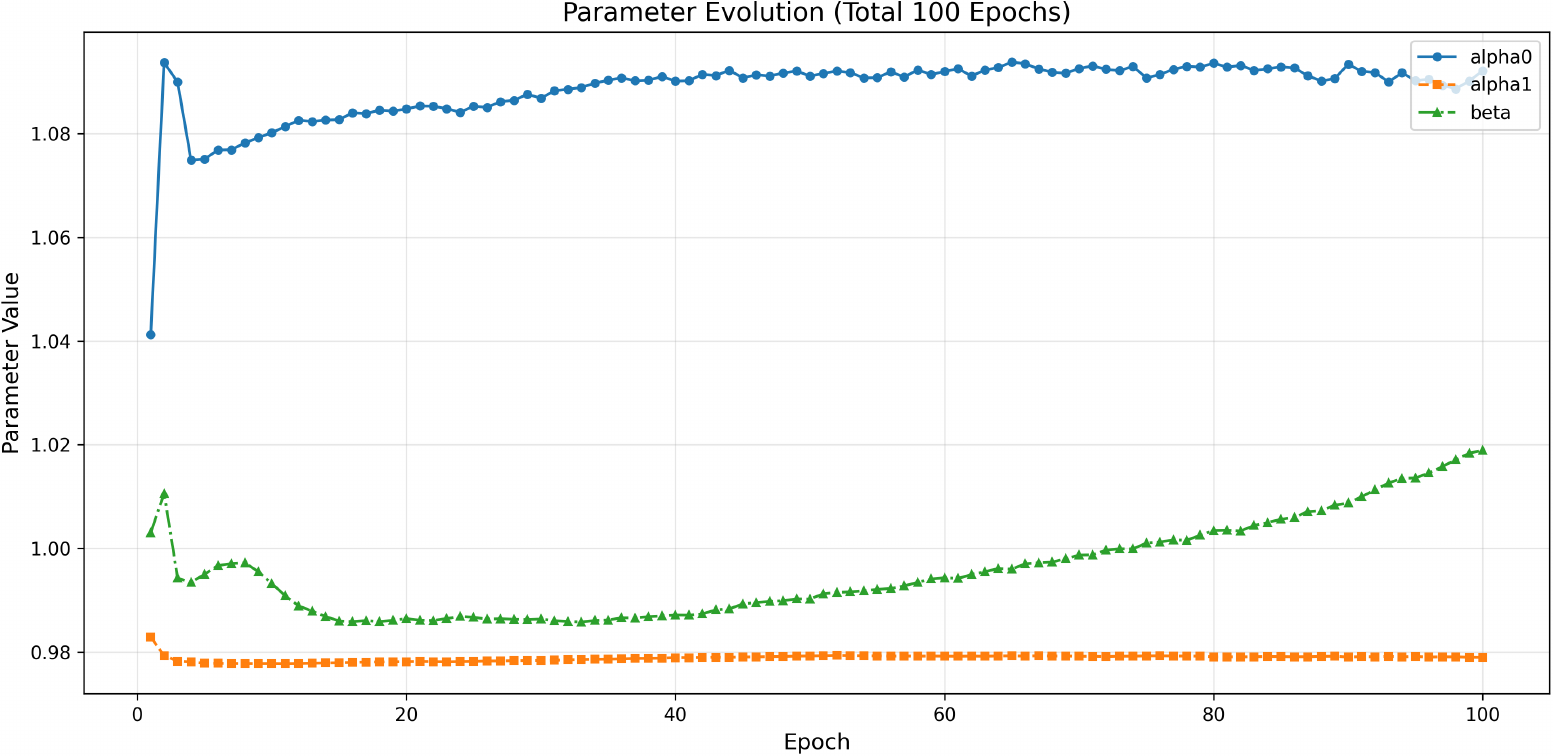}\label{fig:subfig18}}
  \quad
  \caption{In the figure, \(\alpha_0\), \(\alpha_1\), and \(\beta\) represent the learned coefficients for \(OCN^1\), \(OCN^2\), and \(\operatorname{MPNN}(i, A, X) \odot \operatorname{MPNN}(j, A, X)\) respectively in the expression \(\operatorname{MPNN}(i, A, X) \odot \operatorname{MPNN}(j, A, X) + \sum_{k=1}^K \alpha_k \operatorname{OCN^k} \cdot \operatorname{MPNN}(A, X)\).}
\end{figure}

Across all datasets, the learned $\alpha_1$ and $\alpha_2$ values are highly consistent between OCN and OCNP. Additionally, $\alpha_1$ and $\alpha_2$ remain relatively stable across all epochs (we only show epoch 1, epoch 50 and epoch 100 due to space limitations), with $\alpha_1$ always being greater than $\alpha_2$.

Through comparative analysis, we can draw the following conclusions:

1. The orthogonalization and normalization in our method lead to significantly faster convergence.

2. First-order neighbors are more important than second-order. 
   Since NCN does not perform orthogonalization between CN1 and CN2, it results in a certain similarity between CN1 and CN2, which leads to instability in their importance. Therefore, NCN fails to maintain the inductive bias where \(\alpha_2 < \alpha_1 \).
   On large graphs, NCN suffers from OOM issues and cannot complete the linear combination of higher-order CN.

3. NCN reported performance degradation when incorporating higher-order CN which is likely caused by the two key phenomena discovered in our work.

\section{Analysis of the Performance Gap Resulting from Changing the Aggregation Strategy from Summation to Concatenation \label{ApendixO}}

We first formulate the final step of our original model OCN (OCNP) as:

\begin{equation}
\operatorname{MLP}\left(\operatorname{MPNN}(i, A, X) \odot \operatorname{MPNN}(j, A, X) + \alpha_1 \operatorname{OCN^1} \cdot \operatorname{MPNN}(A, X) + \alpha_2 \operatorname{OCN^2} \cdot \operatorname{MPNN}(A, X)\right)
\end{equation}

And the final step of OCN-CAT (OCNP-CAT) is formulated as:
\begin{equation}
\operatorname{MLP}\left(\operatorname{MPNN}(i, A, X) \odot \operatorname{MPNN}(j, A, X) \, || \, \alpha_1 \operatorname{OCN^1} \cdot \operatorname{MPNN}(A, X) \, || \, \alpha_2 \operatorname{OCN^2} \cdot \operatorname{MPNN}(A, X)\right)
\end{equation}

For the -CAT variant:
1. The MLP parameters are divided into three parts $(W_1||W_2||W_3)$.
2. The norms of different parts and \(\alpha\) values jointly determine the importance of different orders.

Experimental results show the norms of the linear layers as presented from \Cref{fig:subfig19} to \Cref{fig:subfig23}, where $W_1$ corresponds to $\operatorname{OCN^1}$ $W_2$ corresponds to $\operatorname{OCN^2}$ and $W_3$ corresponds to $\operatorname{MPNN}(i, A, X) \odot \operatorname{MPNN}(j, A, X)$. For OCN (OCNP), the MLP parameters $W$ do not need to be split into $(W_1||W_2||W_3)$, and their norms are shown by the purple line (from \Cref{fig:subfig24} to \Cref{fig:subfig28}).

\begin{figure}[htbp]
  \centering
  \subfloat[-CAT: Cora]
  {\includegraphics[width=0.4\textwidth]{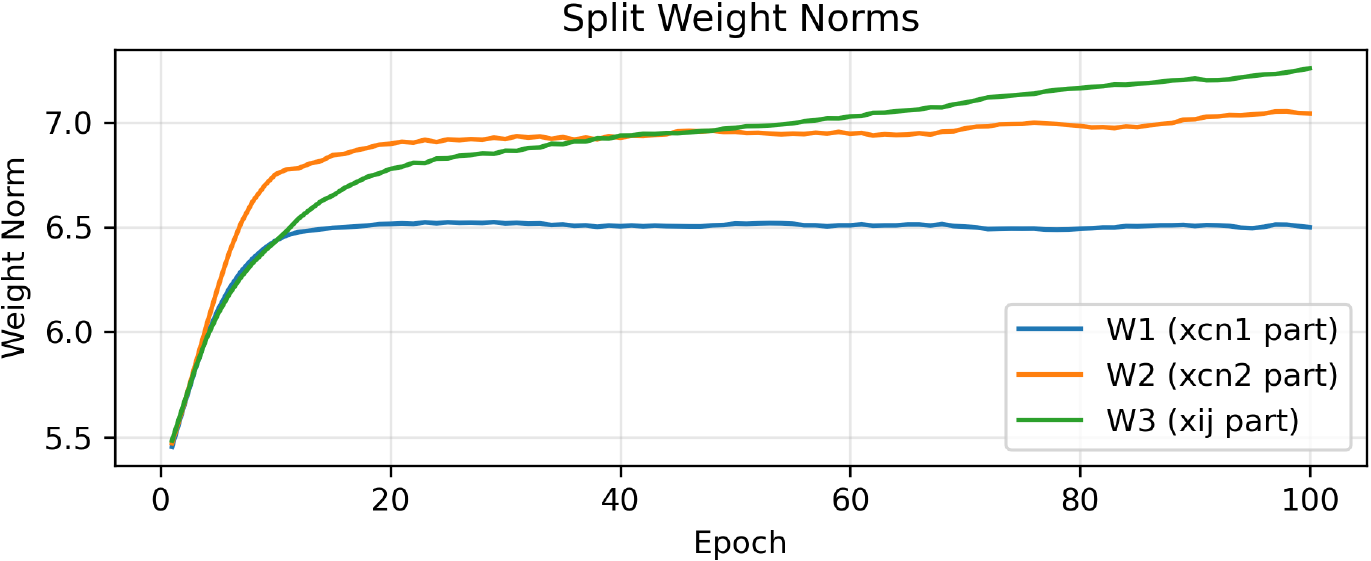}\label{fig:subfig19}}
   \quad   
  \subfloat[-CAT: Citeseer]
  {\includegraphics[width=0.4\textwidth]{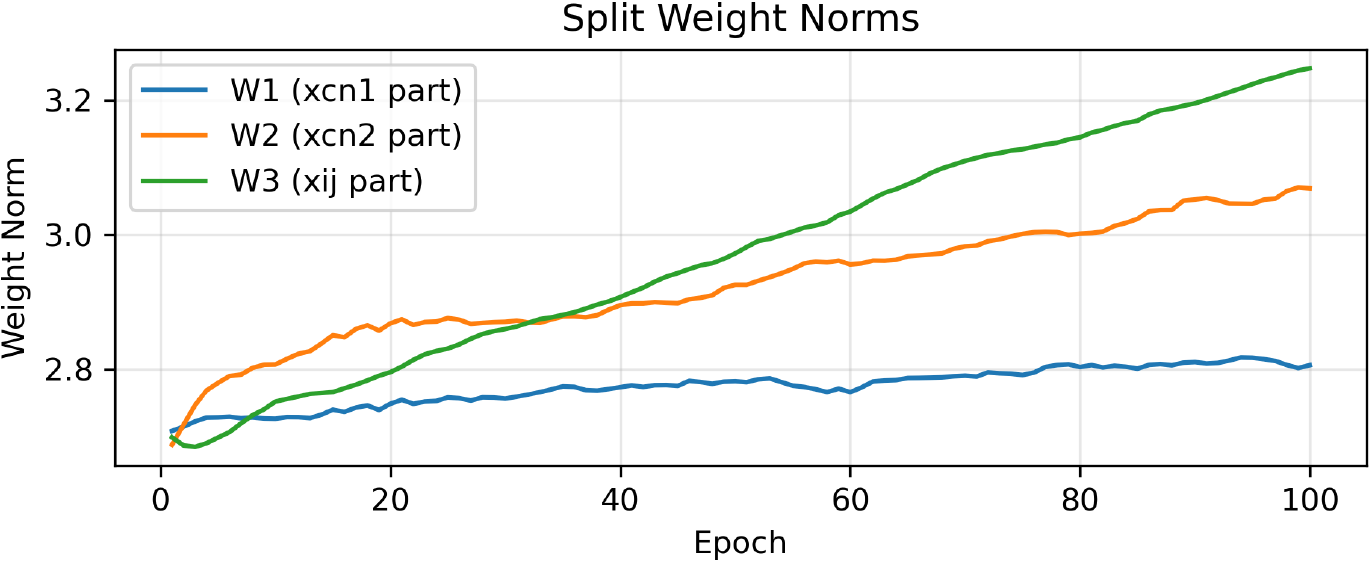}\label{fig:subfig20}}
  \quad
  \subfloat[-CAT: Pubmed]
  {\includegraphics[width=0.4\textwidth]{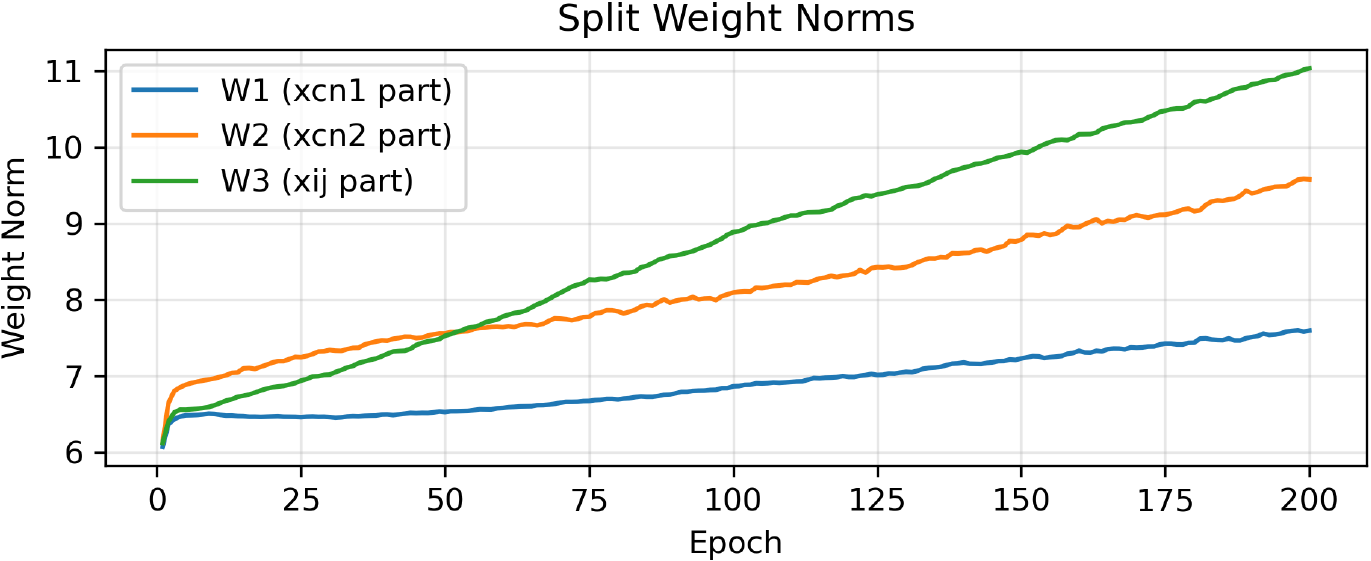}\label{fig:subfig21}}
  \quad
  \subfloat[-CAT: Collab]
  {\includegraphics[width=0.4\textwidth]{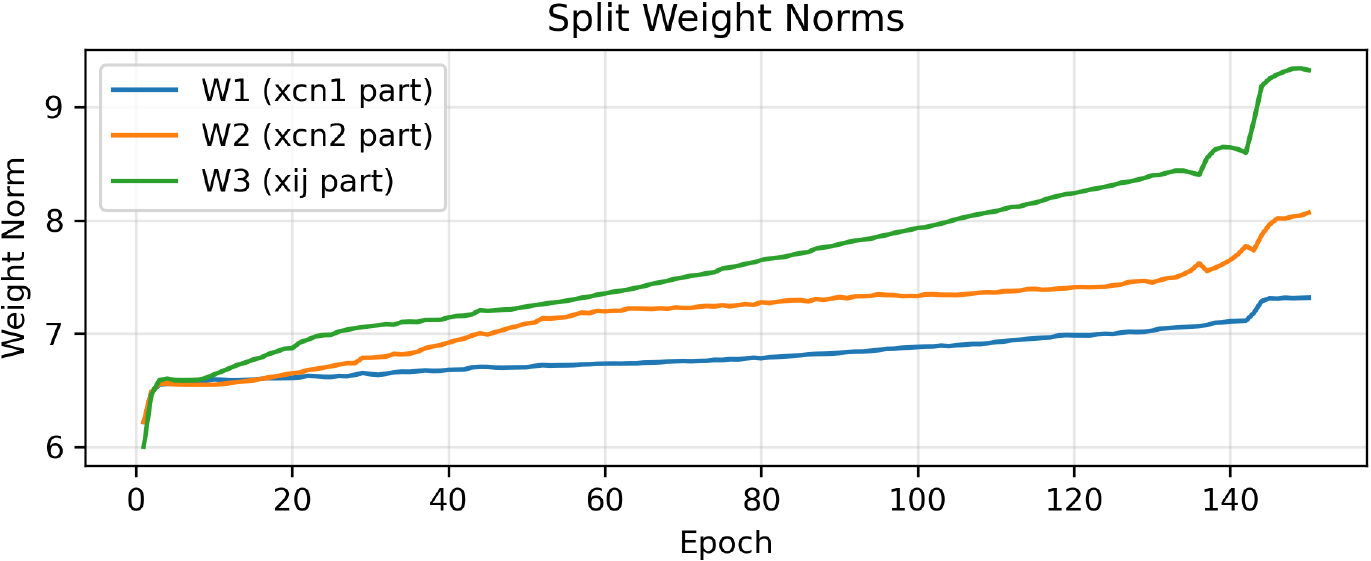}\label{fig:subfig22}}
  \quad
  \subfloat[-CAT: DDI]
  {\includegraphics[width=0.4\textwidth]{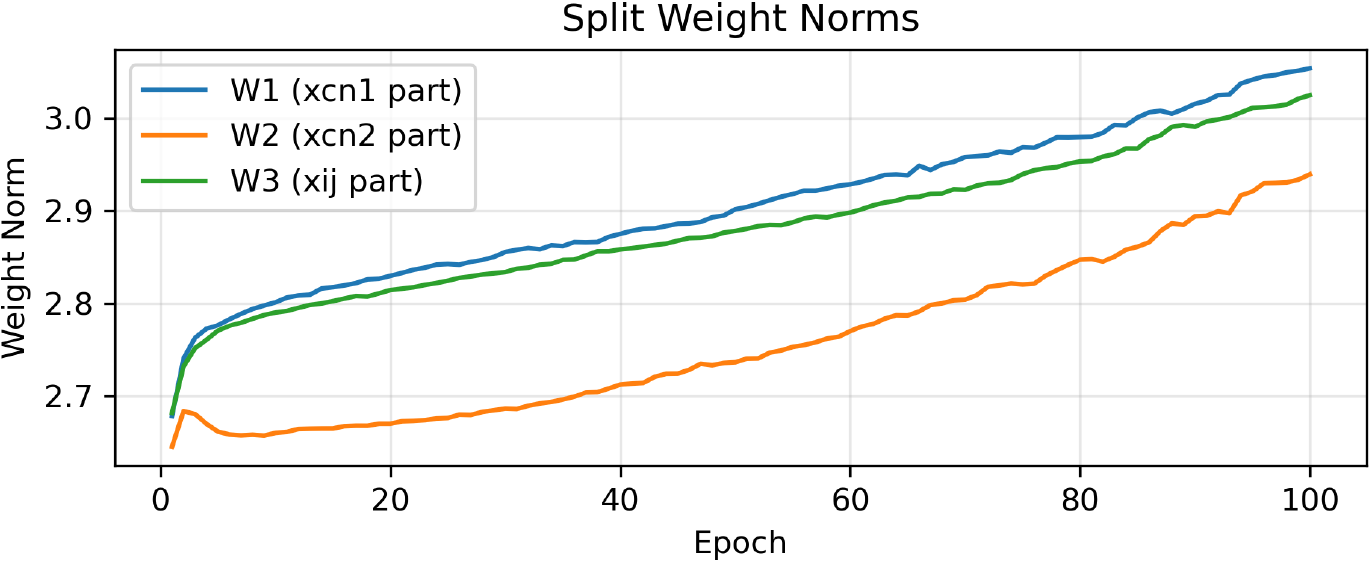}\label{fig:subfig23}}
  \quad
  \subfloat[-SUM: Cora]
  {\includegraphics[width=0.4\textwidth]{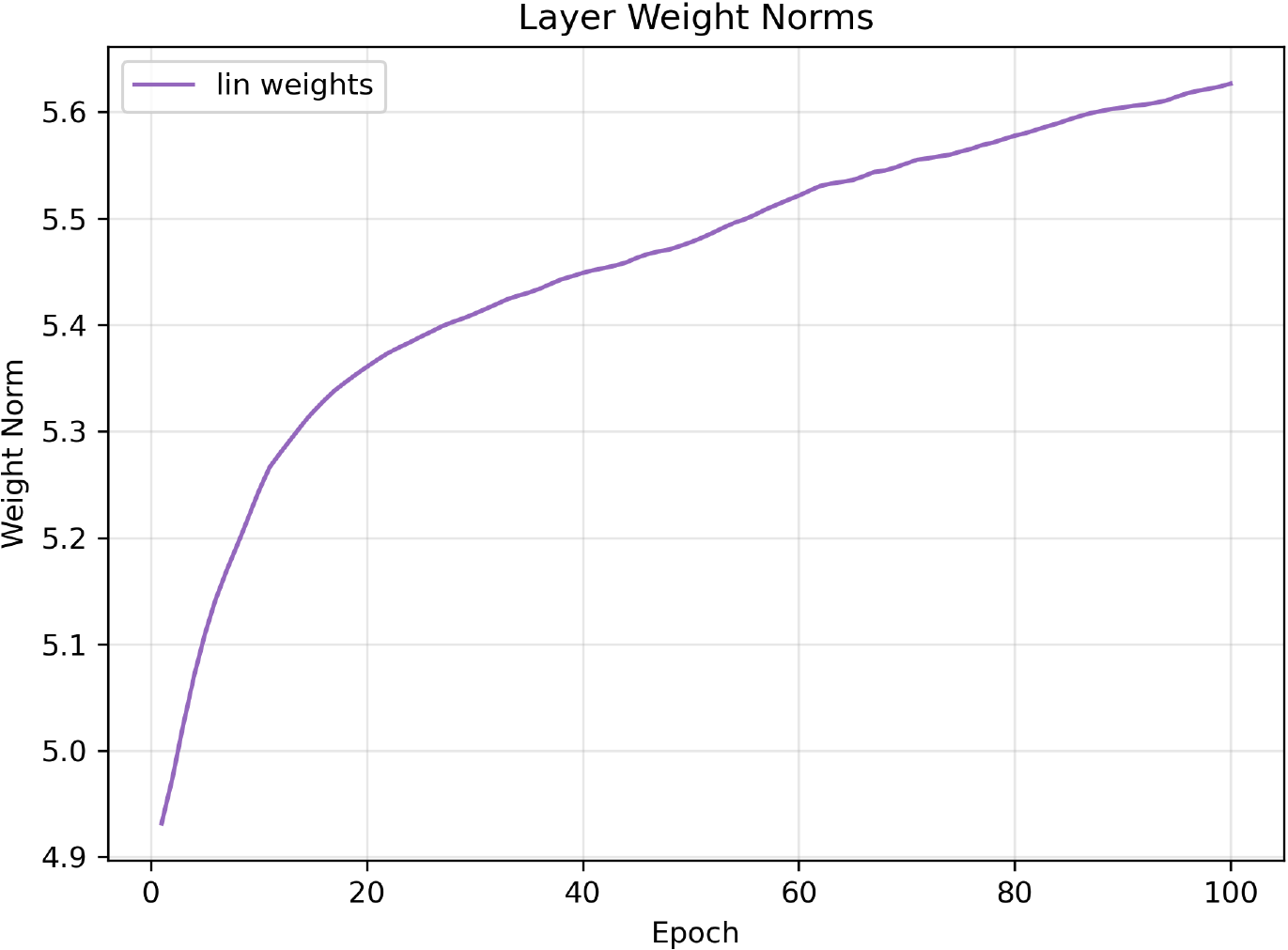}\label{fig:subfig24}}
  \quad
  \subfloat[-SUM: Citeseer]
  {\includegraphics[width=0.4\textwidth]{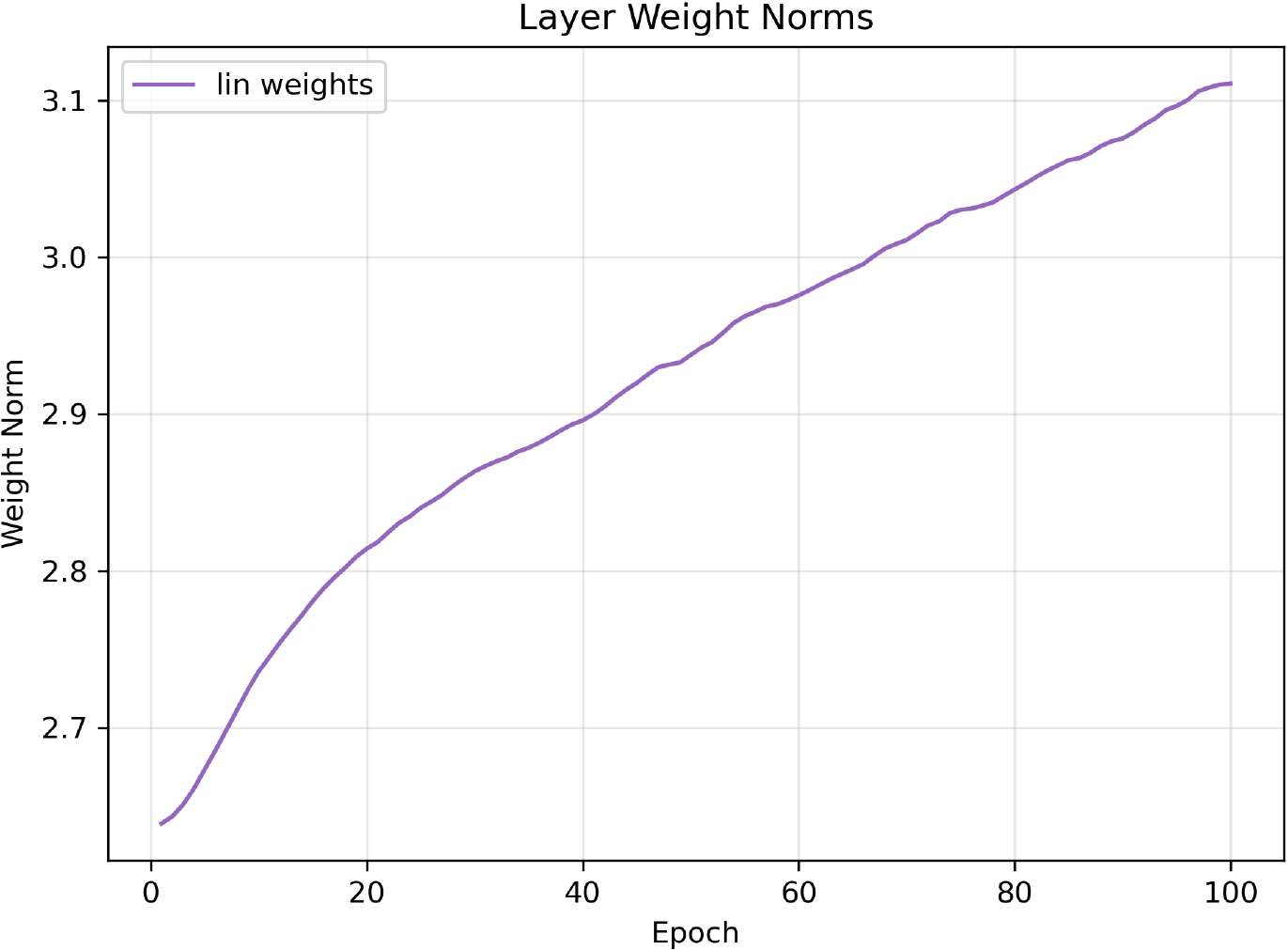}\label{fig:subfig25}}
  \quad
    \subfloat[-SUM: Pubmed]
  {\includegraphics[width=0.4\textwidth]{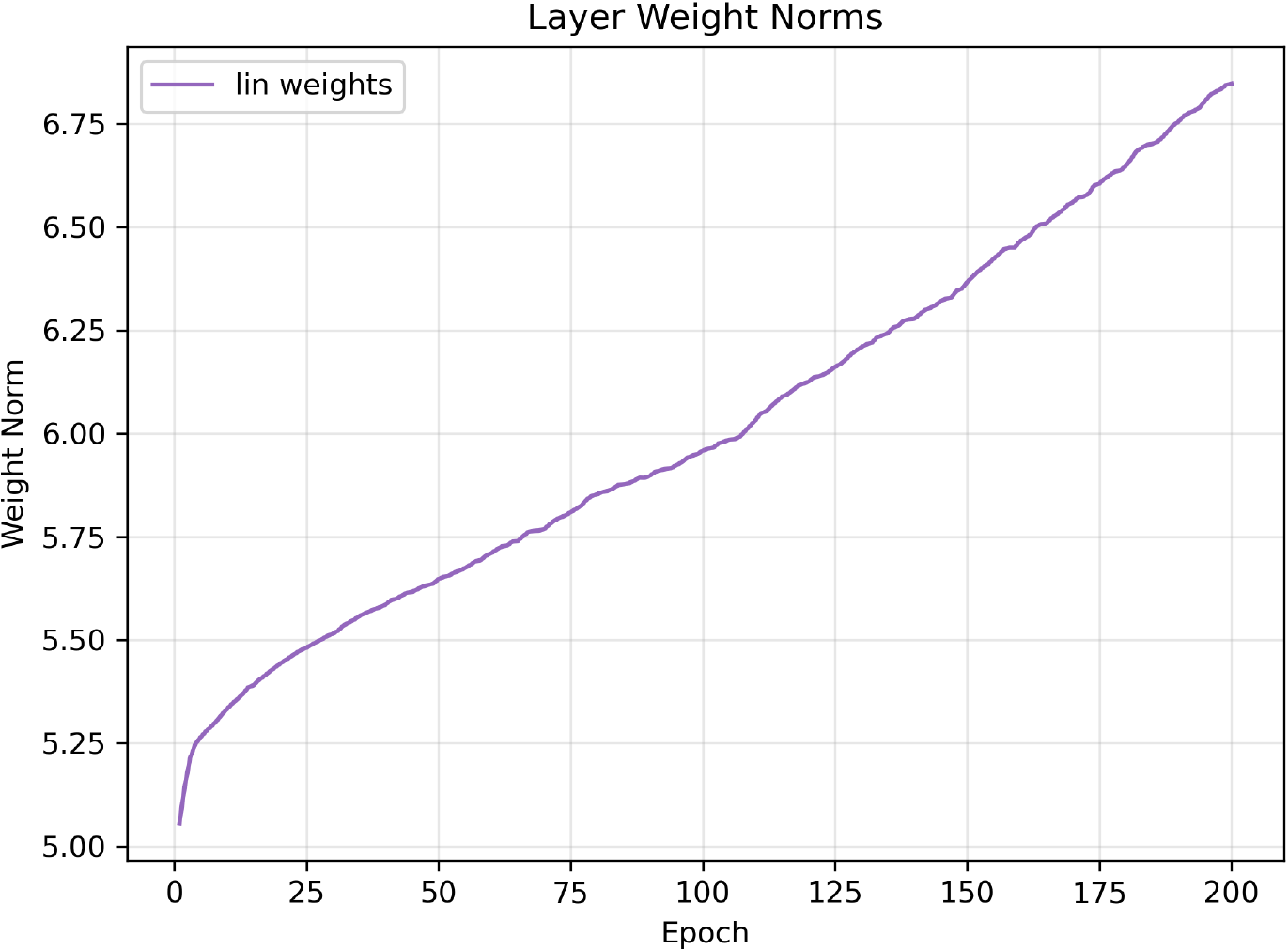}\label{fig:subfig26}}
  \quad
    \subfloat[-SUM: Collab]
  {\includegraphics[width=0.4\textwidth]{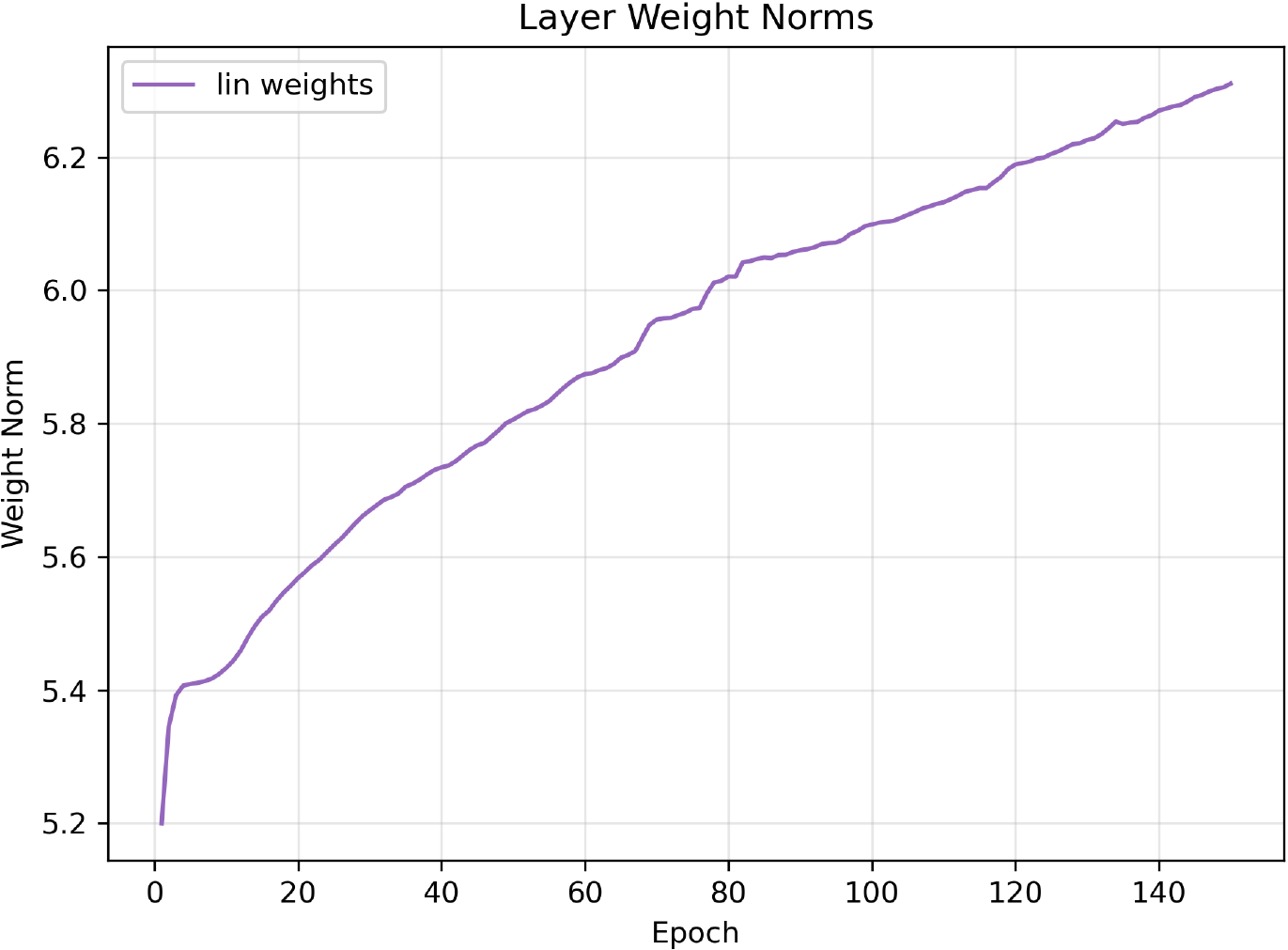}\label{fig:subfig27}}
  \quad
    \subfloat[-SUM: DDI]
  {\includegraphics[width=0.4\textwidth]{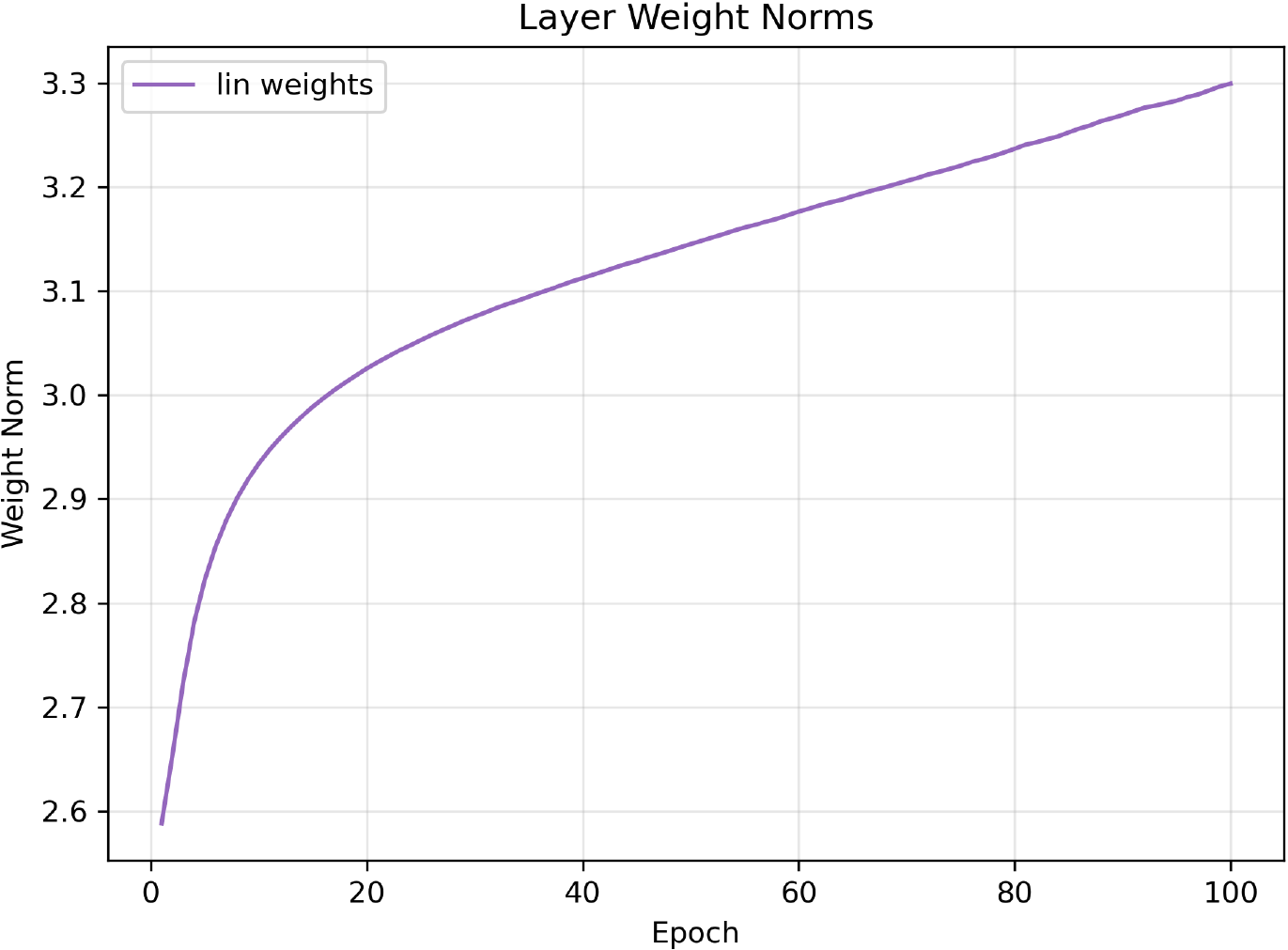}\label{fig:subfig28}}
  
  \quad
  \caption{Norms of the linear layers where $W_1$ corresponds to $\operatorname{OCN^1}$ $W_2$ corresponds to $\operatorname{OCN^2}$ and $W_3$ corresponds to $\operatorname{MPNN}(i, A, X) \odot \operatorname{MPNN}(j, A, X)$. For OCN (OCNP), the MLP parameters $W$ do not need to be split into $(W_1||W_2||W_3)$, and their norms are shown by the purple line.}
\end{figure}

Our empirical analysis reveals two key insights regarding the model's learning behavior:

\begin{enumerate}
    \item \textbf{Norm Distribution Pattern}: The parameter norms associated with higher-order neighbor representations exhibit monotonic growth across network layers (i.e., $\|W^{(k)}\|_2$ increases with order $k$), indicating the model's inherent preference for amplifying higher-order neighborhood information through geometric scaling.

    \item \textbf{Operator Dynamics Comparison}: Through controlled experiments comparing concatenation (CAT) and summation (SUM) operators, we observe:
    
    \begin{itemize}
        \item \textbf{Magnitude Disparity}: CAT implementations consistently produce larger parameter norms than their SUM counterparts ($\|W_{\text{CAT}}\|_2 > \|W_{\text{SUM}}\|_2$)
        
        \item \textbf{Control Mechanism}: The SUM formulation $f_{\text{SUM}} = \text{lin}(\sum_{k=1}^K \alpha_k H^{(k)})$ enables explicit control over neighborhood order importance through learnable coefficients $\{\alpha_k\}_{k=1}^K$, where $\alpha_k$ directly determines the relative contribution of $k$-th order features. 
        
        \item \textbf{Coupling Effect}: In contrast, the CAT formulation $f_{\text{CAT}} = \text{lin}(\mathbin{\Vert}_{k=1}^K \alpha_k H^{(k)})$ demonstrates parameter entanglement between projection matrices and coefficients, as the effective importance becomes jointly determined by $\alpha_k$ and the induced $\|W[:,d_k]\|_2$ norms in the linear transformation, where $d_k$ denotes the dimension slice for $k$-th order features.
    \end{itemize}
\end{enumerate}

\section{Algorithm \label{ApendixP}}
Please refer to \Cref{alg:Orthogonalization} and \Cref{alg:OCNOverBatch}.

\begin{algorithm}[t]
   \caption{\textsc{OrthogonalizationOverBatch}}
   \label{alg:Orthogonalization}
\begin{algorithmic}
   \STATE {\bfseries Input:}  $\{CN_t^k\}_{k=1}^K$ over a mini-batch $\mathcal{B}_t = \{ \{CN_t^k\}_{k=1}^K \}_t$;
   \STATE Truncated polynomial order $K$;
   \STATE Running inner product $\{\{ \hat{\xi}^i \}_{i=1}^{K}\}_{t-1}$ over the last mini-batch. 
   \STATE {\bfseries Output:} Orthogonalized data $\{OCN_t^k\}_{k=1}^K$
   \STATE Initialize $OCN_t^1 \leftarrow CN_t^1 / \|CN_t^1\|$
   \FOR{$k=2$ {\bfseries to} $K$}
      \IF{training}
         \FOR{$i=1$ {\bfseries to} $k-1$}
            \STATE $\xi_t^i \leftarrow \langle CN_t^k,  OCN_t^i \rangle$ \textcolor{blue}{\hfill $\triangleright$ Inner product within this batch}
            \STATE $\beta_t \leftarrow 1/ (t+1)$
            \STATE $\hat{\xi}_t^i \leftarrow (1-\beta_t) \hat{\xi}_{t-1}^i + \beta_t \xi_t^i$  \textcolor{blue}{\hfill $\triangleright$ Maintain the global running inner product between mini-batches}
         \ENDFOR
      \ENDIF
   \STATE  $CN_t^{k \perp} \leftarrow \sum_{i=1}^{k-1} \hat{\xi}_t^i \cdot OCN_t^i$
   \STATE  $OCN_t^k \leftarrow CN_t^{k \perp} / \|CN_t^{k \perp}\|$
   \ENDFOR
   \STATE \textbf{return} $\{OCN_t^k\}_{k=1}^K$
\end{algorithmic}
\end{algorithm}

\begin{algorithm}[tb]
   \caption{\textsc{OCNOverBatch}}
   \label{alg:OCNOverBatch}
\begin{algorithmic}
   \STATE {\bfseries Input:}  $\{CN_t^k\}_{k=1}^K$ over a mini-batch $\mathcal{B}_t = \{ \{CN_t^k\}_{k=1}^K \}_t$ ;Truncated polynomial order $K$;Running inner product $\{\{ \hat{\xi}^i \}_{i=1}^{k}\}_{t-1}$ and $\hat{\psi}_{t-1}^k$ over the last mini-batch;input graph $A$,   a node feature matrix $X$ and target links $\{ (i_1,  j_1),  (i_2,  j_2),  \dots,  (i_t,  j_t) \}$

   \STATE {\bfseries Learnable Parameters:} $\alpha_k$,  all parameters in MPNNs
   \STATE {\bfseries Output:} link existence probability $\hat{A}_{ij}$
   
   \FOR{$k = 1$ {\bfseries to} $K$}
   \IF{training}
       \STATE $\psi_t^k \leftarrow {CN^k}^\top 1_h$
       \STATE $\gamma_t \leftarrow 1 /(1+t)|$
       \STATE $\hat{\psi}_{t}^k\leftarrow (1-\gamma_t) \hat{\psi}_{t-1}^k + \gamma_t \psi_t^k$ \textcolor{blue}{ \hfill $\triangleright$ Convergence to the full graph of $ CN^k \odot normalizedCN^k $}
    \ENDIF
    \STATE $CN_t^k \leftarrow CN_t^k \cdot
  \text{diag}\left(\left(\hat{\psi}_t^k \right)^{-1}\right)$
  \ENDFOR  
  \STATE\begin{align}
       \{OCN_t^k\}_{k=1}^K \leftarrow & \text{\textsc{OrthogonalizationOverBatch}}\\
      &(\{CN_t^k\}_{k=1}^K,  K,  \{\{ \hat{\xi}^i \}_{i=1}^{k}\}_{t-1})
  \end{align}

   \STATE $Z_{ij} \leftarrow O C N(i, j,  A,  X)$ as described in \eqref{OCNmodel}
   \STATE $\hat{A}_{ij} \leftarrow \sigma(\text{MLP}(z_{ij}))$
   
   \STATE \textbf{return} $\hat{A}_{ij}$
\end{algorithmic}
\end{algorithm}

\section{Theoretical Analysis with Barabási-Albert Model\label{ApendixR}}

The theoretical analysis is conducted on random graph models, which may not fully capture the structural properties of real-world networks. Therefore, we extend the theoretical arguments to the more realistic Barabási-Albert model. Given the numerous variants and extensions of this model, we select a class with broader universal significance. The specific construction method of a graph in this model is as follows:

\begin{definition}
\label{def:Construction}(Graph-Construction in Barabási-Albert)
Form $G_n$ from $G_{n-1}$ by adding vertex $n$, sampling $m$ (with replacement) vertices $w_1,\dots,w_m$ from $G_{n-1}$, and connecting $n$ to each $w_i$.

Conditioned on the past, the $w_i$ are i.i.d.:
for $k<n$
\begin{equation}
\Pr(w_i=k)=\frac{\deg_{n-1}(k)}{Z},\qquad
Z=\sum_{k=1}^{n-1}\deg_{n-1}(k).
\end{equation}

\end{definition}

\begin{definition}
\label{def:score}
We define the score $s_{ij}$ based on the linking probability of $(i,j)$:

\begin{equation}
Pr(i\sim j|s_{ij},G_{max(i,j)-1})=\frac{1}{1+e^{\alpha(s_{ij}-r_{min(i,j)})}},
\end{equation}

where $r$ satisfies $\deg(k)=NV(r_k)=NV(1)r^{D}_k$, $N=\#nodes.$
\end{definition}

\begin{proposition}\label{expofdegree}
When connecting an edge at vertex $b$, the expectation of degree of vertex $a$ (assuming $b>a$) is:
\begin{equation}
E(\deg_b(a))=m\frac{(2|a-b|-1)!!}{2^{|a-b|} |a-b|!}.
\end{equation}

Simultaneously,
\begin{equation} 
E[Pr(a\sim b|G_{b-1})]=\frac{1}{4^{|a-b|}}\tbinom{2|a-b|}{|a-b|}\sim \frac{1}{\sqrt{\pi |a-b|}} \sim O(|a-b|^{-\frac{1}{2}}).
\end{equation}

\end{proposition}

\begin{proof}

Consider that when forming $G_{\xi}$, the newly added vertex is $\xi$, and let the number of edges connecting $\xi$ to the previously existing vertices be $m$, i.e., $\deg_{\xi}(\xi) = m$.  
When forming $G_{\xi+1}$, the newly added vertex is $\xi+1$, and it is clear that the expected contribution of $\xi+1$ to the degree of $\xi$ at this time is  
\begin{equation}
\deg_{\xi+1}(\xi) = \frac{\deg_{\xi}(\xi)}{2m(\xi-1)} \cdot m.
\end{equation}
By analogy, when forming $G_{\xi+2}$, we have  
\begin{equation}
\deg_{\xi+2}(\xi) = \frac{\deg_{\xi}(\xi) + \deg_{\xi+1}(\xi)}{2m\xi} \cdot m.
\end{equation} 

At this point, we can restate the problem as the following problem of finding the general term of a sequence:

The first term $E_1 = 0$, the second term $E_2 = m$, and for $v \geq 3$, the recurrence formula is:

\begin{equation}
E_v = \frac{E_1 + E_2 + \cdots + E_{v-1}}{2(v-2)}.
\end{equation}

According to the recurrence formula, for $v \geq 3$: $E_v = \frac{S_{v-1}}{2(v-2)}$.

Meanwhile, the partial sums satisfy: $S_v = S_{v-1} + E_v$.
Substituting $E_v$:

\begin{equation}
S_v = S_{v-1} + \frac{S_{v-1}}{2(v-2)} = S_{v-1} \cdot \frac{2v-3}{2(v-2)}.
\end{equation}

Iterating from $v = 3$:

\begin{equation}
S_v = S_2 \prod_{j=3}^v \frac{2j-3}{2(j-2)} = m \prod_{j=3}^v \frac{2j-3}{2(j-2)}, \quad v \geq 2.
\end{equation}

Splitting the product into two parts:

\begin{equation}
\prod_{j=3}^v \frac{2j-3}{2(j-2)} = \left( \prod_{j=3}^v \frac{1}{2} \right) \times \left( \prod_{j=3}^v \frac{2j-3}{j-2} \right).
\end{equation}

The first part:

\begin{equation}
\prod_{j=3}^v \frac{1}{2} = \left( \frac{1}{2} \right)^{v-2}.
\end{equation}

For the second part, let $k = j - 2$, then when $j = 3$, $k = 1$, and when $j = v$, $k = v - 2$:

\begin{equation}
\prod_{j=3}^{v} \frac{2j - 3}{j - 2} = \prod_{k=1}^{v-2} \frac{2(k + 2) - 3}{k} = \prod_{k=1}^{v-2} \frac{2k + 1}{k}
\end{equation}

Now compute the product $\prod_{k=1}^{m} \frac{2k + 1}{k}$, where $m = v - 2$:

\begin{equation}
\prod_{k=1}^{m} \frac{2k + 1}{k} = \frac{\prod_{k=1}^{m} (2k + 1)}{\prod_{k=1}^{m} k} = \frac{\prod_{k=1}^{m} (2k + 1)}{m!}
\end{equation}

Given:

\begin{equation}
\prod_{k=1}^{m} (2k + 1) = \frac{(2m + 2)!}{2^{m+1} (m + 1)!}
\end{equation}

Therefore:

\begin{equation}
\prod_{k=1}^{v-2} \frac{2k+1}{k} = \frac{(2(v-2)+2)!}{2^{(v-2)+1}((v-2)+1)!(v-2)!} = \frac{(2v-2)!}{2^{v-1}(v-1)!(v-2)!}
\end{equation}

\begin{equation}
S_v = m \cdot \left( \frac{1}{2} \right)^{v-2} \cdot \frac{(2v-2)!}{2^{v-1}(v-1)!(v-2)!} = m \cdot 2^{-2v+3} \cdot \frac{(2v-2)!}{(v-1)!(v-2)!}
\end{equation}

Note that:

\begin{equation}
\frac{(2v-2)!}{(v-1)!(v-2)!} = (v-1) \binom{2v-2}{v-1},
\end{equation}

Therefore:

\begin{equation}
S_v = m(v-1) \binom{2v-2}{v-1} 2^{-2v+3}, \quad v \geq 2.
\end{equation}

\begin{equation}
E_v = \frac{m(v-2) \binom{2v-4}{v-2} 2^{-2v+5}}{2(v-2)} = m \binom{2v-4}{v-2} 2^{-2v+4}.
\end{equation}

That is:

\begin{equation}
E_v = m \cdot \frac{2^{v-2}(2(v-2)-1)!!}{(v-2)!} \cdot \left(\frac{1}{4}\right)^{v-2} = m \cdot \frac{(2v-5)!!}{(v-2)!} \cdot \frac{1}{2^{v-2}}
\end{equation}

Returning to our original problem, we consider the expectation of the degree of vertex $a$ when connecting an edge at vertex $b$, and we can clearly obtain:

\begin{equation}
E(\deg_b(a))=m\frac{(2|a-b|-1)!!}{2^{|a-b|} |a-b|!}.
\end{equation}

And we have:
\begin{equation} 
E[Pr(a\sim b|G_{b-1})]=\frac{1}{4^{|a-b|}}\tbinom{2|a-b|}{|a-b|}\sim \frac{1}{\sqrt{\pi |a-b|}} \sim O(|a-b|^{-\frac{1}{2}}).
\end{equation}

\end{proof}

\begin{proposition}\label{P2k}
For any $\delta$ > 0, with probability at least $\delta$, we have

\begin{equation}
P_{2k}(i,j)\leq \frac{\prod^{2k}\frac{(2\Delta^++1)!!}{2^{\Delta^+} \Delta^+!}+\sqrt{\frac{\Delta^+ \ln \delta^{-1}}{2}}}{2^{2k}(min(i,k_1,\cdots,k_{2k-1},j)-2)^{2k}},
\end{equation}

where $\Delta^+=max(|k_i-k_{i+1}|)$.

\end{proposition}

\begin{proof}
The idea of the proof is very similar to our previous proof of \Cref{distanceboundwithk-orderCNs}. First, from the result of \Cref{expofdegree}, we can easily obtain:
For any $ t > 0 $, 
\begin{equation}
\Pr\left( \deg_b(a) - \mathbb{E}[\deg_b(a)] \geq t \right) \leq \exp\left( -\frac{2t^2}{|a-b|m^2} \right). 
\end{equation}

Similar to the method used earlier to prove \Cref{distanceboundwithk-orderCNs}, we can similarly obtain:

\begin{equation}\label{pasimbupper}
P(a\sim b)\leq \frac{\frac{m(2|a-b|+1)!!}{2^{|a-b|} |a-b|!}+\sqrt{\frac{|a-b| \ln \delta^{-1}}{2}}}{2m(b-2)},
\end{equation}

Therefore we can obtain:
\begin{equation}
P_{2k}(i,j) = P(a\sim k_1 \sim k_2 \sim \dots \sim b)\leq \frac{\prod^{2k}\frac{(2\Delta^++1)!!}{2^{\Delta^+} \Delta^+!}+\sqrt{\frac{\Delta^+ \ln \delta^{-1}}{2}}}{2^{2k}(min(i,k_1,\cdots,k_{2k-1},j)-2)^{2k}},
\end{equation}

\end{proof}

\begin{proposition}\label{sij}
For any $\delta$ > 0, with probability at least $\delta$, we have

\begin{equation}
s_{ij}\leq 2k\Bigl[\frac{1}{\alpha}\ln(\frac{2(N-2)}
     { \frac{(2N+1)!!}{2^{N}N!}+
                \frac{\sqrt{N\ln\delta^{-1}}}{4}}-1)
                +\Bigl[\frac{1}{NV(1)}(m\frac{(2N+1)!!}{2^N N!}+\sqrt{\frac{Nm^2}{2}\ln{\delta^{-1}}})\Bigr]^{\frac{1}{D}}\Bigr],
\end{equation}
                
where $N=\#nodes$,$k$ represents the order of the k-hop CNs.

\end{proposition}

\begin{proof}

From the process of obtaining \Cref{pasimbupper} and \Cref{def:score}, we can obtain:
\begin{equation}
    \frac{1}{1+e^{\alpha(d_{k_i k_{i+1}}-r_{min(k_i,k_{i+1})})}} \geq \frac{\frac{m(2|k_{i+1}-k_i|+1)!!}{2^{|k_{i+1}-k_i|} |k_{i+1}-k_i|!}+\sqrt{\frac{|k_{i+1}-k_i| \ln \delta^{-1}}{2}}}{2(max(k_{i+1},k_i)-2)}.
\end{equation}

Let $|k_{i+1}-k_i| = t$, we can get:
\begin{equation}
    d_{k_i k_{i+1}} \leq \frac{1}{\alpha}\ln(\frac{2((max(k_{i+1},k_i)-2)}
     { \frac{(2t+1)!!}{2^{t}t!}+
                \frac{\sqrt{t\ln\delta^{-1}}}{4}}-1) + r_{min(k_i,k_{i+1})}.
\end{equation}

So next we only need to compute $r_{min(k_i,k_{i+1})}$, let $k_-=min(k_i,k_{i+1})$.

We have \begin{equation}
\deg(k_-)=NV(r_{k_-})=NV(1)r_k^D \leq \frac{m(2|N-k_-|+1)!!}{2^{|N-k_-|} |N-k_-|!}+\sqrt{\frac{|N-k_-| m^2 \ln \delta^{-1}}{2}},
\end{equation}
so:

\begin{equation}
r_{min(k_i,k_{i+1})} \leq \big[ \frac{1}{NV(1)} (\frac{m(2|N-k_-|+1)!!}{2^{|N-k_-|} |N-k_-|!}+\sqrt{\frac{|N-k_-| m^2 \ln \delta^{-1}}{2}} ) \big]^{\frac{1}{D}}
\end{equation}

In summary,
\begin{equation}
   d_ij \leq 2k \cdot d_{k_i k_{i+1}} \leq 2k \big[\frac{1}{\alpha}\ln(\frac{2((max(k_{i+1},k_i)-2)}
     { \frac{(2t+1)!!}{2^{t}t!}+
                \frac{\sqrt{t\ln\delta^{-1}}}{4}}-1) + \big[ \frac{1}{NV(1)} (\frac{m(2|N-k_-|+1)!!}{2^{|N-k_-|} |N-k_-|!}+\sqrt{\frac{|N-k_-| m^2 \ln \delta^{-1}}{2}} ) \big]^{\frac{1}{D}}\big].
\end{equation}
    
\end{proof}

\begin{proposition}\label{sijafternoCN}
After introducing normalizedCN, for any $\delta$ > 0, with probability at least $\delta$, we have

\begin{equation}
s_{ij}\leq 2k\Bigl[\frac{1}{\alpha}\ln(\Bigl[
  -\frac{n-2}{N-n-1}
  W\!\Bigl(
       -\frac{N-n-1}{n-2}C^{\tfrac{1}{n-2}}
    \Bigr)
\Bigr]^{-\tfrac{1}{k}}-1)
                +\Bigl[\frac{1}{NV(1)}(m\frac{(2N+1)!!}{2^N N!}+\sqrt{\frac{Nm^2}{2}\ln{\delta^{-1}}})\Bigr]^{\frac{1}{D}}\Bigr],
\end{equation}
where $W(\cdot)$ is \textbf{Lambert $W$ function}, $\zeta$ is the maximum degree of all $k$-hop CNs of $(i, j)$, the total number of paths of length
$l$ between i and j is denoted as $\eta_l(i, j)$
\begin{equation}
C =
\frac{1}{\tbinom{\zeta}{2}}
\frac{D^{2k-1}}
     {\eta_{2k} - D^{2k-2}
        \dfrac{\sqrt{N\ln\delta^{-1}}}{4}},
\end{equation}$D$ is the maximum degree on the graph.

\end{proposition}

\begin{proof}

Using an idea very similar to the process of proving \Cref{distanceboundwithHoCN}, we first obtain:

\begin{equation}
\begin{aligned}
E\left[\eta_{\sum_{CNk}}(i)\right]
&\geqslant\binom{\zeta}{2}{\left[ \frac{\prod^{k}\frac{(2\Delta^++1)!!}{2^{\Delta^+} \Delta^+!}+\sqrt{\frac{\Delta^+ \ln \delta^{-1}}{2}}}{2^{k}(min(i,k_1,\cdots,k_{2k-1},j)-2)^{k}}  \right]^n } 
{\left[1- \frac{\prod^{k}\frac{(2\Delta^-+1)!!}{2^{\Delta^-} \Delta^-!}+\sqrt{\frac{\Delta^- \ln \delta^{-1}}{2}}}{2^{k}(max(i,k_1,\cdots,k_{2k-1},j)-2)^{k}} \right]^{N-n-1} }
\end{aligned}
\end{equation}

Let $D$ be the maximum degree in the entire graph, then we have:
\begin{equation}
\eta_{\sum_{CNk}}(i) \leq \frac{D^{2k-1} \frac{\prod^{2k}\frac{(2\Delta^++1)!!}{2^{\Delta^+} \Delta^+!}+\sqrt{\frac{\Delta^+ \ln \delta^{-1}}{2}}}{2^{2k}(min(i,k_1,\cdots,k_{2k-1},j)-2)^{2k}}}{\binom{\zeta}{2}{\left[ \frac{\prod^{k}\frac{(2\Delta^++1)!!}{2^{\Delta^+} \Delta^+!}+\sqrt{\frac{\Delta^+ \ln \delta^{-1}}{2}}}{2^{k}(min(i,k_1,\cdots,k_{2k-1},j)-2)^{k}}  \right]^n } 
{\left[1- \frac{\prod^{k}\frac{(2\Delta^-+1)!!}{2^{\Delta^-} \Delta^-!}+\sqrt{\frac{\Delta^- \ln \delta^{-1}}{2}}}{2^{k}(max(i,k_1,\cdots,k_{2k-1},j)-2)^{k}} \right]^{N-n-1} }} +D^{2k-2}
        \dfrac{\sqrt{N\ln(2\delta)^{-1}}}{4}
\end{equation}

let $P(\Delta^+)=\frac{\frac{(2\Delta^++1)!!}{2^{\Delta^+} \Delta^+!}+\sqrt{\frac{\Delta^+ \ln \delta^{-1}}{2}}}{2(min(i,k_1,\cdots,k_{2k-1},j)-2)}$, $P(\Delta^-)=\frac{\frac{(2\Delta^-+1)!!}{2^{\Delta^-} \Delta^-!}+\sqrt{\frac{\Delta^- \ln \delta^{-1}}{2}}}{2(min(i,k_1,\cdots,k_{2k-1},j)-2)}$ and $\prod^{k} P(\frac{\Delta^+ + \Delta^-}{2}) = \lambda $, we have:

\begin{equation}
    \eta_{\sum_{CNk}}(i) \leq \frac{D^{2k-1} }{\binom{\zeta}{2}{\lambda^n } 
{(1- \lambda )^{N-n-1} }} +D^{2k-2}
        \dfrac{\sqrt{N\ln(2\delta)^{-1}}}{4}.
\end{equation}

We then get:
\begin{equation}
    (\eta_{\sum_{CNk}}(i))^{n-2}(1-\eta_{\sum_{CNk}}(i))^{N-n-1} \leq \frac{1}{\tbinom{\zeta}{2}}
\frac{D^{2k-1}}
     {\eta_{2k} - D^{2k-2}
        \dfrac{\sqrt{N\ln\delta^{-1}}}{4}}
\end{equation}

So we can transform the problem into finding a closed-form upper bound for $x$ based on the inequality $x^{n-2}(1-x)^{N-n-1} \leq C$. We can solve this problem using the following method.

Clearly, we have:
\begin{equation}  
x^{n-2}(1-x)^{N-n-1} \leq x^{n-2} e^{-(N-n-1)x}.  
\end{equation}  

Let $k = n-2, \quad m = N-n-1 (>0)$, and replace the original equation with the \textbf{amplified upper bound}:  

\begin{equation}  
x^k e^{-mx} = c.  
\end{equation}  

The left-hand side of it is always greater than or equal to the left-hand side of the original equation, so its solution $x_{\text{up}}$ must be less than or equal to the true solution of the original equation.

\begin{equation}  
x^k e^{-mx} = c \implies k \ln x - m x = \ln c \implies x = e^{\frac{\ln c}{k}} e^{\frac{m}{k} x}.  
\end{equation}  

\begin{equation}  
x e^{-\frac{m}{k} x} = c^{1/k} \implies \left(-\frac{m}{k}x\right) e^{-\frac{m}{k} x} = -\frac{m}{k} c^{1/k}.  
\end{equation}

Let $y = -\frac{m}{k}x$, then $y e^y = -\frac{m}{k}c^{1/k}$.

Using the definition of the Lambert W function $W(z)e^{W(z)} = z$, we obtain $x_{\text{up}} = -\frac{k}{m}W\left(-\frac{m}{k}c^{1/k}\right) \quad (k=n-2, m=N-n-1)$.

\end{proof}


\newpage
\section*{NeurIPS Paper Checklist}

\begin{enumerate}

\item {\bf Claims}
    \item[] Question: Do the main claims made in the abstract and introduction accurately reflect the paper's contributions and scope?
    \item[] Answer: \answerYes{} 
    \item[] Justification: We have included discussion for our contributions in the Abstract and Introduction.
    \item[] Guidelines:
    \begin{itemize}
        \item The answer NA means that the abstract and introduction do not include the claims made in the paper.
        \item The abstract and/or introduction should clearly state the claims made, including the contributions made in the paper and important assumptions and limitations. A No or NA answer to this question will not be perceived well by the reviewers. 
        \item The claims made should match theoretical and experimental results, and reflect how much the results can be expected to generalize to other settings. 
        \item It is fine to include aspirational goals as motivation as long as it is clear that these goals are not attained by the paper. 
    \end{itemize}

\item {\bf Limitations}
    \item[] Question: Does the paper discuss the limitations of the work performed by the authors?
    \item[] Answer: \answerYes{} 
    \item[] Justification: \Cref{subsection0401}, \Cref{OCNPsection}, and \Cref{section6} all include discussions of the work's limitations from different perspectives.
    \item[] Guidelines:
    \begin{itemize}
        \item The answer NA means that the paper has no limitation while the answer No means that the paper has limitations, but those are not discussed in the paper. 
        \item The authors are encouraged to create a separate "Limitations" section in their paper.
        \item The paper should point out any strong assumptions and how robust the results are to violations of these assumptions (e.g., independence assumptions, noiseless settings, model well-specification, asymptotic approximations only holding locally). The authors should reflect on how these assumptions might be violated in practice and what the implications would be.
        \item The authors should reflect on the scope of the claims made, e.g., if the approach was only tested on a few datasets or with a few runs. In general, empirical results often depend on implicit assumptions, which should be articulated.
        \item The authors should reflect on the factors that influence the performance of the approach. For example, a facial recognition algorithm may perform poorly when image resolution is low or images are taken in low lighting. Or a speech-to-text system might not be used reliably to provide closed captions for online lectures because it fails to handle technical jargon.
        \item The authors should discuss the computational efficiency of the proposed algorithms and how they scale with dataset size.
        \item If applicable, the authors should discuss possible limitations of their approach to address problems of privacy and fairness.
        \item While the authors might fear that complete honesty about limitations might be used by reviewers as grounds for rejection, a worse outcome might be that reviewers discover limitations that aren't acknowledged in the paper. The authors should use their best judgment and recognize that individual actions in favor of transparency play an important role in developing norms that preserve the integrity of the community. Reviewers will be specifically instructed to not penalize honesty concerning limitations.
    \end{itemize}

\item {\bf Theory assumptions and proofs}
    \item[] Question: For each theoretical result, does the paper provide the full set of assumptions and a complete (and correct) proof?
    \item[] Answer: \answerYes{} 
    \item[] Justification: Please see theoretical appendix for details.
    \item[] Guidelines:
    \begin{itemize}
        \item The answer NA means that the paper does not include theoretical results. 
        \item All the theorems, formulas, and proofs in the paper should be numbered and cross-referenced.
        \item All assumptions should be clearly stated or referenced in the statement of any theorems.
        \item The proofs can either appear in the main paper or the supplemental material, but if they appear in the supplemental material, the authors are encouraged to provide a short proof sketch to provide intuition. 
        \item Inversely, any informal proof provided in the core of the paper should be complemented by formal proofs provided in appendix or supplemental material.
        \item Theorems and Lemmas that the proof relies upon should be properly referenced. 
    \end{itemize}

    \item {\bf Experimental result reproducibility}
    \item[] Question: Does the paper fully disclose all the information needed to reproduce the main experimental results of the paper to the extent that it affects the main claims and/or conclusions of the paper (regardless of whether the code and data are provided or not)?
    \item[] Answer: \answerYes{} 
    \item[] Justification: Please refer to the detailed elaboration of the methodology in the main text, the algorithm information in the appendix and our submitted source code.
    \item[] Guidelines:
    \begin{itemize}
        \item The answer NA means that the paper does not include experiments.
        \item If the paper includes experiments, a No answer to this question will not be perceived well by the reviewers: Making the paper reproducible is important, regardless of whether the code and data are provided or not.
        \item If the contribution is a dataset and/or model, the authors should describe the steps taken to make their results reproducible or verifiable. 
        \item Depending on the contribution, reproducibility can be accomplished in various ways. For example, if the contribution is a novel architecture, describing the architecture fully might suffice, or if the contribution is a specific model and empirical evaluation, it may be necessary to either make it possible for others to replicate the model with the same dataset, or provide access to the model. In general. releasing code and data is often one good way to accomplish this, but reproducibility can also be provided via detailed instructions for how to replicate the results, access to a hosted model (e.g., in the case of a large language model), releasing of a model checkpoint, or other means that are appropriate to the research performed.
        \item While NeurIPS does not require releasing code, the conference does require all submissions to provide some reasonable avenue for reproducibility, which may depend on the nature of the contribution. For example
        \begin{enumerate}
            \item If the contribution is primarily a new algorithm, the paper should make it clear how to reproduce that algorithm.
            \item If the contribution is primarily a new model architecture, the paper should describe the architecture clearly and fully.
            \item If the contribution is a new model (e.g., a large language model), then there should either be a way to access this model for reproducing the results or a way to reproduce the model (e.g., with an open-source dataset or instructions for how to construct the dataset).
            \item We recognize that reproducibility may be tricky in some cases, in which case authors are welcome to describe the particular way they provide for reproducibility. In the case of closed-source models, it may be that access to the model is limited in some way (e.g., to registered users), but it should be possible for other researchers to have some path to reproducing or verifying the results.
        \end{enumerate}
    \end{itemize}

\item {\bf Open access to data and code}
    \item[] Question: Does the paper provide open access to the data and code, with sufficient instructions to faithfully reproduce the main experimental results, as described in supplemental material?
    \item[] Answer: \answerYes{} 
    \item[] Justification: We include the source code along with our submission.
    \item[] Guidelines:
    \begin{itemize}
        \item The answer NA means that paper does not include experiments requiring code.
        \item Please see the NeurIPS code and data submission guidelines (\url{https://nips.cc/public/guides/CodeSubmissionPolicy}) for more details.
        \item While we encourage the release of code and data, we understand that this might not be possible, so “No” is an acceptable answer. Papers cannot be rejected simply for not including code, unless this is central to the contribution (e.g., for a new open-source benchmark).
        \item The instructions should contain the exact command and environment needed to run to reproduce the results. See the NeurIPS code and data submission guidelines (\url{https://nips.cc/public/guides/CodeSubmissionPolicy}) for more details.
        \item The authors should provide instructions on data access and preparation, including how to access the raw data, preprocessed data, intermediate data, and generated data, etc.
        \item The authors should provide scripts to reproduce all experimental results for the new proposed method and baselines. If only a subset of experiments are reproducible, they should state which ones are omitted from the script and why.
        \item At submission time, to preserve anonymity, the authors should release anonymized versions (if applicable).
        \item Providing as much information as possible in supplemental material (appended to the paper) is recommended, but including URLs to data and code is permitted.
    \end{itemize}

\item {\bf Experimental setting/details}
    \item[] Question: Does the paper specify all the training and test details (e.g., data splits, hyperparameters, how they were chosen, type of optimizer, etc.) necessary to understand the results?
    \item[] Answer: \answerYes{} 
    \item[] Justification: We have shown detailed experimental settings/details in the Experiment and Appendix section.
    \item[] Guidelines:
    \begin{itemize}
        \item The answer NA means that the paper does not include experiments.
        \item The experimental setting should be presented in the core of the paper to a level of detail that is necessary to appreciate the results and make sense of them.
        \item The full details can be provided either with the code, in appendix, or as supplemental material.
    \end{itemize}

\item {\bf Experiment statistical significance}
    \item[] Question: Does the paper report error bars suitably and correctly defined or other appropriate information about the statistical significance of the experiments?
    \item[] Answer: \answerYes{} 
    \item[] Justification: We have provided standard deviation as part of our experiment results.
    \item[] Guidelines:
    \begin{itemize}
        \item The answer NA means that the paper does not include experiments.
        \item The authors should answer "Yes" if the results are accompanied by error bars, confidence intervals, or statistical significance tests, at least for the experiments that support the main claims of the paper.
        \item The factors of variability that the error bars are capturing should be clearly stated (for example, train/test split, initialization, random drawing of some parameter, or overall run with given experimental conditions).
        \item The method for calculating the error bars should be explained (closed form formula, call to a library function, bootstrap, etc.)
        \item The assumptions made should be given (e.g., Normally distributed errors).
        \item It should be clear whether the error bar is the standard deviation or the standard error of the mean.
        \item It is OK to report 1-sigma error bars, but one should state it. The authors should preferably report a 2-sigma error bar than state that they have a 96\% CI, if the hypothesis of Normality of errors is not verified.
        \item For asymmetric distributions, the authors should be careful not to show in tables or figures symmetric error bars that would yield results that are out of range (e.g. negative error rates).
        \item If error bars are reported in tables or plots, The authors should explain in the text how they were calculated and reference the corresponding figures or tables in the text.
    \end{itemize}

\item {\bf Experiments compute resources}
    \item[] Question: For each experiment, does the paper provide sufficient information on the computer resources (type of compute workers, memory, time of execution) needed to reproduce the experiments?
    \item[] Answer: \answerYes{} 
    \item[] Justification: Please see the appendix. 
    \item[] Guidelines:
    \begin{itemize}
        \item The answer NA means that the paper does not include experiments.
        \item The paper should indicate the type of compute workers CPU or GPU, internal cluster, or cloud provider, including relevant memory and storage.
        \item The paper should provide the amount of compute required for each of the individual experimental runs as well as estimate the total compute. 
        \item The paper should disclose whether the full research project required more compute than the experiments reported in the paper (e.g., preliminary or failed experiments that didn't make it into the paper). 
    \end{itemize}
    
\item {\bf Code of ethics}
    \item[] Question: Does the research conducted in the paper conform, in every respect, with the NeurIPS Code of Ethics \url{https://neurips.cc/public/EthicsGuidelines}?
    \item[] Answer: \answerYes{} 
    \item[] Justification: We confirm this perform conform with the NeurIPS Code of Ethics.
    \item[] Guidelines:
    \begin{itemize}
        \item The answer NA means that the authors have not reviewed the NeurIPS Code of Ethics.
        \item If the authors answer No, they should explain the special circumstances that require a deviation from the Code of Ethics.
        \item The authors should make sure to preserve anonymity (e.g., if there is a special consideration due to laws or regulations in their jurisdiction).
    \end{itemize}

\item {\bf Broader impacts}
    \item[] Question: Does the paper discuss both potential positive societal impacts and negative societal impacts of the work performed?
    \item[] Answer: \answerNA{} 
    \item[] Justification: This paper does not perform societal impact.
    \item[] Guidelines:
    \begin{itemize}
        \item The answer NA means that there is no societal impact of the work performed.
        \item If the authors answer NA or No, they should explain why their work has no societal impact or why the paper does not address societal impact.
        \item Examples of negative societal impacts include potential malicious or unintended uses (e.g., disinformation, generating fake profiles, surveillance), fairness considerations (e.g., deployment of technologies that could make decisions that unfairly impact specific groups), privacy considerations, and security considerations.
        \item The conference expects that many papers will be foundational research and not tied to particular applications, let alone deployments. However, if there is a direct path to any negative applications, the authors should point it out. For example, it is legitimate to point out that an improvement in the quality of generative models could be used to generate deepfakes for disinformation. On the other hand, it is not needed to point out that a generic algorithm for optimizing neural networks could enable people to train models that generate Deepfakes faster.
        \item The authors should consider possible harms that could arise when the technology is being used as intended and functioning correctly, harms that could arise when the technology is being used as intended but gives incorrect results, and harms following from (intentional or unintentional) misuse of the technology.
        \item If there are negative societal impacts, the authors could also discuss possible mitigation strategies (e.g., gated release of models, providing defenses in addition to attacks, mechanisms for monitoring misuse, mechanisms to monitor how a system learns from feedback over time, improving the efficiency and accessibility of ML).
    \end{itemize}
    
\item {\bf Safeguards}
    \item[] Question: Does the paper describe safeguards that have been put in place for responsible release of data or models that have a high risk for misuse (e.g., pretrained language models, image generators, or scraped datasets)?
    \item[] Answer: \answerNA{} 
    \item[] Justification: This paper does not include any data or models that have a high risk for misuse.
    \item[] Guidelines:
    \begin{itemize}
        \item The answer NA means that the paper poses no such risks.
        \item Released models that have a high risk for misuse or dual-use should be released with necessary safeguards to allow for controlled use of the model, for example by requiring that users adhere to usage guidelines or restrictions to access the model or implementing safety filters. 
        \item Datasets that have been scraped from the Internet could pose safety risks. The authors should describe how they avoided releasing unsafe images.
        \item We recognize that providing effective safeguards is challenging, and many papers do not require this, but we encourage authors to take this into account and make a best faith effort.
    \end{itemize}

\item {\bf Licenses for existing assets}
    \item[] Question: Are the creators or original owners of assets (e.g., code, data, models), used in the paper, properly credited and are the license and terms of use explicitly mentioned and properly respected?
    \item[] Answer: \answerYes{} 
    \item[] Justification: We have credited correctly for the existing data and models we referred.
    \item[] Guidelines:
    \begin{itemize}
        \item The answer NA means that the paper does not use existing assets.
        \item The authors should cite the original paper that produced the code package or dataset.
        \item The authors should state which version of the asset is used and, if possible, include a URL.
        \item The name of the license (e.g., CC-BY 4.0) should be included for each asset.
        \item For scraped data from a particular source (e.g., website), the copyright and terms of service of that source should be provided.
        \item If assets are released, the license, copyright information, and terms of use in the package should be provided. For popular datasets, \url{paperswithcode.com/datasets} has curated licenses for some datasets. Their licensing guide can help determine the license of a dataset.
        \item For existing datasets that are re-packaged, both the original license and the license of the derived asset (if it has changed) should be provided.
        \item If this information is not available online, the authors are encouraged to reach out to the asset's creators.
    \end{itemize}

\item {\bf New assets}
    \item[] Question: Are new assets introduced in the paper well documented and is the documentation provided alongside the assets?
    \item[] Answer: \answerNA{} 
    \item[] Justification: No new datasets are introduced.
    \item[] Guidelines:
    \begin{itemize}
        \item The answer NA means that the paper does not release new assets.
        \item Researchers should communicate the details of the dataset/code/model as part of their submissions via structured templates. This includes details about training, license, limitations, etc. 
        \item The paper should discuss whether and how consent was obtained from people whose asset is used.
        \item At submission time, remember to anonymize your assets (if applicable). You can either create an anonymized URL or include an anonymized zip file.
    \end{itemize}

\item {\bf Crowdsourcing and research with human subjects}
    \item[] Question: For crowdsourcing experiments and research with human subjects, does the paper include the full text of instructions given to participants and screenshots, if applicable, as well as details about compensation (if any)? 
    \item[] Answer: \answerNA{} 
    \item[] Justification:  No human subjects are involved.
    \item[] Guidelines:
    \begin{itemize}
        \item The answer NA means that the paper does not involve crowdsourcing nor research with human subjects.
        \item Including this information in the supplemental material is fine, but if the main contribution of the paper involves human subjects, then as much detail as possible should be included in the main paper. 
        \item According to the NeurIPS Code of Ethics, workers involved in data collection, curation, or other labor should be paid at least the minimum wage in the country of the data collector. 
    \end{itemize}

\item {\bf Institutional review board (IRB) approvals or equivalent for research with human subjects}
    \item[] Question: Does the paper describe potential risks incurred by study participants, whether such risks were disclosed to the subjects, and whether Institutional Review Board (IRB) approvals (or an equivalent approval/review based on the requirements of your country or institution) were obtained?
    \item[] Answer: \answerNA{} 
    \item[] Justification: No human subjects are involved.
    \item[] Guidelines:
    \begin{itemize}
        \item The answer NA means that the paper does not involve crowdsourcing nor research with human subjects.
        \item Depending on the country in which research is conducted, IRB approval (or equivalent) may be required for any human subjects research. If you obtained IRB approval, you should clearly state this in the paper. 
        \item We recognize that the procedures for this may vary significantly between institutions and locations, and we expect authors to adhere to the NeurIPS Code of Ethics and the guidelines for their institution. 
        \item For initial submissions, do not include any information that would break anonymity (if applicable), such as the institution conducting the review.
    \end{itemize}

\item {\bf Declaration of LLM usage}
    \item[] Question: Does the paper describe the usage of LLMs if it is an important, original, or non-standard component of the core methods in this research? Note that if the LLM is used only for writing, editing, or formatting purposes and does not impact the core methodology, scientific rigorousness, or originality of the research, declaration is not required.
    \item[] Answer: \answerNA{} 
    \item[] Justification: The LLM was used solely for grammar checking purposes.
    \item[] Guidelines:
    \begin{itemize}
        \item The answer NA means that the core method development in this research does not involve LLMs as any important, original, or non-standard components.
        \item Please refer to our LLM policy (\url{https://neurips.cc/Conferences/2025/LLM}) for what should or should not be described.
    \end{itemize}

\end{enumerate}

\end{document}